\documentclass{article}
\usepackage[final]{neurips_2021}
\usepackage[utf8]{inputenc} %
\usepackage[T1]{fontenc}    %
\usepackage{url}            %
\usepackage{booktabs}       %
\usepackage{nicefrac}       %
\usepackage{microtype}      %
\usepackage{algorithm,algorithmic}%
\usepackage{amsfonts,amsmath,amssymb,amsthm}
\usepackage[dvipsnames]{xcolor}
\usepackage{mathtools}
\usepackage{graphicx}
\usepackage{hyperref}
\usepackage{csquotes}
\usepackage{cleveref}
\hypersetup{
 colorlinks=True,
 linkcolor=blue,
 citecolor=blue,
 urlcolor=blue}
\usepackage{float}
\usepackage{lipsum} %
\usepackage{bm}
\usepackage{makecell}
\usepackage{enumitem}
\usepackage{caption}
\usepackage{adjustbox}
\usepackage{wrapfig, blindtext}

\usepackage{color-edits}
\usepackage{xspace}
\usepackage{subfigure}

\usepackage{mkolar_definitions,color,comment}

\bibpunct{[}{]}{;}{a}{,}{,}

\newcount\Comments  %
\Comments=1 %
\definecolor{darkgreen}{rgb}{0,0.5,0}
\definecolor{darkred}{rgb}{0.7,0,0}
\definecolor{teal}{rgb}{0.3,0.8,0.8}

\newcommand{\IL}{\mathrm{IL}}
\newcommand{\RL}{\mathrm{RL}}

\newcommand{\pie}{\pi_e}

\setcitestyle{numbers}

\newcommand{\alg}{\texttt{MILO}}
\newcommand{\hopper}{\texttt{Hopper-v2}}
\newcommand{\walker}{\texttt{Walker2d-v2}}
\newcommand{\halfcheetah}{\texttt{HalfCheetah-v2}}
\newcommand{\ant}{\texttt{Ant-v2}}
\newcommand{\humanoid}{\texttt{Humanoid-v2}}

\title{Mitigating Covariate Shift in Imitation Learning \\ via Offline  Data Without Great Coverage}
\author{%
  Jonathan D. Chang\thanks{Equal contribution} \\
  Department of Computer Science\\
  Cornell University\\
  \texttt{jdc396@cornell.edu} \\
  \And
  $\text{Masatoshi Uehara}^*$ \\
  Department of Computer Science\\
  Cornell University\\
  \texttt{mu223@cornell.edu} \\
  \AND
  Dhruv Sreenivas \\
  Department of Computer Science\\
  Cornell University\\
  \texttt{ds844@cornell.edu} \\
  \And
  Rahul Kidambi\thanks{Work done outside Amazon} \\
  Amazon Search \& AI \\
  \texttt{rk773@cornell.edu} \\
  \And
  Wen Sun \\
  Department of Computer Science\\
  Cornell University\\
  \texttt{ws455@cornell.edu} \\
}

\begin{document}
\maketitle
\begin{abstract}
This paper studies offline Imitation Learning (IL) where an agent learns to imitate an expert demonstrator without additional online environment interactions. Instead, the learner is presented with a static offline dataset of state-action-next state transition triples from a potentially less proficient behavior policy. We introduce Model-based IL from Offline data (\alg): an algorithmic framework that utilizes the static dataset to solve the offline IL problem efficiently both in theory and in practice. In theory, even if the behavior policy is highly sub-optimal compared to the expert, we show that as long as the data from the behavior policy provides sufficient coverage on the expert state-action traces (and with no necessity for a global coverage over the entire state-action space), \alg~can provably combat the covariate shift issue in IL. Complementing our theory results, we also demonstrate that a practical implementation of our approach mitigates covariate shift on benchmark MuJoCo continuous control tasks. We demonstrate that with behavior policies whose performances are less than half of that of the expert, \alg~still successfully imitates with an extremely low number of expert state-action pairs while traditional offline IL methods such as behavior cloning (BC) fail completely. Source code is provided at \url{https://github.com/jdchang1/milo}.

\end{abstract}

\section{Introduction}\label{sec:intro}

\emph{Covariate shift} is a core issue in Imitation Learning (IL). Traditional IL methods like behavior cloning (BC)~\citep{pom89}, while simple, suffer from covariate shift, learning a policy that can make arbitrary mistakes in parts of the state space not covered by the expert dataset. This leads to compounding errors in the agent's performance \citep{ross2010efficient}, hurting the generalization capabilities in practice.

\begin{figure}[ht]
    \centering
    \includegraphics[width=0.96\textwidth]{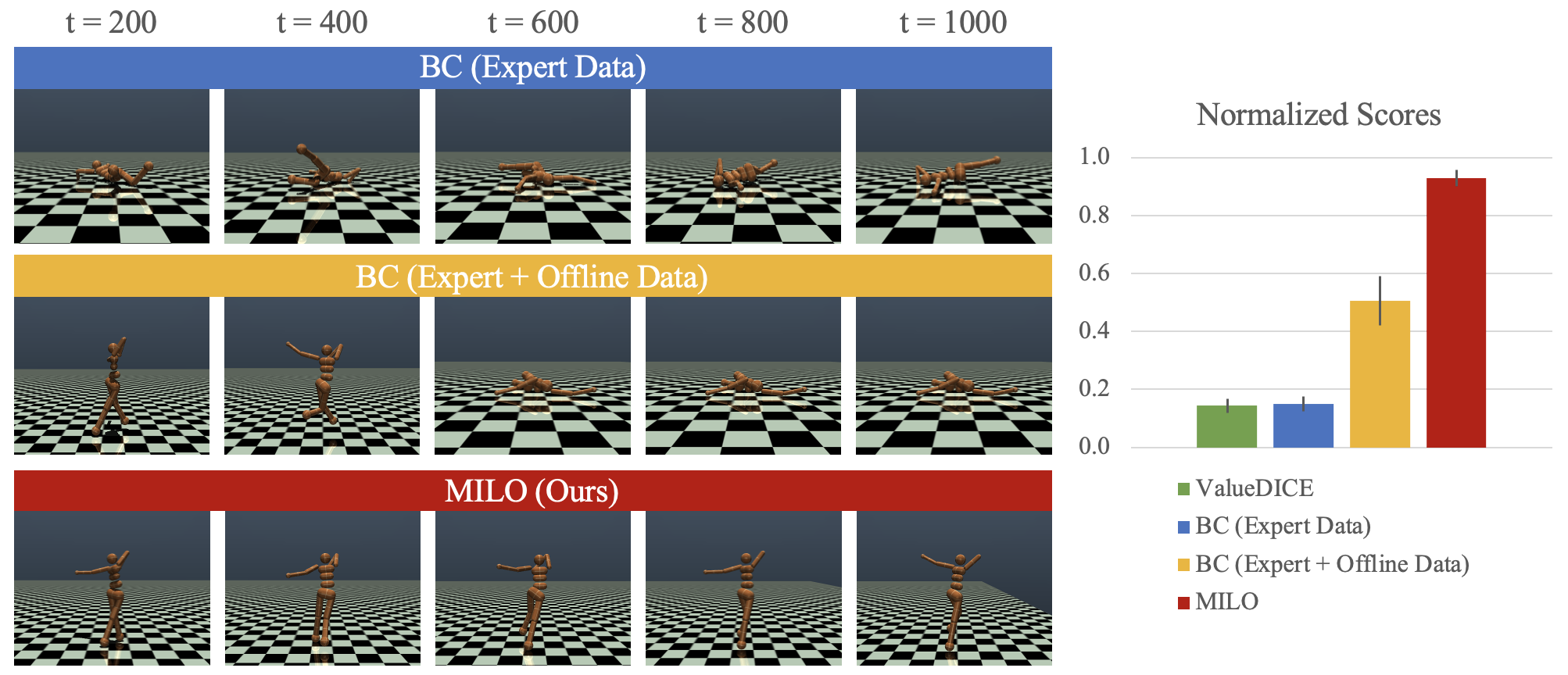}
\caption{(Left) Frames at timesteps 200, 400, 600, 800, and 1000 for \humanoid~from policies trained with BC on 100 state-action pairs from the expert (\textcolor{Blue}{blue}), BC on 1M offline samples plus 100 expert samples (\textcolor{Dandelion}{yellow}), and our algorithm \alg~(\textcolor{Mahogany}{red}). The expert has a performance of 3248 and the behavior policy used to collect the offline dataset has performance of $1505\pm 473$ ($\approx 46\%$ of the expert's). (Right) Expert performance normalized scores averaged across 5 seeds.}
    \label{fig:front_page}
    \vspace{-5mm}
\end{figure}

Prior works have presented several means to combat this phenomenon in IL.
One line of thought utilizes an {\em interactive expert}, i.e. an expert that can be queried at an arbitrary state encountered during the training procedure.
Interactive IL algorithms such as DAgger \citep{Ross2011_AISTATS}, LOLS \citep{chang2015learning_dependency}, DART \citep{lee2017dart}, and AggreVaTe(D) \citep{ross2014reinforcement,sun2017deeply} utilize a reduction to no-regret online learning and demonstrate that under certain conditions, they can successfully learn a policy that imitates the expert. These interactive IL algorithms, however, cannot provably avoid covariate shift if the expert is not \emph{recoverable}. That is, $A^{\pie}(s,a) = \Theta(H)$ where $\pie$ is the expert, $A^\pi$ is the usual (dis)advantage function,\footnote{in this work, we use cost instead of reward, thus we call $A^{\pi}$ the disadvantage function.} and $H$ is the planning horizon \citep[Chapter 15]{RajaramanNived2020TtFL,agarwal2019reinforcement}.
A second line of work that avoids covariate shift utilizes either a known transition dynamics model \citep{abbeel2004apprenticeship,ziebart2008maximum} or uses real world interactions \citep{ho2016generative,brantley2019disagreement,sun2019provably,KostrikovIlya2019ILvO,reddy2020sqil,kidambi2021optimism}. Prior works have shown that with known transition dynamics or real world interactions, agents can provably avoid covariate shift in both tabular and general MDPs \citep{agarwal2019reinforcement, RajaramanNived2020TtFL} even without a recoverable expert. While these results offer strong theoretical guarantees and empirical performance, online interactions are often costly and prohibitive for real world applications where active trial-and-error exploration in the environment could be unsafe or impossible. A third perspective towards addressing this issue is to assume that the expert visits \emph{the entire state space} \cite{SpencerJonathan2021FiIL}, 
where the expert effectively informs the learner what actions to take in every state. Unfortunately, such a full coverage expert distribution might be rare and holds only for special MDPs and expert policies (for e.g. an expert that induces ergodicity in the MDP).

In this work, we consider a new perspective towards handling the covariate shift issue in IL. In particular, we investigate a pure offline learning setting where the learner has access to neither the expert nor the environment for additional interactions. The learner, instead, has access to a small pre-collected dataset of state-action pairs sampled from the expert and a large batch offline dataset of state-action-next state transition triples sampled from a behavior policy that could be highly sub-optimal (see \pref{fig:front_page} where BC on the offline data results in a low-quality policy). Unlike prior works that require online interactions, our proposed method, \alg~performs high fidelity imitation in an offline, data-driven manner. 
Moreover, different from interactive IL, we do not require the expert to be present during learning, significantly relieving the expert's burden. Finally, in contrast to the prior work \citep{SpencerJonathan2021FiIL} that assumes the expert distribution covers the entire state-action space (i.e., $\max_{\pi}\max_{s,a} d^{\pi}(s,a) / d^{\pi^e}(s,a) < \infty$ where $d^{\pi}$ denotes the state-action distribution of policy $\pi$), we require the offline dataset to provide \emph{partial coverage}, i.e., it only needs to cover the expert's state-actions (i.e., $\max_{s,a} d^{\pi^e}(s,a) / \rho(s,a) < \infty$ where $\rho$ is the offline distribution of some behavior policy).\footnote{In our analysis, we refine the density ratio $d^{\pi^e}(s,a) / \rho(s,a)$ via the concept of relative conditional number which allows us to extend it to large MDPs where the ratio is infinite but the relative condition number is finite.}  


In summary, we list our main contributions below:
\begin{enumerate}[leftmargin=0.6cm]
    \item We propose Model based Imitation Learning from Offline data, \alg: a model-based framework that leverages offline batch data with only partial coverage (see \pref{sec:discrete} for definition) to overcome covariate shift in IL. 
    \item Our analysis is modular and covers common models such as discrete MDPs and linear models. Notably, our new result on non-parametric models (e.g. Gaussian Processes) with relative condition number is new even considering all existing results in offline RL (see Remark \ref{rem:implication_1},\ref{rem:offline_linear},\ref{rem:offline_nonlinear}). 
    \item The practical instantiation of our general framework leverages neural network model ensembles, and demonstrates its efficacy on benchmark MuJoCo continuous control problems. Specifically, even under low-quality behavior policies, our approach can successfully imitate using an extremely small number of expert 
    samples while algorithms like BC completely fail (\pref{fig:front_page}).
\end{enumerate}

\subsection{Related work}
\textbf{Imitation Learning} As summarized above, avoiding covariate shift in IL is an important topic. Another relevant line of research is IL algorithms that use \emph{offline} or \emph{off-policy} learning. ValueDICE \cite{KostrikovIlya2019ILvO} presents a principled way to leverage off-policy data for IL. In theory, the techniques from ValueDICE (and more broadly, DICE \cite{ChowYinlam2019DBEo,zhang2019gendice}) require the data provided to the agent to have global coverage. Moreover in practice, ValueDICE uses online interaction and maintains an increasing replay buffer which may eventually provide global coverage. Instead, we aim to study offline IL without any online interactions and are interested in the setting where offline data does not have global coverage. Another line of work \cite{jarrett2020sbil,chan2021sbirl} studies IL in an offline setting by only using the expert dataset. In contrast to these works, our goal is to study the use of an additional offline dataset collected from a behavior policy to mitigate covariate shift, as information theoretically any algorithm that relies solely on expert data will still suffer from covariate shift in the worst case \cite{RajaramanNived2020TtFL}. 

Similar in setting, Cascaded Supervised IRL (CSI) \cite{klein2013csi} performs imitation offline with both an expert dataset and a static offline dataset by first fitting a reward function that is then used in Least Squares Policy Iteration. CSI, however, requires the expert data to have global coverage and does not mitigate covariate shift with partial coverage like \alg~does. Finally, Variational Model-Based Adversarial IL (V-MAIL) \cite{rafailov2021vmail} learns a dynamics model from a static offline dataset and performs offline imitation within the model. V-MAIL, however, studies zero-shot IL where the static offline data is samples from a variety of source tasks, and the expert dataset is samples from the transfer target task. In contrast, \alg~investigates avoiding covariate shift in IL with an offline dataset collected by a potentially suboptimal policy from the same task.


\textbf{Offline RL} In offline RL, algorithms such as FQI
\citep{ernst2005tree} have finite-sample error guarantees under the global coverage \citep{munos2008finite,antos2008learning}. Recently, many algorithms to tackle this problem have been proposed from both model-free \citep{WuYifan2019BROR,TouatiAhmed2020SPOv,Liu2020,pmlr-v97-fujimoto19a,FakoorRasool2021CDCB,kumar2020conservative} and model-based perspectives \citep{Yu2020,Kidambi2020,MatsushimaTatsuya2020DRLv} with some pessimism ideas. The idea of pessimism features in offline RL with an eye to penalize the learner from visiting unknown regions of the state-action space~\citep{RashidinejadParia2021BORL,JinYing2020IPPE,YinMing2021NORL,BuckmanJacob2020TIoP}. We utilize pessimism within the IL context where, unlike RL, the learner does not have access to an underlying reward signal. 
Our work expands prior theoretical results  by (a) formalizing the partial coverage condition using a notion of a relative condition number, and (b) offering distribution-dependent results when working with non-parametric models including gaussian processes. 
See Appendix \ref{ape:model_error} for a detailed literature review. 



\section{Setting}
We consider an episodic finite-horizon Markov Decision Process (MDP), $\Mcal=\{\Scal,\Acal,P,H,c,d_0 \}$, where $\Scal$ is the state space, $\Acal$ is the action space, $P:\Scal \times \Acal \to \Delta(\Scal)$ is the MDP's transition, $H$ is the horizon, $d_0$ is an initial distribution, and $c: \Scal\times \Acal \to [0,1]$ is the cost function. A policy $\pi:\Scal \to \Delta(\Acal)$ maps from state to distribution over actions. 
We denote $d^{\pi}_P\in \Delta(\Scal\times \Acal)$ as the average state-action distribution of $\pi$ under transition kernel $P$, that is, $d^{\pi}_{P}=1/H\sum_{t=1}^H d^{\pi}_{P,t}$, where $ d^{\pi}_{P,t}\in \Delta(\Scal \times \Acal)$ is the distribution of $(s^{(t)},a^{(t)})$ under $\pi$ at $t$. Given a cost function $f:\Scal\times\Acal\mapsto [0,1]$, $V^{\pi}_{P,f}$ denotes the expected cumulative cost of $\pi$ under the transition kernel $P$ and cost function $f$. Following a standard IL setting, the ground truth cost function $c$ is unknown. Instead, we have the demonstrations by the expert specified by $\pie:\Scal \to \Delta(\Acal)$ (potentially stochastic and not necessarily optimal). Concretely, we have an expert dataset in the form of i.i.d tuples $\Dcal_e=\{s_i,a_i\}_{i=1}^{n_e}$ sampled from distribution $d^{\pie}_{P}$. 

In our setting, we also have an offline static dataset consisting of i.i.d tuples $\Dcal_o=\{s_i,a_i,s'_i\}_{i=1}^{n_o}$ s.t.  $(s,a) \sim \rho(s,a),s'\sim P(s,a)$, where \emph{$\rho \in \Delta(\Scal\times\Acal)$ is an offline distribution} resulting from some behavior policies. Note behavior policy could be a much worse policy than the expert $\pi^e$. 
Our goal is to only leverage $(\Dcal_e + \Dcal_o)$ to learn a policy $\pi$ that performs as well as $\pie$ with regard to optimizing the ground truth cost $c$. More specifically, our goal is to utilize the offline static data $\Dcal_o$ to combat covariate shift and learn a policy that can significantly outperform traditional offline IL methods such as Behavior cloning (BC), \emph{without any interaction with the real world or expert}. 

\textbf{Function classes} 
We introduce function approximation. Since we do not know the true cost function $c$ and transition kernel $P$, we introduce a cost function class $\Fcal\subset \Scal\times\Acal \to [0,1]$ and a transition model class $\Pcal:\Scal\times \Acal\to \Delta(\Scal)$. We also need a policy class $\Pi$. For the analysis, we assume realizability:
\begin{assum}\label{assum:realizability}
$c\in \Fcal,P\in \Pcal,\pie \in \Pi$. 
\end{assum}

We use Integral Probability Metric (IPM) as a distribution distance measure, i.e., given two distributions $\rho_1$ and $\rho_2$, IPM with  $\Fcal$ is defined as $\max_{f\in\Fcal} \left[ \mathbb{E}_{(s,a)\sim \rho_1}[f(s,a)] - \mathbb{E}_{(s,a)\sim \rho_2}[f(s,a)] \right]$. 

\section{Algorithm}
\begin{algorithm}[t]
\caption{Framework for model-based Imitation Learning with offline data (\alg)}
\begin{algorithmic}[1]
    \STATE \textbf{Require}: IPM class $\Fcal$, model class $\Pcal$, policy class $\Pi$, datasets $\Dcal_e = \{s, a\}$, $\Dcal_{o} := \{s,a,s'\}$
    \STATE \textcolor{blue}{Train Dynamics Model and Bonus:} $\widehat{P}:\Scal\times\Acal\rightarrow\Scal$ and $b:\Scal\times\Acal\rightarrow\RR^{+}$ on offline data $\Dcal_o$
        \STATE \textcolor{blue}{Pessimistic model-based min-max IL:} with $\widehat{P}$, $b$, $\Dcal_e$, obtain $\hat\pi_{\IL}$ by solving the following:  \label{line:offline-imitation}
        \begin{align}\ts 
        \label{eq:final_obj}
        \hat\pi_{\IL}=\argmin_{\pi\in \Pi} \max_{f\in\Fcal} \left[ \mathbb{E}_{(s,a)\sim d^{\pi}_{\hat{P}} } \left[f(s,a) + b(s,a) \right]  - \EE_{(s,a)\sim \Dcal_e}[f(s,a)]    \right]
        \end{align}
\end{algorithmic}
\label{alg:milo}
\vspace{-5pt}
\end{algorithm}
The core idea of \alg~is to imitate the expert by optimizing an IPM distance between the agent and the expert with a penalty term for pessimism over the policy class. \alg~ consists of three steps:
\begin{enumerate}[leftmargin=0.6cm]
    \item \textbf{Model learning:} fit a model $\hat P$ from the offline data $\Dcal_o$ to learn $P$,  
    \item \textbf{Pessimistic penalty design:} construct penalty function $b(s,a)$ such that there is a high penalty on state-action pairs that are not covered by the offline data distribution $\rho$.
    \item \textbf{Offline min-max model-based policy optimization:} optimize Eq.~\pref{eq:final_obj}
\end{enumerate} 
\pref{alg:milo} provides the details of \alg. We explain each component in detail as follows. 



\textbf{Model learning and Penalty:} Our framework assumes we can learn a calibrated model $(\hat{P}, \sigma)$ from the dataset $\Dcal_o$, in the sense that for any $s,a$, we have: $\ts \left\|  \hat{P}(\cdot | s,a) - P(\cdot | s,a)   \right\|_{1} \leq \min\{2, \sigma(s,a)\}$.
Such model training is possible in many settings including classic discrete MDPs, linear models (KNR \citep{Kakade2020}), and non-parametric models such as GP. In practice, it is also common to train a model ensemble based on the idea of bootstrapping and then use the model-disagreement to approximate $\sigma$. With such a calibrated model, the penalty will simply be $b(s,a) = O(H \sigma(s,a))$. We will formalize this model learning assumption in \pref{sec:analysis}. We give several examples below. 

\emph{For any discrete MDP}, we use the empirical distribution, i.e., $\hat P(s'|s,a)=N(s',s,a)/(N(s,a)+\lambda)$, where $N(s,a)$ is the number of $(s,a)$ in $\Dcal_o$, and $N(s',s,a)$ is the number of $(s,a,s')$ in $\Dcal_o$, and $\lambda\in\mathbb{R}^+$.  In this case, we can set $\sigma(s,a) = \widetilde{O}\left( \sqrt{|\Scal| / N(s,a)} \right)$. See \cref{exp:discrete_model} for more details. 

\emph{For continuous Kernelized Nolinear Regulator} (KNR \citep{Kakade2020}) model where the ground truth transition $P(s'|s,a)$ is defined as $s' = W^\star\phi(s,a) + \epsilon$, $\epsilon\sim \mathcal{N}(0,\Sigma)$, with $\phi$ being a (nonlinear) feature mapping, we can learn $\widehat{P}$ by classic Ridge regression on offline dataset $\Dcal_o$. 
Here we can set $\ts \sigma(s,a) = \widetilde{O}\left( \beta \sqrt{ \phi(s,a)^{\top} \Sigma^{-1}_{n_o} \phi(s,a) } \right)$ for some $\beta\in\mathbb{R}^+$, where $\Sigma_o$ is the data covariance matrix $\ts \Sigma_{n_o} := \sum_{i=1}^{n_o} \phi(s_i,a_i)\phi(s_i,a_i)^{\top} + \lambda \Ib$. See \cref{exp:knr_model} for more details. 

\emph{For non-parametric nonlinear model such as Gaussian Process (GP)}, under the assumption that $P$ is in the form of $s' = g^\star(s,a) + \epsilon$, $\epsilon\sim \mathcal{N}(0,\Sigma)$ (here $\Scal\subset \mathbb{R}^{d_{\Scal}}$), we can simply represent $\hat{P}$ using GP posteriors induced by $\Dcal_o$, i.e., letting GP posterior be $GP(\hat{g}, k_{n_o})$, we have $\hat{P}(s' | s,a)$ being represented as $s'= \hat{g}(s,a) + \epsilon$. Then, we can set $\sigma(s,a) = \widetilde{O}\left( \beta k_{n_o}\left((s,a),(s,a)\right)\right)$ with some parameter $\beta\in\mathbb{R}^+$ (see \cref{exp:gp_model} for more details). GP is a powerful model and has been being widely used in robotics problems, see \citep{ko2007gaussian,deisenroth2011pilco,bansal2017goal,umlauft2018uncertainty,fisac2018general} for examples. 

In practice, we can also use a model ensemble of neural networks with the maximum disagreement between models as $\sigma$. This has been widely used in practice (e.g., \cite{Osband2018,AzizzadenesheliKamyar2018EETB,Pathak2019}). We leave the details to \pref{sec:prac_alg} where we instantiate a practical version of \alg, and the experiment section.
As we can see from the examples mentioned above, in general, the penalty $b(s,a)=O( H\sigma(s,a))$ is designed such that it has a high value in state-action space that is not covered well by the offline data $\Dcal_o$, and has a low value in space that is covered by $\Dcal_o$. 
Adding such a penalty automatically forces our policy to stay away from these regions where $\hat{P}$ is not accurate. On the other hand, for regions where $\rho$ has good coverage (thus $\hat{P}$ is accurate), we force $\pi$ to stay close to $\pi^e$.
\textbf{Pessimistic model-based min-max IL:} 
Note Eq.~\ref{eq:final_obj} is purely computational, i.e., we do not need any real world samples. 
To solve such min-max objective, we can iteratively (1) perform the best response on the max player, i.e., compute the $\argmax$ discriminator $f$ given the current $\pi$, and (2) perform incremental update on the min player, e.g., use policy gradient (PG) methods (e.g. TRPO) inside the learned model $\hat{P}$ with cost function $f(s,a) + b(s,a)$.  We again leave the details to \pref{sec:prac_alg}. 

\subsection{Specialization to offline RL}
In RL, the cost function $c$ is given. The goal is to obtain $\pi^{*}=\argmax_{\pi \in \Pi}V^{\pi}_{P,c}$. The pessimistic policy optimization procedure~\citep{YuTianhe2021CCOM,JinYing2020IPPE} is $\ts
\hat \pi_{\RL}=\argmin_{\pi\in \Pi}\EE_{(s,a)\sim d^{\pi}_{\hat P}}[c(s,a)+b(s,a)]$. While this is not our main contribution, we will show a byproduct of our result is a novel non-parametric analysis for offline RL which does not assume  $\rho$ has global coverage (see Remarks \ref{rem:implication_1},\ref{rem:offline_linear},\ref{rem:offline_nonlinear}). 


\section{Analysis}
\label{sec:analysis}\label{sec:analysis}
Our algorithm depends on the model $\hat P$ estimated from the offline data. We provide a unified analysis assuming that $\hat P$ is calibrated in that its confidence interval is provided. Specifically, we assume: 
\begin{assum} \label{assum:calibration} With probability $1-\delta$, the estimate model $\hat P$ satisfies the following:
 $   \|\hat P(\cdot|s,a)-P(\cdot|s,a)\|_1\leq \min(\sigma(s,a),2)\quad \forall (s,a)\in \Scal\times \Acal. $
We set the penalty as $    b(s,a) = H\min(\sigma(s,a),2)$. 
\end{assum}
We give the following three examples. For details, refer to \cref{ape:calibrated_models}. 

\begin{example}[Discrete MDPs] \label{exp:discrete_model} Set uncertainty measure $\sigma(s,a)$= 
       $\sqrt{\frac{|\Scal|\log 2+\log (2|\Scal||\Acal|/\delta)}{2\{N(s,a)+\lambda\}} }+\frac{\lambda}{N(s,a)+\lambda}. $
\end{example}
\begin{example}[KNRs] \label{exp:knr_model}In KNRs,  the ground truth model is $s'=W^{*}\phi(s,a)+\epsilon,\,\epsilon\sim \Ncal(0,\zeta^2\Ib)$, where $s\in \RR^{d_\Scal}, a\in\RR^{d_{\Acal}}$, $\phi: \Scal\times\Acal\mapsto \mathbb{R}^{d}$ is some known state-action feature mapping. The estimator is 
    \begin{align*}\ts 
       \hat g(\cdot)=\hat W\phi(\cdot),\quad \hat W=\argmin_{W\in \mathbb{R}^{d_{\Scal}\times d_{\Acal}}}1/n_o \sum_{(s,a)\in \Dcal_o}[\|W\phi(s,a)-s'\|^2_2]+\lambda \|W\|^2_F, 
    \end{align*}
    where $\|\cdot\|_F$ is a frobenius norm. We set the uncertainty measure $\sigma(s,a)$: 
\begin{align*}\ts 
  \sigma(s,a)=(1/\zeta)\beta_{n_o} \sqrt{ \phi^{\top}(s,a)\Sigma^{-1}_{n_o}\phi(s,a)},\quad  \Sigma_{n_o}=\sum_{i=1}^{n_o}\phi(s_i,a_i)\phi^{\top}(s_i,a_i)+\lambda \Ib 
\end{align*}
with $\beta_{n_o}=\{2\lambda \|W^{*}\|^2_2+8\zeta^2[d_{\Scal}\log(5)+\log(1/\delta)+\bar \Ical_{n_o})]\}^{1/2}$, where $\bar \Ical_{n_o}=\log(\det(\Sigma_{n_o}/\lambda \Ib))$. 
\end{example}


\begin{example}[GPs] \label{exp:gp_model}
In GPs, the ground truth model is defined as $s'=g^{*}(s,a)+\epsilon,\,\epsilon\sim \Ncal(0,\zeta^2\Ib)$ where $g^\star$ belongs to an RKHS $\Hcal_k$ with a kernel $k(\cdot,\cdot)$. Denote $x:= (s,a)$, 
we have GP posterior as
    \begin{align*} 
    &\resizebox{\textwidth}{!}{$\hat g(\cdot)  \ts  =S(\Kb_{n_o}+\zeta^2 \Ib)^{-1}\bar k_{n_o}(\cdot) , \quad S=[s'_1,\cdots,s'_{n_o}]\in  \mathbb{R}^{d_{\Scal}\times n_o}, \quad \bar k_{n_o}(x) \ts =[k(x_1,x),\,\cdots,k(x_{n_o},x)]^{\top},$}\\
    &\resizebox{\textwidth}{!}{$\{\Kb_{n_o}\}_{i,j}=k(x_i,x_j)\,(1\leq i\leq n_o,1\leq j\leq n_o),  \quad k_{n_o}(x,x')  =k(x,x')-\bar k_{n_o}(x)^{\top}(\Kb_{n_o}+\zeta^2 \Ib)^{-1}\bar k_{n_o}(x'),$}  
\end{align*}
with $\sigma(\cdot)= \beta_{n_o} k_{n_o}(\cdot,\cdot) / \zeta$, 
$\beta_{n_o}=O((d_{\Scal}\log^3(d_{\Scal}n_o/\delta)\Ical_{n_o})^{1/2})$, $\Ical_{n_o}=\log(\det(\Ib+\zeta^{-2}\Kb_{n_o}))$. 
\end{example}

\paragraph{General results} We show our general error bound results. For the proof, refer to \cref{ape:main}. For analytical simplicity, we assume $|\Fcal|$ is finite (but the bound only depends on $\ln(|\Fcal|)$) \footnote{
When  $|\Fcal|$ is infinite, we can show that the resulting error bound scales w.r.t its metric entropy. 

}. 

\begin{theorem}[Bound of \alg]\label{thm:main}
Suppose assumptions \ref{assum:realizability},\ref{assum:calibration}. Then, with probability $1-2\delta$, 
\begin{align*}\ts 
  V^{\hat \pi_{\IL}}_{P,c}-   V^{\pie}_{P,c}   \leq  \mathrm{Err}_{o}+\mathrm{Err}_{e},\,\mathrm{Err}_{o}=8H^2\EE_{(s,a)\sim d^{\pi_e}_P}[\min(\sigma(s,a),1) ],\,\mathrm{Err}_{e}=2H\sqrt{\frac{\log(2|\Fcal|/\delta)}{2n_e}}. 
 \end{align*}
\end{theorem}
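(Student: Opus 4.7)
The plan is to decompose the performance gap via the IPM distance between $d^{\hat\pi_{\IL}}_{P}$ and $d^{\pie}_{P}$, then route through the calibrated model $\hat P$ and the empirical expert distribution $\Dcal_e$ so that the min--max optimality of $\hat\pi_{\IL}$ in \pref{eq:final_obj} can be invoked.

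First I would write $V^{\hat\pi_{\IL}}_{P,c} - V^{\pie}_{P,c} = H\bigl(\mathbb{E}_{d^{\hat\pi_{\IL}}_{P}}[c] - \mathbb{E}_{d^{\pie}_{P}}[c]\bigr)$ and, using $c \in \Fcal$ from \pref{assum:realizability}, upper bound the bracketed term by the IPM $\max_{f\in\Fcal}\bigl[\mathbb{E}_{d^{\hat\pi_{\IL}}_{P}}[f] - \mathbb{E}_{d^{\pie}_{P}}[f]\bigr]$. Then I would insert three intermediate quantities---$\mathbb{E}_{d^{\hat\pi_{\IL}}_{\hat P}}[f]$, $\mathbb{E}_{\Dcal_e}[f]$, and $\mathbb{E}_{d^{\pie}_{\hat P}}[f]$---to split the IPM into four pieces: (i) a model-simulation error on the $\hat\pi_{\IL}$ side, (ii) the exact objective minimized in \pref{eq:final_obj} evaluated at $\hat\pi_{\IL}$, (iii) empirical concentration of $\Dcal_e$ around $d^{\pie}_{P}$, and (iv) a symmetric model-simulation error on the $\pie$ side.

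For (i) and (iv) I would use the standard simulation lemma, which under \pref{assum:calibration} yields two symmetric inequalities $|\mathbb{E}_{d^{\pi}_{P}}[f] - \mathbb{E}_{d^{\pi}_{\hat P}}[f]| \le \mathbb{E}_{d^{\pi}_{\hat P}}[b]$ and $|\mathbb{E}_{d^{\pi}_{P}}[f] - \mathbb{E}_{d^{\pi}_{\hat P}}[f]| \le \mathbb{E}_{d^{\pi}_{P}}[b]$ depending on which occupancy the telescoping is expanded against. I would apply the first form on (i) so that the bonus integrates against $d^{\hat\pi_{\IL}}_{\hat P}$; combined with (ii) this exhibits exactly the min--max objective at $\hat\pi_{\IL}$, which by optimality is no larger than the same objective at $\pie$. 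For (iv) I would apply the second form so that the residual bonus expectation already lives under $d^{\pie}_{P}$, matching the form of $\mathrm{Err}_o$. The empirical piece (iii) is handled by Hoeffding together with a union bound over the finite class $\Fcal$, giving $\sqrt{\log(2|\Fcal|/\delta)/(2n_e)}$; multiplied by the outer $H$ this is exactly $\mathrm{Err}_e$.

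The main obstacle I anticipate is the residual $\mathbb{E}_{d^{\pie}_{\hat P}}[b]$ left over after invoking the optimality of $\hat\pi_{\IL}$: the bonus appears integrated against the expert occupancy under the \emph{learned} model, but $\mathrm{Err}_o$ is stated purely in terms of $\mathbb{E}_{d^{\pie}_{P}}[\min(\sigma,1)]$. To close this gap I would apply the simulation lemma once more, now with the bounded function $b$ in place of the reward and using the $d^{\pie}_{P}$-form of the telescoping, which controls $|\mathbb{E}_{d^{\pie}_{\hat P}}[b] - \mathbb{E}_{d^{\pie}_{P}}[b]|$ using only $\mathbb{E}_{d^{\pie}_{P}}[b]$ together with the uniform bound $\|b\|_\infty \le 2H$. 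Tracking this second application carefully, combining the contributions of (i)--(iv), and applying $\min(\sigma,2) \le 2\min(\sigma,1)$ at the end to pass from $b = H\min(\sigma,2)$ to $H\min(\sigma,1)$, is what should produce the final $8H^2$ prefactor.
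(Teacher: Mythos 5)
Your skeleton is essentially the paper's: bound the gap by $H$ times the IPM using $c\in\Fcal$, move to the learned model with the bonus absorbing the model error (the paper packages this as the pessimism lemma, \pref{lem:pessmistic}, proved by induction; your ``crossed'' simulation-lemma form $\EE_{d^{\pi}_{P}}[f]-\EE_{d^{\pi}_{\hat P}}[f]\le \EE_{d^{\pi}_{\hat P}}[b]$ is a valid substitute for $f\in[0,1]$, since the telescoped value functions there are bounded by $H$), pass to $\Dcal_e$ via Hoeffding with a union bound (\pref{lem:Hoeffding}, used in both directions, which is where the factor $2$ in $\mathrm{Err}_e$ comes from), invoke the optimality of $\hat\pi_{\IL}$ in \pref{eq:final_obj} against $\pie\in\Pi$, and finally control the expert-side term $\max_{f}\{\EE_{d^{\pie}_{\hat P}}[f+b]-\EE_{d^{\pie}_{P}}[f]\}$, which the paper does in one shot as \pref{lem:pessmistic2} and you do by splitting off the bonus.

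The genuine gap is your last step. The residual $\EE_{d^{\pie}_{\hat P}}[b]$ is handled by a second simulation pass with cost $b$, but $\|b\|_\infty\le 2H$, so the telescoped value functions are of size $2H^2$ and this pass costs $2H^2\,\EE_{d^{\pie}_{P}}[\min(\sigma,2)]$ in the per-step normalization; adding piece (iv) and multiplying by the outer $H$ gives a prefactor of order $2H^2+2H^3$ on $\EE_{d^{\pie}_{P}}[\min(\sigma,2)]$, i.e.\ $O(H^3)$, and neither careful bookkeeping nor $\min(\sigma,2)\le 2\min(\sigma,1)$ brings this down to $8H^2$. Worse, the intermediate quantity you route through is genuinely of order $H^3$: take two states, with $\sigma\equiv 2/H$ at the expert-visited state, an absorbing state with $\sigma\equiv 2$ that $\hat P$ reaches with probability $1/H$ per step, and $f\equiv 0$; then $\EE_{d^{\pie}_{\hat P}}[b]=\Theta(H)$ while $\EE_{d^{\pie}_{P}}[\min(\sigma,1)]=2/H$, so after the outer $H$ this single term is $\Theta(H^3)\,\EE_{d^{\pie}_{P}}[\min(\sigma,1)]$, exceeding $8H^2\,\EE_{d^{\pie}_{P}}[\min(\sigma,1)]$ for large $H$. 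So the claim that ``tracking this second application carefully \ldots should produce the final $8H^2$ prefactor'' is not a constant-chasing detail but the step that fails along this route. For what it is worth, you have faithfully reproduced the paper's own treatment here: \pref{lem:pessmistic2} bounds $\|V^{\pi}_{\hat P,f+b;h}\|_\infty$ by $H(2H+1)$, and its displayed chain therefore actually yields $(2H^3+2H^2)\EE_{d^{\pi}_{P}}[\min(\sigma,2)]$ rather than the stated $(3H^2+H)$ factor, so the $8H^2$ in \pref{thm:main} is obtained in the paper only through that unsupported constant; the place you flagged as the anticipated obstacle is exactly where the paper's argument does not deliver the claimed $H^2$ scaling either.
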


We will show through a set of examples where $\EE_{(s,a)\sim d^{\pi_e}_P}[\min(\sigma(s,a),1) ]$ shrinks to zero as $n_o\to\infty$ under the partial coverage, i.e., when $\rho$ covers $d^{\pi_e}_P$ . Asymptotically, $\mathrm{Err}_e$ will dominate the bound. 
Note that $\mathrm{Err}_e$ has two components, a linear $H$ and a term that corresponds to the statistical error related to expert samples and function class complexity. 
Comparing to BC, which has a rate $O(H^2\sqrt{\log (|\Pi|)/n_e})$ \citep[Chapter 14]{agarwal2019reinforcement}, we see that the horizon dependence is improved. 


Before going to each analysis of $\mathrm{Err}_{o}$, we highlight two important points in our analysis. First, our bound requires only the partial coverage, i,e., it depends on $\pie$-concentrability coefficient which measures the discrepancy between the offline data and expert data. 
This is the first work deriving the bound with $\pie$-concentrability coefficient in IL with offline data. Second, our analysis covers non-parametric models. This is a significant contribution as previous pessimistic offline RL finite-sample error results have been limited to the finite-dimensional linear models or discrete MDPs \citep{JinYing2020IPPE,RashidinejadParia2021BORL}.

\begin{remark}[Implications on offline RL]\label{rem:implication_1}
As in \cref{thm:main}, we have $V^{\hat \pi_{\RL}}_{P,c}-   V^{\pi^{*}}_{P,c}=O(H^2\EE_{(s,a)\sim d^{\pi^{*}}_P}[\sigma(s,a)])$ (\cref{ape:main}). Note similar results have been obtained in \citep{Yu2020,Kidambi2020}. Since this term is $\Error_o$ by just replacing $\pie$ with $\pi^{*}$, this offline RL result is a by-product of our analysis.
\end{remark}

\subsection{Analysis: Discrete MDPs} \label{sec:discrete}
We start from discrete MDP as a warm up. Denote  $\ts C^{\pie}=\max_{(s,a)} d^{\pie}_P(s,a)/\rho(s,a)$. 

\begin{theorem}\label{cor:tabular_bound}
Suppose $\lambda=\Omega(1)$ and the partial coverage $C^{\pie}<\infty$. With probability $1-\delta$,
\begin{align*} \ts
   \Error_o\leq c_1 H^2\prns{ \sqrt{\frac{C^{\pie} |\Scal|^2|\Acal|}{n_o}}+\frac{C^{\pie}|\Scal||\Acal|}{n_o} } \cdot \log(|\Scal||\Acal |c_2/\delta) ,
\end{align*}
where $c_1, c_2$ are universal constants. 
\end{theorem}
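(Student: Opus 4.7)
The plan is to bound $\mathrm{Err}_o = 8H^2 \mathbb{E}_{(s,a)\sim d^{\pi_e}_P}[\min(\sigma(s,a),1)]$ by expanding the definition of $\sigma$ from Example~\ref{exp:discrete_model} and then doing a careful change of measure from $d^{\pi_e}_P$ to $\rho$ using the coverage coefficient $C^{\pi_e}$. Write $\sigma(s,a) = A(s,a) + B(s,a)$ where
\begin{align*}
A(s,a) = \sqrt{\tfrac{L}{2(N(s,a)+\lambda)}},\qquad B(s,a) = \tfrac{\lambda}{N(s,a)+\lambda},
\end{align*}
with $L := |\Scal|\log 2 + \log(2|\Scal||\Acal|/\delta)$. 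Drop the $\min$ (upper bound) and handle the two pieces separately.

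For the linear part $B$, a direct application of $d^{\pi_e}_P(s,a) \le C^{\pi_e}\rho(s,a)$ gives $\mathbb{E}_{d^{\pi_e}_P}[B] \le C^{\pi_e}\sum_{s,a}\rho(s,a)\,\mathbb{E}[\lambda/(N(s,a)+\lambda)]$. Since $N(s,a)\sim \mathrm{Binomial}(n_o,\rho(s,a))$, I will use the standard bound $\mathbb{E}[1/(N+\lambda)]\le 1/((n_o+1)\rho(s,a))$ for $\lambda\ge 1$ (which holds by assumption $\lambda=\Omega(1)$), so the summand is $O(\lambda/n_o)$ and summing over $|\Scal||\Acal|$ pairs yields $\mathbb{E}_{d^{\pi_e}_P}[B]=O(C^{\pi_e}|\Scal||\Acal|/n_o)$, matching the second term of the theorem.

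For the square-root part $A$, a single-sided change of measure would cost a full $C^{\pi_e}$, which is too loose; instead I use the Cauchy--Schwarz trick $d^{\pi_e}_P(s,a) = \sqrt{d^{\pi_e}_P(s,a)}\cdot \sqrt{d^{\pi_e}_P(s,a)} \le \sqrt{C^{\pi_e}\rho(s,a)}\cdot\sqrt{d^{\pi_e}_P(s,a)}$, obtaining
\begin{align*}
\mathbb{E}_{d^{\pi_e}_P}[A(s,a)] \le \sqrt{C^{\pi_e}}\,\sqrt{\mathbb{E}_{\rho}[A(s,a)^2]} = \sqrt{C^{\pi_e}}\sqrt{\tfrac{L}{2}\sum_{s,a}\rho(s,a)\,\mathbb{E}\!\left[\tfrac{1}{N(s,a)+\lambda}\right]}.
\end{align*}
Reusing the binomial inverse-moment bound, $\sum_{s,a}\rho(s,a)\mathbb{E}[1/(N+\lambda)] \le |\Scal||\Acal|/(n_o+1)$, so $\mathbb{E}_{d^{\pi_e}_P}[A] = O(\sqrt{C^{\pi_e} L |\Scal||\Acal|/n_o}) = O(\sqrt{C^{\pi_e}|\Scal|^2|\Acal|/n_o}\cdot \log^{1/2}(|\Scal||\Acal|/\delta))$, which recovers the first term. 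Combining both bounds and multiplying by $8H^2$ gives the claimed inequality, with the $\log$ factors consolidated into $\log(|\Scal||\Acal|c_2/\delta)$.

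The main obstacle is getting the correct $\sqrt{C^{\pi_e}}$ scaling (rather than $C^{\pi_e}$) on the leading term; this is what forces the Cauchy--Schwarz style change of measure on the square-root piece, and it is also the reason the bound has two separate terms instead of a single rate. A secondary technical point is controlling the inverse moments of a binomial, which can fail when $N(s,a)=0$ in an individual realization; the additive regularizer $\lambda=\Omega(1)$ is precisely what makes $1/(N(s,a)+\lambda)$ uniformly bounded and makes the identity $\mathbb{E}[1/(N+1)]\le 1/((n_o+1)\rho(s,a))$ applicable without any high-probability caveat on $N(s,a)$.
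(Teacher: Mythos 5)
Your change-of-measure mechanics are fine and essentially mirror the paper: you get the crucial $\sqrt{C^{\pie}}$ scaling on the leading term by applying Cauchy--Schwarz to $\sum_{s,a} d^{\pie}_P(s,a) A(s,a)$, which is equivalent to the paper's route of first applying Jensen's inequality ($\EE_{d^{\pie}_P}[A]\leq \sqrt{\EE_{d^{\pie}_P}[A^2]}$) and then using $d^{\pie}_P(s,a)\leq C^{\pie}\rho(s,a)$ inside the square root. The handling of the $\lambda/(N(s,a)+\lambda)$ piece by a plain change of measure is also the same as the paper's.

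The genuine gap is in how you treat the randomness of $N(s,a)$. The quantity $\Error_o = 8H^2\,\EE_{(s,a)\sim d^{\pie}_P}[\min(\sigma(s,a),1)]$ is itself random, because $\sigma$ depends on the counts $N(s,a)$ drawn in $\Dcal_o$, and the theorem asserts a bound that holds \emph{with probability} $1-\delta$ over that draw. Your argument replaces $1/(N(s,a)+\lambda)$ by its binomial inverse moment $\EE[1/(N(s,a)+\lambda)]\leq 1/((n_o+1)\rho(s,a))$, so what you actually bound is $\EE_{\Dcal_o}[\Error_o]$, not the realized $\Error_o$ on a high-probability event; your closing remark that the argument works ``without any high-probability caveat on $N(s,a)$'' makes this explicit. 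Converting your in-expectation bound to the stated guarantee via Markov would cost a multiplicative $1/\delta$ rather than the claimed $\log(|\Scal||\Acal|c_2/\delta)$, and you give no concentration argument for $\Error_o$ around its mean. The paper instead proves a uniform high-probability statement on the counts themselves (its \cref{lem:tabular_lemma}): by Bernstein's inequality plus a union bound over all $(s,a)$, with probability $1-\delta$ one has $1/(N(s,a)+\lambda)\leq c_1\log(|\Scal||\Acal|c_2/\delta)/(n_o\rho(s,a)+\lambda)$ simultaneously for every $(s,a)$, and then conditions on this event before doing the change of measure; this is precisely where the $\log(|\Scal||\Acal|c_2/\delta)$ factor in the theorem originates. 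To repair your proof, replace the inverse-moment step with such a uniform deviation bound on $N(s,a)$ (or otherwise establish concentration of $\Error_o$); the rest of your calculation then goes through unchanged.
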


The error does not depend on $\sup_{\pi\in \Pi}C^{\pi}$ or $\bar C=\sup_{\pi \in\Pi}\max_{(s,a)}d^{\pi}_P(s,a)/d^{\pie}_P(s,a)$. 
We only require the \emph{partial coverage} $C^{\pie} < \infty$, which is much weaker than $\ts \sup_{\pi\in \Pi}C^{\pi} < \infty$ ($\rho$ has global coverage) and  $\bar C< \infty$ ($d^{\pie}_P$ has global coverage \citep{SpencerJonathan2021FiIL}). When $C^{\pie}$ is small and $n_o$ is large enough, $\ts \mathrm{Err}_e = O\left( H \sqrt{|\Scal||\Acal| / n_e} \right)$ dominates  $\mathrm{Err}_o$ in \cref{thm:main}. Then, the error is linear in horizon $H$.
\subsection{Analysis: KNRs and GPs for Continuous MDPs}
Now we move to continuous state-action MDPs. In continuous MDPs, assuming the boundedness of density ratio $C^{\pie}$ is still a strong assumption. As we dive into the KNR and the nonparametric GP model, we will replace the density ratio with a more refined concept \emph{relative condition number}. 

\paragraph{KNRs} 

Let $\Sigma_{\rho}=\EE_{(s,a)\sim \rho}[\phi(s,a)\phi(s,a)^{\top}]$ and $\Sigma_{\pie}=\EE_{(s,a)\sim d^{\pie}_P}[\phi(s,a)\phi(s,a)^{\top}]$. We define the \emph{relative condition number} as $\ts C^{\pie} = \sup_{x \in \mathbb{R}^d}\left(\frac{x^{\top}\Sigma_{\pie} x}{x^{\top}\Sigma_{\rho}x}\right)$.  
Even when density ratio is infinite, this number could still be finite as it concerns subspaces on $\phi(s,a)$ rather than the whole $\Scal\times \Acal$. 

To further gain its intuition, we can consider discrete MDPs and the feature mapping $\phi(s,a)\in \mathbb{R}^{|\Scal | |\Acal|}$ which is a one-hot encoding vector that has zero everywhere except one at the entry corresponding to the pair $(s,a)$. In this case, the relative condition number is reduced to $\max_{(s,a)} d^{\pie}_P(s,a) / \rho(s,a)$.

\begin{theorem}[Error for KNRs]  \label{thm:linear_error}Suppose $\sup_{s,a}\|\phi(s,a)\|\leq 1$, $\lambda=\Omega(1),\zeta^2=\Omega(1),\|W^{*}\|_2=\Omega(1)$ and the partial coverage $C^{\pie}<\infty$. 
With probability $1-\delta$, 
\begin{align} \ts 
     \mathrm{Err}_o\leq  c_1H^2\left( \mathrm{rank}^2(\Sigma_{\rho})+ \mathrm{rank}(\Sigma_{\rho}) \log(\frac{c_2}{\delta})\right)  \sqrt{\frac{d_{\Scal}C^{\pie} }{n_o}} \cdot {\log^{1/2} (1+n_o) },
\end{align}
where $c_1$ and $c_2$ are some universal constants. 
\end{theorem}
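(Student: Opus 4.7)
The starting point is Theorem~\ref{thm:main}, which gives $\mathrm{Err}_o\le 8H^2\,\mathbb{E}_{(s,a)\sim d^{\pie}_P}[\min(\sigma(s,a),1)]$ for any calibrated model. My plan is to substitute the KNR form $\sigma(s,a)=(\beta_{n_o}/\zeta)\sqrt{\phi(s,a)^{\top}\Sigma_{n_o}^{-1}\phi(s,a)}$ from Example~\ref{exp:knr_model}, pull the square root outside the expectation via Jensen to convert everything to a trace quantity, then use the relative condition number to transfer that trace from the expert distribution to the offline distribution, and finally apply matrix concentration plus an elliptic-potential bound on $\bar{\mathcal{I}}_{n_o}$.

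\textbf{Steps.} First, by Cauchy--Schwarz/Jensen,
$$\mathbb{E}_{d^{\pie}_P}\!\bigl[\sqrt{\phi^{\top}\Sigma_{n_o}^{-1}\phi}\bigr]\le \sqrt{\mathbb{E}_{d^{\pie}_P}[\phi^{\top}\Sigma_{n_o}^{-1}\phi]}=\sqrt{\mathrm{tr}(\Sigma_{n_o}^{-1}\Sigma_{\pie})}.$$
Second, the finiteness of the relative condition number is equivalent to $\Sigma_{\pie}\preceq C^{\pie}\Sigma_{\rho}$ on $\mathrm{range}(\Sigma_{\rho})$ (and in particular forces $\phi(s,a)$ to lie in $\mathrm{range}(\Sigma_{\rho})$ almost surely under $d^{\pie}_P$), so $\mathrm{tr}(\Sigma_{n_o}^{-1}\Sigma_{\pie})\le C^{\pie}\mathrm{tr}(\Sigma_{n_o}^{-1}\Sigma_{\rho})$. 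Third, I would use matrix Bernstein/Chernoff on the i.i.d.\ rank-one summands $\phi_i\phi_i^{\top}$ (which satisfy $\|\phi_i\phi_i^{\top}\|_2\le 1$) restricted to $\mathrm{range}(\Sigma_{\rho})$, together with a whitening $X_i:=\Sigma_{\rho}^{-1/2}\phi_i\phi_i^{\top}\Sigma_{\rho}^{-1/2}$ so that the resulting bound depends on $\mathrm{rank}(\Sigma_{\rho})$ rather than the ambient dimension; this yields $\Sigma_{n_o}\succeq \tfrac{n_o}{2}\Sigma_{\rho}$ on this range with the claimed $\mathrm{rank}(\Sigma_{\rho})$ and $\log(1/\delta)$ sample cost, whence $\mathrm{tr}(\Sigma_{n_o}^{-1}\Sigma_{\rho})\le 2\mathrm{rank}(\Sigma_{\rho})/n_o$ by diagonalizing in the eigenbasis of $\Sigma_{\rho}$.

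\textbf{Controlling $\beta_{n_o}$.} The definition of $\beta_{n_o}$ involves the information gain $\bar{\mathcal{I}}_{n_o}=\log\det(\Sigma_{n_o}/\lambda\mathbf{I})$. Since $\Sigma_{n_o}$ differs from $\lambda\mathbf{I}$ by a sum of rank-one matrices whose range lies in $\mathrm{range}(\Sigma_{\rho})$, the standard elliptic-potential argument gives $\bar{\mathcal{I}}_{n_o}=O(\mathrm{rank}(\Sigma_{\rho})\log(1+n_o/\lambda))$. Combined with $\lambda=\Omega(1)$, $\zeta^2=\Omega(1)$, $\|W^{*}\|_2=\Omega(1)$, this yields $\beta_{n_o}/\zeta=O\bigl(\sqrt{d_{\Scal}+\log(1/\delta)+\mathrm{rank}(\Sigma_{\rho})\log n_o}\bigr)$. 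Multiplying the three pieces together, $\mathbb{E}_{d^{\pie}_P}[\sigma]\le (\beta_{n_o}/\zeta)\sqrt{C^{\pie}\cdot 2\mathrm{rank}(\Sigma_{\rho})/n_o}$, and bounding the first factor by $\sqrt{d_\Scal}\cdot(\mathrm{rank}(\Sigma_{\rho})+\log(1/\delta))$ after sub-additivity gives the form $(\mathrm{rank}^2(\Sigma_{\rho})+\mathrm{rank}(\Sigma_{\rho})\log(c_2/\delta))\sqrt{d_{\Scal}C^{\pie}/n_o}\,\log^{1/2}(1+n_o)$; the final $H^2$ comes from Theorem~\ref{thm:main}.

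\textbf{Main obstacle.} The delicate step is Step 3: I need a \emph{multiplicative} lower bound $\Sigma_{n_o}\succeq \tfrac{n_o}{2}\Sigma_{\rho}$ on $\mathrm{range}(\Sigma_{\rho})$, with a sample complexity depending on $\mathrm{rank}(\Sigma_{\rho})$ rather than on $\lambda_{\min}^{+}(\Sigma_{\rho})$ (which could be arbitrarily small). The whitening reduction is the natural move, but one has to control $\|\Sigma_{\rho}^{-1/2}\phi_i\|$ uniformly --- this is where the $\mathrm{rank}^2$ factor and the $\log(1+n_o)$ likely enter, via a worst-case bound $\|\Sigma_{\rho}^{-1/2}\phi_i\|^2\le \mathrm{rank}(\Sigma_{\rho})/\lambda_{\min}^{+}$ that is then absorbed by the relative condition number. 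The other steps (Jensen, information gain bound, and plugging into Theorem~\ref{thm:main}) are routine, though some bookkeeping is needed to simplify into the exact form stated.
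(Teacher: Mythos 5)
Your overall skeleton (Theorem \ref{thm:main}, Jensen, distribution change via $C^{\pie}$, then a bound on the information gain inside $\beta_{n_o}$) matches the paper, and your handling of $\bar{\Ical}_{n_o}$ is fine — indeed, since $\phi(s_i,a_i)\in\mathrm{range}(\Sigma_\rho)$ almost surely, the empirical Gram matrix has rank at most $\mathrm{rank}(\Sigma_\rho)$ and the $\log\det$ bound holds deterministically, which is even simpler than the paper's high-probability argument. The genuine gap is exactly the step you flag as the ``main obstacle,'' and your proposed resolution does not work. A matrix Chernoff/Bernstein lower bound for the whitened summands $X_i=\Sigma_\rho^{-1/2}\phi_i\phi_i^{\top}\Sigma_\rho^{-1/2}$ requires a uniform bound on $\|X_i\|_2=\|\Sigma_\rho^{-1/2}\phi_i\|_2^2$, and the only available bound is $1/\lambda^{+}_{\min}(\Sigma_\rho)$, which is controlled by none of the theorem's assumptions; consequently the sample complexity for $\Sigma_{n_o}\succeq \tfrac{n_o}{2}\Sigma_\rho$ on $\mathrm{range}(\Sigma_\rho)$ scales with $1/\lambda^{+}_{\min}(\Sigma_\rho)$, not with $\mathrm{rank}(\Sigma_\rho)+\log(1/\delta)$. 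Your suggestion that this factor is ``absorbed by the relative condition number'' is incorrect: $C^{\pie}$ compares $\Sigma_{\pie}$ to $\Sigma_\rho$ and says nothing about the conditioning of $\Sigma_\rho$ itself, so it cannot absorb a $1/\lambda^{+}_{\min}$ dependence.

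The paper avoids this obstacle by never proving a multiplicative spectral lower bound. Instead it writes $\phi^{\top}\Sigma_{n_o}^{-1}\phi$ variationally as $\sup\{(a^{\top}\phi)^2: a^{\top}\Sigma_{n_o}a\le 1\}$, observes that the maximizer automatically satisfies the extra constraints $\|a\|_2^2\le (n_o+\lambda)/\lambda^2$ and $\|a^{\top}\phi\|_\infty\le 1/\lambda$ (boundedness supplied by the ridge term, not by $\lambda^{+}_{\min}(\Sigma_\rho)$), and then applies a localized uniform law (Theorem 14.1 of Wainwright, Lemma \ref{lem:kernel3}) to lower-bound the empirical quadratic form by $\tfrac12\EE_\rho[(a^{\top}\phi)^2]-\tfrac12\delta'^2_{n_o}$ uniformly over this bounded class; the critical radius is computed in Lemma \ref{lem:critical_para}, where the decomposition $b=UU^{\top}b+(\Ib-UU^{\top})b$ shows the orthogonal complement of $\mathrm{range}(\Sigma_\rho)$ contributes nothing, giving $\delta_{n_o}\lesssim\sqrt{\mathrm{rank}(\Sigma_\rho)/n_o}$. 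This yields the pointwise comparison $\phi^{\top}\Sigma_{n_o}^{-1}\phi\le c\{\mathrm{rank}(\Sigma_\rho)+\log(c_2/\delta)\}\,\phi^{\top}(n_o\Sigma_\rho+\lambda \Ib)^{-1}\phi$ for all $(s,a)$, after which $\EE_\rho[\phi^{\top}(n_o\Sigma_\rho+\lambda\Ib)^{-1}\phi]\le \mathrm{rank}(\Sigma_\rho)/n_o$ is deterministic; the extra multiplicative factor $\mathrm{rank}(\Sigma_\rho)+\log(1/\delta)$ (in place of your hoped-for constant $2$) is the price paid for avoiding any dependence on $\lambda^{+}_{\min}(\Sigma_\rho)$, and it is one source of the $\mathrm{rank}^2$ in the final statement. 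To repair your proof you would need either this localization argument or an additional assumption bounding $\sup_{s,a}\|\Sigma_\rho^{\dagger/2}\phi(s,a)\|_2$, which the theorem does not grant.
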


Theorem \ref{thm:linear_error} suggests $ \mathrm{Err}_o$ is $\tilde O(H^2\rank[\Sigma_{\rho}]^2\sqrt{d_{|\Scal|}C^{\pie}/n_o})$. In other words, when $C^{\pie},\rank[\Sigma_{\rho}]$ are small and the offline sample size $n_o$ is large enough, $\mathrm{Err}_e$ dominates  $\mathrm{Err}_o$ in \cref{thm:main}. Again, in this case, $\mathrm{Err}_e = O\left( H \sqrt{ \ln(\Fcal)/n_e } \right)$, and we see that it grows linearly w.r.t horizon $H$.

Our result is distribution dependent and captures the possible low-rankness of the offline data, i.e., $\mathrm{rank}[\Sigma_{\rho}]$ depends on $\rho$ and could be much smaller than the ambient dimension of feature $\phi(s,a)$. The quantity $C^{\pie}$ corresponds to the discrepancy measured between the batch data and expert data. This is much smaller than the worst-case concentrability coefficient: $
   \tilde C=\sup_{\pi\in \Pi}C^{\pi}$.

\begin{remark}\label{rem:offline_linear}
In RL, a similar quantity has been analyzed in \citep{JinYing2020IPPE}, which studies the error bound of linear FQI with pessimism. Comparing to our result only requiring partial coverage, \citep[Corollary 4.5]{JinYing2020IPPE} assumes the global coverage, i.e., $\Sigma_{\rho}$ is full-rank, which is stronger than $   \tilde C <\infty$.  
\end{remark}


\paragraph{GPs} Now we specialize our main theorem to non-parametric GP models. 
For simplicity, following \citep{Srinivas2010}, we assume $\Scal\times \Acal$ is a compact space. We also suppose the following. Recall $x := (s,a)$. 
\begin{assum}\label{assm:kernel_mercer}
$k(x,x)\leq 1,\forall x\in \Scal\times \Acal$. $k(\cdot,\cdot)$ is a continuous and positive semidefinite kernel. 
\end{assum}
Under the \cref{assm:kernel_mercer}, we can use Mercer's theorem \citep{WainwrightMartinJ2019HS:A}, which shows that there exists a set of pairs of eigenvalues and eigenfunctions $\{ \mu_i, \psi_i \}_{i=1}^{\infty}$, where $\int \rho(x) \psi_i(x) \psi_i(x) dx  = 1$ for all $i$ and $\int \rho(x) \psi_i(x)\psi_j(x) dx = 0$ for $i\neq j$. Eigenfunctions and eigenvalues essentially defines an infinite-dimensional feature mapping $\phi(x) := [ \sqrt{\mu_1} \psi_1(x), \dots, \sqrt{\mu_\infty} \psi_\infty(x) ]^{\top}$. Here, $k(x,x) = \phi(x)^{\top} \phi(x)$, and any function $f\in \Hcal_k$ can be represented as $f(\cdot) = \alpha^{\top} \phi(\cdot)$. Note that the eigenvalues and eigenfunctions are defined w.r.t the offline data $\rho$, thus our result here is still distribution dependent rather than a worst case analysis which often appears in online RL/IL settings \citep{Srinivas2010,Kakade2020,2020Yang,Chowdhury2019}.


Assume eigenvalues $\{\mu_1,\dots, \mu_\infty\}$ are in non-increasing order, we define the effective dimension, 
\begin{definition}[Effective dimension]
     $d^{*}  =   \min\{j \in \mathbb{N}: j\geq B(j+1)n_o/\zeta^2\},\, B(j)=\sum_{k=j}^{\infty} \mu_k$. 
\end{definition}
The effective dimensions $d^{*}$ is widely used and calculated for many kernels \citep{ZhangTong2005LBfK,BachFrancis2017OtEb,Valko2013,Chiappa2020}. 
In finite-dimensional linear kernels $\{x\mapsto a^{\top}\phi(x);a\in \RR^d\}$ ($k(x,x)=\phi^{\top}(x)\phi(x)$), we have $d^{*}\leq \rank[\Sigma_{\rho}]$. Thus, $d^{*}$ is considered to be a natural extension of $\rank[\Sigma_{\rho}]$ to infinite-dimensional models. 
\begin{theorem}[Error for GPs]\label{thm:infinite}
Let $\ts \Sigma_{\pie}=\EE_{x \sim d^{\pie}_P}[\phi(x)\phi(x)^{\top}],\Sigma_{\rho}\ts =\EE_{x \sim \rho}[\phi(x)\phi(x)^{\top}]$. Suppose \cref{assm:kernel_mercer}, $\zeta^2=\Omega(1)$ and the partial coverage $\ts C^{\pie}= \sup_{\|x\|_2\leq 1}(x^{\top} \Sigma_{\pie} x/x^{\top} \Sigma_\rho x)<\infty$.
With probability $1-\delta$, 
    \begin{align}\ts 
  \mathrm{Err}_o\leq  c_1H^2\left( (d^{*})^2+ d^{*} \log(c_2/\delta)\right)\sqrt{\frac{d_\Scal C^{\pie} }{n_o} } \cdot  \sqrt{\log^3(c_2d_{\Scal}n_o/\delta)\log (1+n_o) }, 
    \end{align}
where $c_1,c_2$ are universal constants. 
\end{theorem}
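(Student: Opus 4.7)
The plan is to combine \cref{thm:main} with the specialization of $\sigma$ to the GP case (\cref{exp:gp_model}), which reduces the task to bounding $\mathbb{E}_{x\sim d^{\pie}_P}[\min(\sigma(x),1)]$ for $\sigma(x)=\beta_{n_o}\,k_{n_o}(x,x)/\zeta$. The argument proceeds in three stages: rewrite the GP posterior uncertainty as a quadratic form in the Mercer feature map, use the relative-condition-number inequality to pass from the expert distribution to the offline distribution $\rho$, and bound the resulting trace by $d^{*}/n_o$ using the definition of effective dimension.

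\emph{Stage 1 (operator representation).} By Mercer's theorem under \cref{assm:kernel_mercer}, there is a feature map $\phi(x)=[\sqrt{\mu_1}\psi_1(x),\sqrt{\mu_2}\psi_2(x),\ldots]^{\top}$ such that $k(x,x')=\phi(x)^{\top}\phi(x')$. A Woodbury/Schur-complement identity applied to the posterior formula in \cref{exp:gp_model} rewrites $k_{n_o}(x,x)=\zeta^2\,\phi(x)^{\top}\Sigma_{n_o}^{-1}\phi(x)$, where $\Sigma_{n_o}=\sum_{i=1}^{n_o}\phi(x_i)\phi(x_i)^{\top}+\zeta^2\mathbf{I}$ is a trace-class self-adjoint operator on the RKHS. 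Passing to a trace, $\mathbb{E}_{x\sim d^{\pie}_P}[k_{n_o}(x,x)]=\zeta^2\,\mathrm{tr}(\Sigma_{\pie}\Sigma_{n_o}^{-1})$.

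\emph{Stages 2--3 (condition number and effective dimension).} The partial-coverage hypothesis $C^{\pie}<\infty$ is exactly the operator inequality $\Sigma_{\pie}\preceq C^{\pie}\Sigma_\rho$; combined with $\Sigma_{n_o}^{-1}\succeq 0$ and the PSD trace-monotonicity fact $A\preceq B,\,C\succeq 0\Rightarrow\mathrm{tr}(AC)\leq\mathrm{tr}(BC)$, this gives $\mathrm{tr}(\Sigma_{\pie}\Sigma_{n_o}^{-1})\leq C^{\pie}\mathrm{tr}(\Sigma_\rho\Sigma_{n_o}^{-1})$. Working in the Mercer eigenbasis, the population version $\mathbb{E}[\Sigma_{n_o}]=n_o\Sigma_\rho+\zeta^2\mathbf{I}$ gives $\mathrm{tr}(\Sigma_\rho\,\mathbb{E}[\Sigma_{n_o}]^{-1})=\sum_i\mu_i/(n_o\mu_i+\zeta^2)$; splitting at index $d^{*}$, the head contributes at most $d^{*}/n_o$ directly, and the tail contributes $B(d^{*}+1)/\zeta^2\leq d^{*}/n_o$ by the very definition of $d^{*}$. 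The passage from the population $\mathbb{E}[\Sigma_{n_o}]$ to the empirical $\Sigma_{n_o}$ is handled by a matrix Bernstein applied on the top-$d^{*}$ eigenspace, paying only logarithmic factors; the tail eigenspace needs no concentration because $\Sigma_{n_o}\succeq\zeta^2\mathbf{I}$ unconditionally.

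Substituting $\beta_{n_o}=O((d_\Scal\log^3(d_\Scal n_o/\delta)\,\mathcal{I}_{n_o})^{1/2})$ from \cref{exp:gp_model}, using the standard information-gain-to-effective-dimension bound $\mathcal{I}_{n_o}\lesssim d^{*}\log(1+n_o)$, and a Jensen/Cauchy--Schwarz step to convert the variance-type estimate of $\mathbb{E}[k_{n_o}(x,x)]$ into the square-root form appearing in the theorem, yields the claim. The main obstacle is the effective-dimension step: because $\phi$ is infinite-dimensional, matrix Bernstein cannot be invoked globally, so the head/tail decomposition at level $d^{*}$ and the bookkeeping of how the ridge $\zeta^2\mathbf{I}$ swallows the tail are the crux of the argument. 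Everything upstream (Woodbury, PSD trace monotonicity, plugging in $\beta_{n_o}$) is essentially mechanical once that step is in hand, and this is also the point where our analysis departs qualitatively from the finite-dimensional KNR case in \cref{thm:linear_error}.
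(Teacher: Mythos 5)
Your Stage 1 and the distribution-change step match the paper (the Woodbury rewriting $k_{n_o}(x,x)=\zeta^2\phi(x)^{\top}(\sum_i\phi(x_i)\phi(x_i)^{\top}+\zeta^2\Ib)^{-1}\phi(x)$ and the trace-monotonicity argument are exactly \pref{lem:kernel2}-type algebra and \pref{lem:distribution_change}), and your tail bound via $\Sigma_{n_o}\succeq\zeta^2\Ib$ and $B(d^*+1)\leq \zeta^2 d^*/n_o$ is sound. The gap is in the step you yourself identify as the crux. To control the head contribution $\sum_{i\leq d^*}\mu_i\, e_i^{\top}\Sigma_{n_o}^{-1}e_i$ you propose matrix Bernstein ``on the top-$d^*$ eigenspace,'' but the head block of the inverse is not the inverse of the head block: by the Schur-complement identity, $(\Sigma_{n_o}^{-1})_{HH}=\bigl((\Sigma_{n_o})_{HH}-(\Sigma_{n_o})_{HT}(\Sigma_{n_o})_{TT}^{-1}(\Sigma_{n_o})_{TH}\bigr)^{-1}\succeq\bigl((\Sigma_{n_o})_{HH}\bigr)^{-1}$, so concentrating $(\Sigma_{n_o})_{HH}$ alone gives a bound in the wrong direction, and the head--tail cross term $(\Sigma_{n_o})_{HT}=\sum_i\phi_H(x_i)\phi_T(x_i)^{\top}$ is itself a mean-zero infinite-dimensional random operator whose norm is not negligible at the relevant scale $n_o\mu_{d^*}$ without its own concentration argument. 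In other words, the infinite-dimensionality does not stay confined to the tail; what is really needed is a uniform one-sided lower bound on the empirical quadratic form $\tfrac{1}{n_o}\sum_i f^2(x_i)\gtrsim \EE_{\rho}[f^2]-O(d^*/n_o)$ over the whole RKHS ball, and this is precisely what the paper obtains via the variational representation of $k_{n_o}$ (\pref{lem:kernel}, \pref{lem:kernel2}) combined with the localized uniform law and the critical radius of the RKHS ball (\pref{lem:kernel3}, \pref{lem:radema_rkhs}), rather than a truncated matrix-Bernstein argument. Also note $\EE[\Sigma_{n_o}]^{-1}\preceq\EE[\Sigma_{n_o}^{-1}]$, so the population computation you do is a lower, not upper, surrogate and cannot be transferred without this uniform concentration.

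A second, smaller gap: you invoke ``the standard information-gain-to-effective-dimension bound $\Ical_{n_o}\lesssim d^*\log(1+n_o)$'' to control $\beta_{n_o}$, but the off-the-shelf results are either in expectation ($\EE[\Ical_{n_o}]\leq 2d^*\{\log(1+n_o/\zeta^2)+1\}$, cf.\ \pref{thm:seeger2008_kernel}) or phrased via the distribution-independent maximum information gain; the high-probability, distribution-dependent bound is itself one of the contributions of the paper (\pref{thm:info_gain_nonpara}) and requires the same localization machinery plus a Bernstein step, yielding the weaker $\Ical_{n_o}\lesssim d^*\{d^*+\log(1/\delta)\}\log(1+n_o)$ — which is where the $(d^*)^2+d^*\log(c_2/\delta)$ factor in the statement comes from. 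As written, your proposal assumes this bound rather than proving it, so both the learning-curve term and the $\beta_{n_o}$ term rest on unestablished concentration; repairing them would essentially force you onto the paper's variational-representation-plus-localization route (or an intrinsic-dimension operator concentration argument doing equivalent work).
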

The theorem suggests that $\mathrm{Err}_o$ is $\tilde O(H^2 {d^*}^2\sqrt{d_{\Scal}C^{\pie}/n_o})$. Thus, when $C^{\pie}$, $d^{*}$ are not so large and $n_o$ is large enough, $\mathrm{Err}_e$ asymptotically dominates $\mathrm{Err}_o$ in \cref{thm:main} (again $\mathrm{Err}_e$ is linear in $H$). 

While we defer the detailed proof of the above theorem to \pref{app:GP_proof}, we highlight some techniques we used here. The analysis is reduced to how to bound the information gain $\Ical_{n_o}$ and $\EE_{x\sim d^{\pie}_P}[k_{n_o}(x,x)]$. In both cases, we analyze them into two steps: transforming them into the variational representation and then bounding them via the uniform low with localization (\pref{lem:kernel2}). 


\begin{remark}[Implication to Offline RL]\label{rem:offline_nonlinear}
As related literature, in model-free offline RL, \citep{UeharaMasatoshi2021FSAo,DuanYaqi2021RBaR} obtain the finite-sample error bounds using nonparametric models. Though their bounds can be characterized by the effective dimension, their bounds assume full coverage, i.e., $\max_{(s,a)}1/\rho(s,a)<\infty$. 
\end{remark}

\section{Practical Algorithm}
\label{sec:prac_alg}

\begin{algorithm}[t]
\caption{A practical instantiation of \alg}
\begin{algorithmic}[1]
    \STATE \textbf{Require}: expert dataset $\Dcal_e = \{s, a\}$, offline dataset $\Dcal_{o} := \{s,a,s'\}$
    \STATE Train an ensemble of neural network models $\{ \hat{g}_1,\dots, \hat{g}_n\}$ where each $P_i$ starts with different random initialization;
    \STATE Set bonus $b(s,a) = \max_{i,j} \| g_i(s,a) - g_j(s,a) \|_2$ and initialize $\pi_{\theta_0}$. 
         \FOR{$ t = 0 \to T-1$}
        		\STATE  $ w_t = \arg\max_{\|w\|_2\leq 1} w^{\top}\left(\mathbb{E}_{(s,a)\sim d^{\pi}_{\hat{P}}}[\phi(s,a)] -  \mathbb{E}_{(s,a)\sim \Dcal_e}[ \phi(s,a)]\right) $,  $f_t(s,a) := w_t^{\top} \phi(s,a)$ \label{line:mmd}
		\STATE \resizebox{0.9\textwidth}{!}{$\ts \theta_{t+1} = \theta_t - \eta F_{\theta_t}^{-1}\left( \mathbb{E}_{(s,a)\sim d_{\hat{P}}^{\pi_{\theta_t}}}   \left[ \nabla \ln\pi_{\theta_t}(a|s) A^{\pi_{\theta_t}}_{\hat{P}, f_t+b} (s,a)\right]+ \lambda \mathbb{E}_{(s,a)\sim \Dcal_e} \left[ \nabla \ell(a,s,\pi_{\theta_t})\right] \right)    $} \label{line:npg}
        \ENDFOR
     \end{algorithmic}
\label{alg:milo_prac}
\end{algorithm}

In this section we instantiate a practical version of \alg\, using neural networks for the model class $\Pcal$ and policy class $\Pi$. We use the Maximum Mean Discrepancy (MMD) with a Radial Basis Function kernel as our discriminator class $\Fcal$. Note using MMD as our discrepancy measure allows us to compute the exact maximum discriminator $\argmax_{f\in\Fcal}$ in closed form (and is detailed in appendix). We use a KL-based trust-region formulation for incremental policy update inside the learned model $\hat{P}$. Based on Eq.~\pref{eq:final_obj}, we first formalize the following constrained optimization framework:
\vspace{-0.5mm}
\begin{align*}
\resizebox{\textwidth}{!}{%
$\min_{\pi\in \Pi} \max_{f\in\Fcal} \left( \mathbb{E}_{(s,a)\sim d^{\pi}_{\hat{P}} } \left[ f(s,a) + b(s,a) \right]  - \mathbb{E}_{ (s,a)\sim \Dcal_e}[f(s,a)]    \right)\, \text{s.t.}\, \mathbb{E}_{(s,a)\sim \Dcal_e}[\ell\left( a, s, \pi \right)] \leq \delta,$
}
\end{align*} 
where $\ell: \Acal\times\Scal\times \Pi \mapsto \mathbb{R}$ is a loss function (e.g., negative log-likelihood or any supervised learning loss one would use in BC). Essentially, since we have $\Dcal_e$ available, we use it together with any supervised learning loss to constrain the policy hypothesis space. Note for a deterministic expert $\pie$, the expert policy is always a feasible solution. Thus adding this constraint reduces the complexity of the policy class but does not eliminate the expert policy, and our analysis in \pref{sec:analysis} still applies. 

In our practical instantiation, we replace the hard constraint instead by a Lagrange multiplier, i.e. we use the behavior cloning objective as a regularization term when solving the min-max problem:
\begin{align*}\ts
\min_{\pi\in \Pi} \max_{f\in\Fcal} \left[ \mathbb{E}_{(s,a)\sim d^{\pi}_{\hat{P}} } \left( f(s,a) + b(s,a) \right)  - \mathbb{E}_{(s,a)\sim\Dcal_e}[f(s,a)]    \right] + \lambda \cdot \mathbb{E}_{(s,a)\sim \Dcal_e}[\ell\left( a, s, \pi \right)]. 
\end{align*} 
Given policy $\pi_{\theta_t}$ ($\theta_t$ denotes the parameters), we update the discriminator $f_t$.
Then, with a fixed $f_t$, in order to update policy $\pi$ we use NPG as in \pref{line:npg} in \pref{alg:milo_prac}, where $A^{\pi_{\theta}}_{\hat{P}, f+b}$ is the disadvantage function of $\pi_{\theta}$ 
and  $\ts F_{\theta_t} := \EE_{(s,a)\sim d_{\hat{P}}^{\pi_{\theta_t}}}[\nabla \ln\pi_{\theta_t}(a|s)  \nabla \ln\pi_{\theta_t}(a|s)^{\top}]$ is the fisher information matrix. 


\begin{figure}[b]
    \vspace{-8mm}
    \centering
    \includegraphics[width=\textwidth]{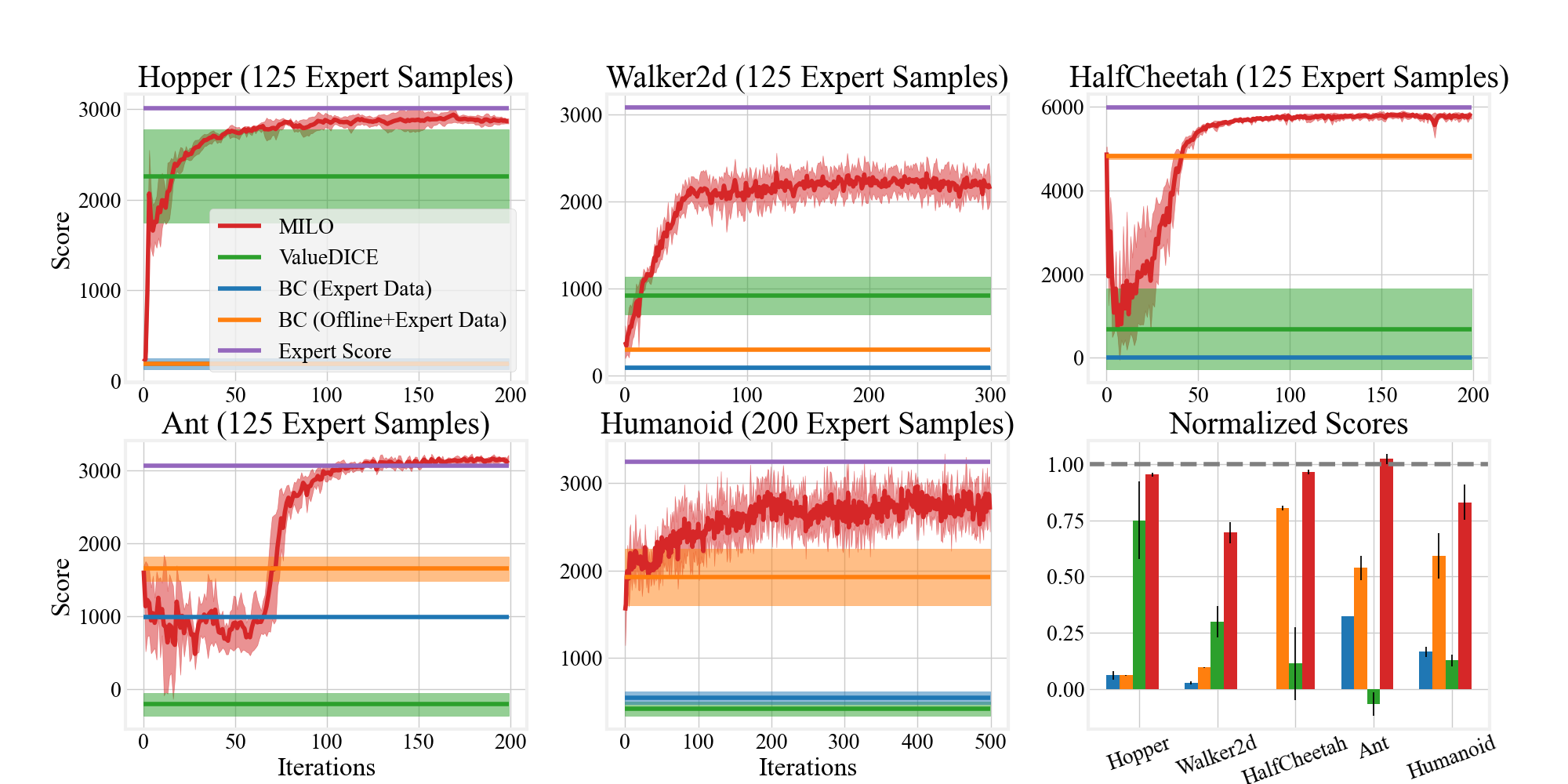}\vspace{-5pt}
    \caption{Learning curves across five seeds for \alg~plotted against the best performance of BC after 1000 epochs of training on the expert/offline+expert data and the best performance of ValueDICE after 10 thousand iterations. The bottom right bar graph shows the expert performance normalized scores where we plot the performance at the last iteration for \alg.}
    \label{fig:core_results}
    \vspace{-10pt}
\end{figure}
\section{Experiments}
\label{sec:experiments}
We aim to answer the following questions with our experiments:
(1) How does \alg~perform relative to other offline IL methods, 
(2) What is the impact of pessimism on \alg's performance?
(3) How does the behavior policy's coverage impact \alg's performance?
(4) How does \alg's result vary when we increase the number of samples drawn from the expert policy?

We evaluate \alg~on five environments from OpenAI Gym \citep{brockman2016gym} simulated with MuJoCo \citep{todorov2012mujoco}: \hopper, \walker, \halfcheetah, \ant, and \humanoid. We compare \alg~against the following baselines: (1) ValueDICE \citep{KostrikovIlya2019ILvO}, a state-of-the-art off-policy IL method modified for the offline IL setting; (2) BC on the expert dataset; and (3) BC on both the offline and expert dataset.

\begin{wraptable}[11]{r}[0cm]{0.5\textwidth}
    \centering
    \caption{Performance for expert and behavior policy used to collect expert and offline datasets respectively.}
    \vspace{-1mm}
    \resizebox{0.5\textwidth}{!}{
    \begin{tabular}{c|cc}
    \toprule
    Environment & \begin{tabular}{c}Expert\\ Performance\end{tabular} & \begin{tabular}{c}Behavior\\ Performance\end{tabular} \\
    \midrule
        \hopper      & 3012 & 752 (25\%)  \\
        \walker      & 3082 & 1383 (45\%) \\
        \halfcheetah & 5986 & 3972 (66\%) \\
        \ant         & 3072 & 1208 (40\%) \\
        \humanoid    & 3248 & 1505 (46\%) \\
    \bottomrule
    \end{tabular}}
    \label{tab:performance}
\end{wraptable}
For the expert dataset, we first train expert policies and then randomly sample $(s,a)$-pairs from a pool of 100 expert trajectories collected from these expert policies. We randomly sample to create very small expert $(s,a)$-pair datasets where BC struggles to learn. Note that BC is effective at imitating the expert for MuJoCo tasks even with a single trajectory; prior works \citep{ho2016generative, kostrikov2019dac, KostrikovIlya2019ILvO} have used similar sub-sampling strategies to create expert datasets to make it harder for BC to learn. 
The offline datasets are collected building on prior Offline RL works \citep{WuYifan2019BROR,Kidambi2020}; each dataset contains 1 million samples from the environment. We first train behavior policies with mean performances often less than half of the expert performance (Table \ref{tab:performance}, column 2).
All results are averaged over five random seeds. See appendix for details on hyperparameters, environments, and dataset composition.
\subsection{Evaluation on MuJoCo Continuous Control Tasks}
\label{subsec:experiments_mujoco}
Figure \ref{fig:core_results} presents results comparing \alg~against benchmarks.
\,\alg~is able to achieve close to expert level performance on three out of the five environments and outperforms both BC and ValueDICE on all five environments. Both \alg~and ValueDICE were warmstarted with one epoch of BC on the offline dataset. We significantly outperform BC's performance when trained on the expert dataset, suggesting \alg~indeed mitigates covariate shift through the use of a static offline dataset of $(s,a)$-pairs. BC on both the offline and expert dataset does improve the performance, but this still cannot successfully imitate the expert since BC has no way of differentiating random/sub-optimal trajectories from the expert samples. ValueDICE, on the other hand, does explicitly aim to imitate the expert samples; however, in theory, it would require either the offline data (i.e. the replay buffer) or the expert samples to have full coverage over the state-action space. Since our offline dataset is mainly collected from a sub-optimal behavior policy and our expert samples are from a high quality expert, neither our offline nor our expert dataset is likely to have full coverage globally; thus potentially hurting the performance of algorithms like ValueDICE. Note that \alg~is still able to perform reasonably well across environments even with these offline and expert datasets.

\begin{figure}[b]
    \centering
    \includegraphics[width=\textwidth]{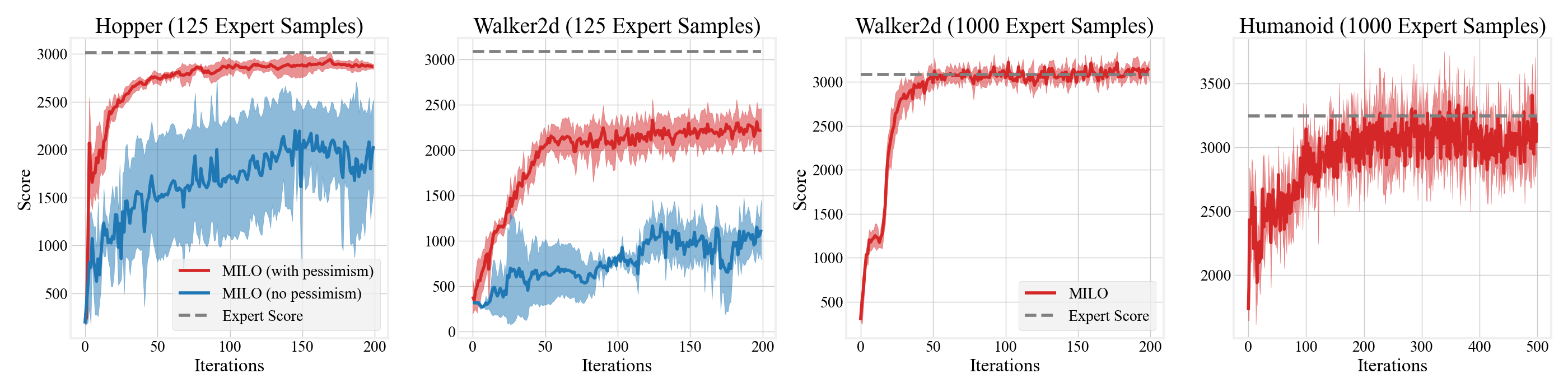}
    \caption{(Left 2) Learning curves for Hopper and Walker2d with (\textcolor{red}{red}) and without (\textcolor{blue}{blue}) pessimism. \alg\, generally performs worse without pessimism. (Right 2) Learning curve for Walker2d and Humanoid with more expert samples.} 
    \label{fig:ablation}
\end{figure}
\subsection{Ablation}
\paragraph{Impact of Pessimism} Figure \ref{fig:ablation} (Left 2) presents \alg's performance on two representative environments with and without pessimism (i.e., setting penalty to be zero) added to the imitation objective. Pessimism stabilizes and improves the final performance for \alg. In general, \alg~consistently outperforms benchmarks and/or achieves expert level performance for a given set of hyperparameters. See the Appendix for evaluation on other environments.

\paragraph{Behavior with more expert samples} We investigate whether \alg~is able to achieve expert performance with more expert samples in the two environments (walker and humanoid) that it did not solve with very small expert datasets in Figure~\ref{fig:core_results}. Figure \ref{fig:ablation} (Right 2) shows that with one trajectory worth of expert samples, \alg~is able to achieve expert performance on walker and humanoid.

\paragraph{Impact of Coverage} \begin{wraptable}[11]{r}[0cm]{0.5\textwidth}
    \vspace{-4mm}
    \centering
    \caption{Expert performance normalized scores on three different offline datasets collected from behavior policies with approximately $50\%$, $25\%$, and random performance relative to the expert.}
    \resizebox{0.5\textwidth}{!}{
    \begin{tabular}{c|ccc}
    \toprule
        Environment &  $\approx50\%$ & $\approx25\%$ & Random \\
    \midrule
        \hopper      & $0.95\pm0.01$ & $0.66\pm0.33$ & $0.42\pm0.36$\\
        \walker      & $0.72\pm0.02$ & $0.27\pm0.06$ & $0.23\pm0.12$\\
        \halfcheetah & $0.96\pm0.01$ & $0.01\pm0.02$ & $0.01\pm0.02$\\
        \ant         & $1.02\pm0.02$ & $0.99\pm0.01$ & $0.21\pm0.52$\\
        \humanoid    & $0.88\pm0.10$ & $0.72\pm0.03$ & $0.08\pm0.01$\\
    \bottomrule
    \end{tabular}
    }\label{tab:coverage}
\end{wraptable}As our analysis suggests, \alg's performance degrades as the offline data's coverage over the expert's state-action space decreases. We use the behavior policy's value as a surrogate for lower coverage, i.e. a lower value suggests lower coverage. 
We generate two additional offline datasets for each environment by lowering the performance of the behavior policy. The three datasets are: (1) the original offline datasets used in Table~\ref{tab:performance} ($\approx 25\%$ for \hopper\, and $\approx 50\%$ for others); (2) ones that have roughly half the performance of (1) ($12\%$ for \hopper\, and $\approx 25\%$ for others); and (3) ones collected from a random behavior policy (Random). Table \ref{tab:coverage} shows that \alg~performs reasonably on three environments even with a lower coverage dataset (second column) and achieves more than 20\% of the expert performance on three environment even with the Random dataset.

\section{Conclusion}
\label{sec:conclusion}
\alg~investigates how to mitigate covariate shift in IL using an offline dataset of environment interactions that has partial coverage of the expert's state-action space. We show the effectiveness of \alg~both in theory and in practice. In future works, we hope to scale to image-based control.

We want to highlight the potential negative societal/ethical impacts our work. An IL algorithm is only as good as the expert that it is imitating, not only in terms of performance but also with regards to the negative biases and intentions that the demonstrator has. When designing real-world experiments/applications for \alg~we believe the users should do their due diligence on removing any negative bias or malicious intent in the demonstrations that they provide.

\newpage
\bibliographystyle{abbrv}

\newpage

\tableofcontents
\appendix

\newpage
\section{Penalty Designs}

\label{ape:calibrated_models}

We show that the penalty design in \cref{sec:analysis} is valid, i.e, the model is well-calibrated for tabular MDPs, KNRs, and GPs. 

\subsection{Tabular models}
\begin{lemma}
 With probability $1-\delta$, 
\begin{align*}
        \|\hat P(\cdot|s,a)-P(\cdot|s,a)\|_1 \leq   \sqrt{\frac{|\Scal|\log 2+\log (2|\Scal||\Acal|/\delta)}{2\{N(s,a)+\lambda\}} }+\frac{\lambda}{N(s,a)+\lambda}\quad \forall (s,a)\in \Scal\times \Acal. 
\end{align*}
\end{lemma}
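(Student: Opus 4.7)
The plan is to split $\hat P(\cdot|s,a)-P(\cdot|s,a)$ into a deterministic shrinkage bias caused by $\lambda$ and a stochastic empirical-frequency error, bound each separately, and then take a union bound over $(s,a)$. Introducing the maximum-likelihood estimator $\tilde P(s'|s,a) \defeq N(s',s,a)/N(s,a)$ (for $N(s,a)>0$), the defining identity $\hat P(s'|s,a) = \tfrac{N(s,a)}{N(s,a)+\lambda}\,\tilde P(s'|s,a)$ gives
$$\hat P(\cdot|s,a) - P(\cdot|s,a) = \tfrac{N(s,a)}{N(s,a)+\lambda}\bigl(\tilde P(\cdot|s,a) - P(\cdot|s,a)\bigr) \;-\; \tfrac{\lambda}{N(s,a)+\lambda}\,P(\cdot|s,a).$$
The triangle inequality, together with $\|P(\cdot|s,a)\|_1 = 1$, yields $\|\hat P - P\|_1 \le \tfrac{N(s,a)}{N(s,a)+\lambda}\|\tilde P - P\|_1 + \tfrac{\lambda}{N(s,a)+\lambda}$, which already reproduces the second (bias) term of the claim.

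Next, I would bound $\|\tilde P(\cdot|s,a) - P(\cdot|s,a)\|_1$ for a fixed $(s,a)$ conditional on $N(s,a)=n$. Conditional on the sigma-algebra generated by the state-action visits, the $n$ next-state samples are i.i.d. from $P(\cdot|s,a)$, so the classical Bretagnolle--Huber--Carol inequality, equivalently Hoeffding applied to each subset $E\subseteq\Scal$ and union-bounded over the $2^{|\Scal|}$ subsets via $\|\tilde P - P\|_1 = 2\sup_{E}(\tilde P(E)-P(E))$, gives with probability at least $1-\delta'$,
$$\|\tilde P(\cdot|s,a) - P(\cdot|s,a)\|_1 \;\lesssim\; \sqrt{\tfrac{|\Scal|\log 2 + \log(1/\delta')}{n}}.$$
Multiplying by $\tfrac{n}{n+\lambda}$ and using the elementary inequality $n/(n+\lambda)^2 \le 1/(n+\lambda)$ converts the $1/n$ inside the square root into $1/(n+\lambda)$, which is precisely the denominator appearing in the stated bound.

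Finally, a union bound over the $|\Scal||\Acal|$ state-action pairs (substituting $\delta' = \delta/(|\Scal||\Acal|)$) inserts the $\log(|\Scal||\Acal|/\delta)$ contribution and produces a bound valid simultaneously for all $(s,a)$. The main obstacles I anticipate are two: first, $N(s,a)$ is a random quantity, which is handled by conditioning on the sigma-algebra of the $(s_i,a_i)$ pairs and noting that the Hoeffding tail is uniform in $n$; the boundary case $N(s,a)=0$ is trivial because $\|\hat P-P\|_1 \le 2$ and the stated bound exceeds this whenever $n=0$. Second, recovering the factor of $\tfrac12$ inside the square root requires being careful in the subset union---one uses the one-sided Hoeffding bound on $(\tilde P(E)-P(E))$ for each $E\subseteq\Scal$, which avoids losing an extra factor of $2$ when converting $\sup_E(\tilde P(E)-P(E))$ back to $\|\tilde P - P\|_1$ via the probability-measure identity.
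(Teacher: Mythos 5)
Your overall route is the same as the paper's: the paper likewise splits $\hat P(\cdot|s,a)$ into the rescaled empirical conditional $N(\cdot|s,a)/(N(s,a)+\lambda)$ plus the shrinkage bias $\lambda/(N(s,a)+\lambda)$ (its triangle-inequality step is algebraically identical to your decomposition through the MLE $\tilde P$), uses the same observation that multiplying the $\ell_1$ concentration bound by $N(s,a)/(N(s,a)+\lambda)$ converts the $1/N(s,a)$ inside the square root into $1/(N(s,a)+\lambda)$, takes the same union bound over the $|\Scal||\Acal|$ pairs, and disposes of $N(s,a)=0$ as a trivial case. Your conditioning argument for the randomness of $N(s,a)$ is valid and in fact more explicit than the paper, which simply cites the concentration inequality for discrete distributions from \citep{Jiang2020_note}.

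The one step that does not go through as written is your claimed recovery of the factor $\tfrac12$ inside the square root. The factor of $2$ you are trying to avoid comes from the identity $\|\tilde P-P\|_1=2\sup_E\bigl(\tilde P(E)-P(E)\bigr)$ itself, not from two-sidedness of Hoeffding: one-sided Hoeffding plus the union over the $2^{|\Scal|}$ subsets gives $\sup_E\bigl(\tilde P(E)-P(E)\bigr)\leq\sqrt{(|\Scal|\log 2+\log(1/\delta'))/(2n)}$, hence $\|\tilde P-P\|_1\leq 2\sqrt{(|\Scal|\log 2+\log(1/\delta'))/(2n)}=\sqrt{2(|\Scal|\log 2+\log(1/\delta'))/n}$, a factor $2$ larger than the constant in the statement; this is exactly the Bretagnolle--Huber--Carol exponent $e^{-nt^2/2}$, and restricting to one-sided deviations buys nothing because the family of subsets is closed under complementation. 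The paper does not face this issue because it invokes the inequality, with the stated constant, as a black box from its reference; if you derive it yourself you should either carry the weaker constant (harmless for every downstream use of the lemma, where only the $\widetilde O(\sqrt{|\Scal|/N(s,a)})$ scaling matters) or cite the reference for the sharper form. Separately, your justification of the $N(s,a)=0$ case is slightly off (the right-hand side need not exceed $2$ when $\lambda$ is large), but the case is still immediate for the right reason: there $\hat P(\cdot|s,a)\equiv 0$, so the left-hand side equals $1$, which is matched exactly by the bias term $\lambda/(N(s,a)+\lambda)=1$.
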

\begin{proof}
When $N(s,a)>0$, we use the concentration inequality of discrete distributions \citep{Jiang2020_note}. Then, with probability $1-\delta$,
\begin{align*}
        \left \|\frac{N(\cdot|s,a)}{N(s,a) }-P(\cdot|s,a)\right \|_1\leq \sqrt{\frac{|\Scal|\log 2+\log (2|\Scal||\Acal|/\delta)}{2N(s,a)} }\quad \forall(s,a)\in \{(s,a):N(s,a)>0\}. 
\end{align*}
Thus, noting $0<N(s,a)/(N(s,a)+\lambda)<1$, with probability $1-\delta$, we have $\forall(s,a)\in \{(s,a):N(s,a)>0\}$, 
\begin{align}\label{eq:tabular_con} 
       \left \|\frac{N(\cdot|s,a)}{(N(s,a)+\lambda)} -P(\cdot|s,a)\times \frac{N(s,a)}{N(s,a)+\lambda} \right \|_1\leq \sqrt{\frac{|\Scal|\log 2+\log (2|\Scal||\Acal|/\delta)}{2\{N(s,a)+\lambda\}} }. 
\end{align}
Besides, the above inequality is still well-defined and holds including the case $N(s,a)=0$. Thus, with probability $1-\delta$, we have $\forall(s,a)\in \Scal\times \Acal$, we have \cref{eq:tabular_con}. 

Recall the estimator $\hat P$ is $N(s'|s,a)/(N(s,a)+\lambda)$. Therefore,

\resizebox{\textwidth}{!}{%
\begin{math}
\begin{aligned}
    \|\hat P(\cdot|s,a)-P(\cdot|s,a)\|_1&\leq    \left\|\hat P(\cdot|s,a)-P(\cdot|s,a)\times \frac{N(s,a)}{N(s,a)+\lambda} \right\|_1+\left\|P(\cdot|s,a)-P(\cdot|s,a)\times \frac{N(s,a)}{N(s,a)+\lambda}\right \|_1 \\
    & \leq \sqrt{\frac{|\Scal|\log 2+\log (2|\Scal||\Acal|/\delta)}{2\{N(s,a)+\lambda\}} }+\frac{\lambda}{N(s,a)+\lambda}. 
\end{aligned}\end{math}}

This concludes the proof. 
\end{proof}

\subsection{KNRs}
In KNRs, the ground truth model is $s'=W^{*}\phi(s,a)+\epsilon,\epsilon\sim \Ncal(0,\zeta^2 \Ib)$, where $s\in \RR^{d_{\Scal}},a\in \RR^{d_{\Acal}},\phi:\Scal \times \Acal \to \RR^d$.  We define 
\begin{align*}
    \|\phi(s,a)\|_{_{\Sigma_{n_o}^{-1}}}\coloneqq \phi^{\top}(s,a)\Sigma_{n_o}^{-1}\phi(s,a).
\end{align*}

\begin{lemma}  With probability at least $1-\delta$, we have:
\begin{align*}
\left\| \widehat{P}(\cdot | s,a) - P(\cdot | s,a)  \right\|_{1} \leq \min\left\{ \frac{\beta_{n_o}}{\zeta} \left\| \phi(s,a) \right\|_{\Sigma_{n_o}^{-1}},2 \right\}\quad \forall (s,a)\in \Scal\times \Acal,
\end{align*} 
where 
\begin{align*}
\beta_{n_o} =  \sqrt{ 2\lambda \|W^\star\|^2_2  + 8 \zeta^2 \left(d_{\Scal} \ln(5) +  \ln(1/\delta) +  \bar \Ical_{n_o} \right) }, \quad   \bar \Ical_{n_o}=\ln\left( \det(\Sigma_{n_o}) / \det(\lambda \Ib) \right). 
\end{align*}
\label{prop:knr_bonus}

\end{lemma}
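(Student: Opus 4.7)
The plan is to reduce the $\ell_1$ distance between two Gaussians to a self-normalized deviation for vector-valued ridge regression, then invoke the standard Abbasi-Yadkori-type concentration lifted to matrix outputs by a covering argument.

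\textbf{Step 1 (Gaussian distance).} Both $P(\cdot|s,a) = \mathcal{N}(W^*\phi(s,a),\zeta^2 I_{d_\Scal})$ and $\hat P(\cdot|s,a) = \mathcal{N}(\hat W\phi(s,a),\zeta^2 I_{d_\Scal})$ share the same isotropic covariance. By Pinsker combined with the closed-form KL between Gaussians of equal covariance, I get
\[
\bigl\|\hat P(\cdot|s,a) - P(\cdot|s,a)\bigr\|_1 \;\leq\; \sqrt{2\,\mathrm{KL}(\hat P\|P)} \;\leq\; \frac{1}{\zeta}\bigl\|(\hat W - W^*)\phi(s,a)\bigr\|_2.
\]
In parallel, $\|\hat P - P\|_1 \leq 2$ trivially.

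\textbf{Step 2 (self-normalized Cauchy-Schwarz).} Writing $\hat W - W^* =: \Delta$ row-wise as $\Delta = [\delta_1,\dots,\delta_{d_\Scal}]^\top$, Cauchy-Schwarz in the inner product induced by $\Sigma_{n_o}$ gives
\[
\|\Delta\phi(s,a)\|_2^2 \;=\; \sum_{i=1}^{d_\Scal}(\delta_i^\top \phi(s,a))^2 \;\leq\; \Bigl(\sum_{i=1}^{d_\Scal}\|\delta_i\|_{\Sigma_{n_o}}^2\Bigr)\,\|\phi(s,a)\|_{\Sigma_{n_o}^{-1}}^2.
\]
Defining $\|\Delta\|_{\Sigma_{n_o}}^2 := \mathrm{tr}(\Delta\,\Sigma_{n_o}\,\Delta^\top)$, this is exactly $\|\Delta\|_{\Sigma_{n_o}}\,\|\phi(s,a)\|_{\Sigma_{n_o}^{-1}}$.

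\textbf{Step 3 (self-normalized deviation bound, the main obstacle).} The task reduces to showing that with probability at least $1-\delta$, $\|\hat W - W^*\|_{\Sigma_{n_o}} \leq \beta_{n_o}$. From the ridge normal equations $\hat W\,\Sigma_{n_o} = \sum_i s_i'\phi(s_i,a_i)^\top$ with $s_i' = W^*\phi(s_i,a_i) + \epsilon_i$, I get the decomposition
\[
(\hat W - W^*)\,\Sigma_{n_o} \;=\; \sum_{i=1}^{n_o}\epsilon_i\,\phi(s_i,a_i)^\top \;-\; \lambda W^*,
\]
so that for any unit vector $u\in\mathbb{R}^{d_\Scal}$,
\[
u^\top (\hat W - W^*)\,\Sigma_{n_o}^{1/2}\bigl(\Sigma_{n_o}^{1/2}\bigr)^{-1}\Sigma_{n_o}^{1/2} \;=\; \sum_i (u^\top\epsilon_i)\phi(s_i,a_i)^\top - \lambda\,u^\top W^*.
\]
For fixed $u$, $u^\top\epsilon_i$ is scalar sub-Gaussian with proxy $\zeta^2$, so the standard self-normalized martingale bound of Abbasi-Yadkori, P\'al, Szepesv\'ari gives, with probability $\geq 1-\delta'$,
\[
\|u^\top(\hat W - W^*)\|_{\Sigma_{n_o}}^2 \;\leq\; 2\zeta^2\bigl(\ln(1/\delta') + \bar{\mathcal{I}}_{n_o}\bigr) + 2\lambda\|W^*\|_2^2.
\]
To pass from fixed $u$ to the operator-type quantity $\|\hat W - W^*\|_{\Sigma_{n_o}}^2 = \sup_{\|u\|=1}\|u^\top(\hat W-W^*)\|_{\Sigma_{n_o}}^2$ up to a constant, I use a $1/2$-net of the unit sphere in $\mathbb{R}^{d_\Scal}$ (cardinality $\leq 5^{d_\Scal}$), apply the single-$u$ bound at each net point with $\delta' = \delta/5^{d_\Scal}$, and absorb the doubling loss from the net, producing the $d_\Scal\ln 5$ term. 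This yields exactly $\|\hat W-W^*\|_{\Sigma_{n_o}} \leq \beta_{n_o}$ with the stated formula.

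\textbf{Step 4 (combine).} Chaining Steps 1-3 gives $\|\hat P - P\|_1 \leq (\beta_{n_o}/\zeta)\,\|\phi(s,a)\|_{\Sigma_{n_o}^{-1}}$ uniformly in $(s,a)$, and taking the minimum with the trivial bound $2$ completes the proof. The main obstacle is Step 3: lifting the scalar self-normalized bound to the matrix-valued output through the covering argument, as the raw single-$u$ guarantee is not uniform over the sphere.
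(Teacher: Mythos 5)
Your proposal is correct and follows essentially the same route as the paper: Pinsker's inequality for the two equal-covariance Gaussians, Cauchy--Schwarz with respect to $\Sigma_{n_o}$, and a confidence ball for $\widehat{W}$. The only difference is that the paper obtains the confidence ball by directly citing Lemma B.5 of Kakade et al.\ (2020), whereas your Step 3 reconstructs that lemma's proof (self-normalized martingale bound plus a $1/2$-net of the unit sphere of $\mathbb{R}^{d_\Scal}$ of size $5^{d_\Scal}$, which is indeed the source of the $d_\Scal\ln 5$ term), so nothing is conceptually different. One slip you should fix: writing $\Delta := \widehat{W}-W^\star$, in Step 2 you bound $\|\Delta\phi(s,a)\|_2$ by the Frobenius-type quantity $\sqrt{\mathrm{tr}(\Delta\Sigma_{n_o}\Delta^\top)}\,\|\phi(s,a)\|_{\Sigma_{n_o}^{-1}}$, but Step 3 only controls the spectral quantity $\sup_{\|u\|_2=1}\|u^\top\Delta\Sigma_{n_o}^{1/2}\|_2=\|\Delta\Sigma_{n_o}^{1/2}\|_2$; since the Frobenius norm can exceed the spectral norm by a factor of $\sqrt{d_\Scal}$, Steps 2 and 3 do not chain as written with the stated $\beta_{n_o}$. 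The repair is exactly the paper's step: write $\Delta\phi=\Delta\Sigma_{n_o}^{1/2}\Sigma_{n_o}^{-1/2}\phi$ and use $\|\Delta\phi\|_2\le\|\Delta\Sigma_{n_o}^{1/2}\|_2\,\|\phi\|_{\Sigma_{n_o}^{-1}}$, after which the spectral-norm confidence ball from Step 3 is precisely what is needed; also, in Step 3 keep the ridge-bias term $\sqrt{\lambda}\|W^\star\|_2$ outside the net-doubling before squaring, otherwise the constant bookkeeping does not land on the $2\lambda\|W^\star\|_2^2$ term of the stated $\beta_{n_o}$.
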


\begin{proof} The proof directly follows the confidence ball construction and proof from \cite{Kakade2020}. Specifically, from Lemma B.5 in \cite{Kakade2020}, we have that with probability at least $1-\delta$,
\begin{align*}
    \left\| \left(\widehat{W}  - W^\star\right) \left(\Sigma_{n_o}\right)^{1/2}  \right\|_2^2 \leq \beta^2_{n_o} . 
\end{align*}
Thus, with \pref{lem:gaussian_tv}, we have:
\begin{align*}
    \left\| \widehat{P}(\cdot | s,a) - P(\cdot | s,a)  \right\|_{1} &\leq \frac{1}{\zeta} \left\| (\widehat{W} - W^\star) \phi(s,a)   \right\|_2 \\
    &\leq \left\| (\widehat{W} - W^\star) (\Sigma_{n_o})^{1/2}  \right\|_2 \left\| \phi(s,a) \right\|_{\Sigma_{n_o}^{-1}}  / \zeta \leq \frac{\beta_{n_o}}{\zeta} \| \phi(s,a) \|_{\Sigma_{n_o}^{-1}}.
\end{align*} This concludes the proof. 
\end{proof}

\subsection{Gaussian processes}

Let $\Hcal_k$ be the RKHS with the kernel $k(\cdot,\cdot)$. We denote the associated norm and inner product by $\|\cdot\|_{k}$ and $\langle \cdot,\cdot \rangle_{k}$.  
In GPs, the ground truth model is defined as $s'=g^{*}(s,a)+\epsilon,\epsilon\sim \Ncal(0,\zeta^2 \Ib)$, where $g^{*}$ belongs to an RKHS $\Hcal_k$. 

\begin{lemma}
With probability $1-\delta$, 
\begin{align*}
        \|\hat P(\cdot|s,a)-P(\cdot|s,a)\|_1\leq \min \prns{\frac{\beta_{n_o}}{\zeta}\sqrt{k_{n_o}((s,a),(s,a))},2} \quad \forall (s,a)\in \Scal\times \Acal, 
\end{align*}
and 
\begin{align*}
    \beta_{n_o}=\sqrt{d_{\Scal}\{2+150 \log^3(d_{\Scal}n_o/\delta)\mathcal{I}_{n_o}\}} ,\quad \mathcal{I}_{n_o}=\log(\det(\Ib+\zeta^{-2}\Kb_{n_o})). 
\end{align*}
\end{lemma}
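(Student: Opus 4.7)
The plan is to reduce the $L_1$ bound to a pointwise bound on the mean function $g^\star$, and then invoke a self-normalized RKHS confidence bound coordinate by coordinate.

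\textbf{Step 1 (reduction to mean estimation).} Both $P(\cdot|s,a)$ and $\hat{P}(\cdot|s,a)$ are Gaussians on $\RR^{d_\Scal}$ with common covariance $\zeta^2 \Ib$ and means $g^\star(s,a)$ and $\hat g(s,a)$ respectively. Using the standard total-variation bound for Gaussians with shared covariance (the same lemma invoked in the KNR case, \pref{lem:gaussian_tv}), we obtain
\begin{align*}
\|\hat P(\cdot|s,a)-P(\cdot|s,a)\|_1 \;\leq\; \frac{1}{\zeta}\,\|\hat g(s,a)-g^\star(s,a)\|_2.
\end{align*}
The trivial bound $\|\hat P(\cdot|s,a)-P(\cdot|s,a)\|_1 \leq 2$ will provide the $\min\{\cdot,2\}$.

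\textbf{Step 2 (coordinate-wise RKHS confidence bound).} Decompose $g^\star=(g^\star_1,\ldots,g^\star_{d_\Scal})$ and similarly $\hat g$. For each coordinate $i$, the data $(x_j, s'_{j,i})_{j=1}^{n_o}$ with $s'_{j,i} = g^\star_i(x_j)+\epsilon_{j,i}$, $\epsilon_{j,i}\sim\mathcal{N}(0,\zeta^2)$, is exactly the setting for scalar kernel ridge regression with GP posterior mean $\hat g_i$ and posterior variance $k_{n_o}(x,x)$. Applying the self-normalized RKHS confidence bound of \cite{Chowdhury2019,2020Yang} (an RKHS analogue of \cite{Abbasi-yadkori2011}), with probability $1-\delta/d_\Scal$ we have, uniformly in $x$,
\begin{align*}
|\hat g_i(x)-g^\star_i(x)| \;\leq\; \frac{\beta'_{n_o}}{\zeta}\sqrt{k_{n_o}(x,x)}, \qquad (\beta'_{n_o})^2 \;=\; O\!\bigl(\|g^\star_i\|_k^2 + \log^3(n_o d_\Scal/\delta)\,\Ical_{n_o}\bigr),
\end{align*}
where the $\log^3$ factor arises from the uniform-in-$x$ extension via a covering of the compact domain $\Scal\times\Acal$ together with Lipschitzness of $\hat g_i - g^\star_i$ and of $\sqrt{k_{n_o}(\cdot,\cdot)}$, which is how \citep{Chowdhury2019,Kakade2020} absorb the dependence on the covering radius.

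\textbf{Step 3 (union over coordinates and assembly).} Taking a union bound over $i=1,\ldots,d_\Scal$, with probability $1-\delta$,
\begin{align*}
\|\hat g(s,a)-g^\star(s,a)\|_2^2 \;=\; \sum_{i=1}^{d_\Scal} |\hat g_i(s,a)-g^\star_i(s,a)|^2 \;\leq\; \frac{1}{\zeta^2}\Bigl(\sum_{i=1}^{d_\Scal}(\beta'_{n_o})^2\Bigr) k_{n_o}((s,a),(s,a)).
\end{align*}
Using the assumed bound on $\|g^\star_i\|_k$ (folded into the universal constants so that $\sum_i \|g^\star_i\|_k^2$ contributes the ``$2$'' inside the braces of $\beta_{n_o}^2$), the prefactor becomes $d_\Scal\{2+150\log^3(d_\Scal n_o/\delta)\,\Ical_{n_o}\}=\beta_{n_o}^2$. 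Substituting into Step~1 and capping by $2$ yields the claim.

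\textbf{Main obstacle.} The delicate part is Step~2: producing the self-normalized bound \emph{uniformly} in $(s,a)$, not just at a single query point, while controlling the constants so that the stated $\log^3$ factor emerges. The trick is the standard ``discretize then control residual via Lipschitzness'' argument for the posterior mean and for $\sqrt{k_{n_o}(\cdot,\cdot)}$ on the compact domain, combined with a Cauchy--Schwarz in the RKHS to relate the pointwise deviation to $\Ical_{n_o}$; we import this from \citep{Chowdhury2019,Kakade2020} rather than rederive it. Everything else is routine: Gaussian TV for Step~1 and an independence/union bound for Step~3.
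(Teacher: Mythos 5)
Your overall route matches the paper's: reduce to $\frac{1}{\zeta}\|\hat g(s,a)-g^\star(s,a)\|_2$ via the Gaussian total-variation bound (\pref{lem:gaussian_tv}), obtain a coordinate-wise confidence bound of the form $|\hat g_i(x)-g^\star_i(x)|\leq \beta\sqrt{k_{n_o}(x,x)}$ uniformly in $x$, and union over the $d_{\Scal}$ coordinates. The paper makes the middle step explicit: by Cauchy--Schwarz in the posterior RKHS, $|\hat g_i(x)-g^\star_i(x)|\leq \sqrt{k_{n_o}(x,x)}\,\|\hat g_i-g^\star_i\|_{k_{n_o}}$, and then Theorem 6 of Srinivas et al.\ bounds $\|\hat g_i-g^\star_i\|_{k_{n_o}}$, which is where the $\log^3$ factor comes from. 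Importing the end-to-end bound from the kernel-bandit literature, as you do, is the same argument packaged differently.

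Two blemishes in your write-up. First, a factor-of-$\zeta$ bookkeeping slip: the correct pointwise bound is $|\hat g_i(x)-g^\star_i(x)|\leq \beta'_{n_o}\sqrt{k_{n_o}(x,x)}$ with no $1/\zeta$ (the noise level is already absorbed in the posterior kernel $k_{n_o}$, which uses the regularizer $\zeta^2\Ib$). With the extra $1/\zeta$ you put in Step 2, combining with the $1/\zeta$ from Step 1 yields $\frac{\beta_{n_o}}{\zeta^2}\sqrt{k_{n_o}}$, which is not the stated $\frac{\beta_{n_o}}{\zeta}\sqrt{k_{n_o}}$ (it only coincides up to constants when $\zeta=\Omega(1)$). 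Second, the provenance of the $\log^3$ factor: it does not come from a covering of $\Scal\times\Acal$ plus Lipschitz arguments. No discretization is needed at all --- the bound is automatically uniform in $(s,a)$ through the reproducing property of $k_{n_o}$, and the $\log^3(\cdot)$ arises from the martingale construction inside Srinivas et al.'s Theorem 6 (the sharper self-normalized bounds of Chowdhury--Gopalan in fact avoid it). Neither issue changes the structure of the proof, but as written your intermediate steps do not literally deliver the constant in the lemma statement.
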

\begin{proof}
Let $\hat g_i$ and $g^{*}$ be $i$-th component of $\hat g$ and $g^{*}$. We have 
\begin{align*}
    \|\hat P(\cdot|s,a)-P(\cdot|s,a)\|_1& \leq \frac{1}{\zeta}\|\hat g(s,a)-g^{*}(s,a)\|_2  \tag{ \pref{lem:gaussian_tv}} \\
     &= \frac{1}{\zeta}\prns{\sum_{i=1}^{d_{\Scal}}\{\hat g_i(s,a)-g^{*}_i(s,a)\}^2  }^{1/2}\\ 
       &\leq \frac{1}{\zeta}\prns{\sum_{i=1}^{d_{\Scal}}k_{n_o}((s,a),(s,a))\|\hat g_i-g^{*}_i\|^2_{k_{n_o}}}^{1/2}\, \tag{CS inequality and $g=\langle g(\cdot),k((s,a),\cdot)\rangle_{k_{n_o}}$}. 
\end{align*}
By \cite[Theorem 6]{Srinivas2010}, with probability $1-\delta$, we have
\begin{align*}
\|\hat g_i(s,a)-g^{*}_i\|_{k_{n_o}}\leq \beta_{n_o}\quad \forall i\in [1,\cdots,d_{\Scal}]. 
\end{align*}
This concludes the statement. 

\end{proof}

\section{Proof of \cref{thm:main}}
\label{ape:main}
In this section, we prove \pref{thm:main}. We also prove the RL version of \pref{thm:main} when the cost $c$ is given and the goal is policy optimization.  Before that, we prepare several lemmas. 

\begin{lemma} \label{lem:Hoeffding} With probability $1-\delta$,  we have $\forall f\in \Fcal$, 
\begin{align*}
    |\EE_{(s,a)\sim d^{\pie}}[f(s,a)]- \EE_{D_e}[f(s,a)] |\leq \epsilon_{\mathrm{stat}},\quad \epsilon_{\mathrm{stat}}=\sqrt{\log(2|\Fcal|/\delta)/2n_e}. 
\end{align*}
\end{lemma}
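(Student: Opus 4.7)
The plan is to invoke Hoeffding's inequality for each fixed $f \in \Fcal$ and then take a union bound over the finite function class. Concretely, fix any $f \in \Fcal$. By realizability or by assumption on the class $\Fcal$ (each $f$ maps into $[0,1]$ as specified in the IPM construction), the random variables $\{f(s_i,a_i)\}_{i=1}^{n_e}$ with $(s_i,a_i) \sim d^{\pi_e}$ i.i.d.\ are bounded in $[0,1]$ with mean $\EE_{(s,a)\sim d^{\pi_e}}[f(s,a)]$. Standard Hoeffding then gives, for any $t > 0$,
\[
\Pr\!\left[\,\left|\EE_{(s,a)\sim d^{\pi_e}}[f(s,a)] - \frac{1}{n_e}\sum_{i=1}^{n_e} f(s_i,a_i)\right| \geq t\,\right] \;\leq\; 2\exp(-2 n_e t^2).
\]

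Next, I would union bound over the $|\Fcal|$ elements of $\Fcal$: the probability that there exists some $f \in \Fcal$ for which the deviation exceeds $t$ is at most $2|\Fcal|\exp(-2n_e t^2)$. Setting this failure probability equal to $\delta$ and solving for $t$ yields $t = \sqrt{\log(2|\Fcal|/\delta)/(2n_e)} = \epsilon_{\mathrm{stat}}$, which is exactly the claimed bound. Taking the complementary event finishes the proof.

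There is no real obstacle here: the argument is a textbook concentration + union bound, valid because $\Fcal$ is finite and each $f$ is uniformly bounded. The only subtlety worth flagging is the boundedness range: if one only has $\|f\|_\infty \leq B$ rather than $f \in [0,1]$, the Hoeffding constant changes and one obtains $B\sqrt{2\log(2|\Fcal|/\delta)/n_e}$; under the paper's convention $f:\Scal\times\Acal \to [0,1]$ (as declared when $\Fcal$ was introduced), $B=1$ and the stated form follows directly. When $|\Fcal|$ is infinite, the same scheme applies with $\log|\Fcal|$ replaced by a metric entropy / Rademacher-complexity term, which the authors already note in the footnote preceding Theorem~\ref{thm:main}.
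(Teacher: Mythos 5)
Your argument is exactly the paper's proof: the paper's own justification is a one-line "Hoeffding's inequality and a union bound over $\Fcal$," which you have simply written out in full, with the correct constant from $f\in[0,1]$ and the correct solve for $t=\sqrt{\log(2|\Fcal|/\delta)/(2n_e)}$. The proposal is correct and identical in approach.
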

\begin{proof}
From Hoeffding's inequality and a union bound over $\Fcal$.
\end{proof}
\begin{lemma}[Pessimistic Policy Evaluation 1 ]\label{lem:pessmistic}
Suppose \pref{assum:calibration} holds and $\max_{f\in\Fcal}\|f\|_{\infty}\leq 1$. With probability at least $1-\delta$, $\forall \pi \in \Pi,\forall f\in \Fcal$, 
\begin{align*}
   0 \leq V^{\pi}_{\hat P,f+b}-   V^{\pi}_{P,f}. 
\end{align*}
\end{lemma}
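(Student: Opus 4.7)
\textbf{Proof plan for Lemma \ref{lem:pessmistic}.} The plan is to use a standard simulation-lemma telescoping argument to bound $|V^{\pi}_{\hat P,f} - V^{\pi}_{P,f}|$ by the pessimism bonus $V^{\pi}_{\hat P,b}$, then rearrange. All probability enters only through Assumption \ref{assum:calibration}, which is a uniform (in $(s,a)$) statement and hence yields a bound that holds simultaneously for every $\pi \in \Pi$ and every $f \in \Fcal$ under the same $1-\delta$ event.

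\textbf{Step 1: Simulation lemma.} For each $t\in\{1,\dots,H\}$, let $V^{\pi}_{P,f,t}$ denote the cost-to-go from step $t$ under policy $\pi$, dynamics $P$ and cost $f$. A standard telescoping identity (e.g., \citep[Chapter 2]{agarwal2019reinforcement}) gives
\begin{equation*}
V^{\pi}_{\hat P,f} - V^{\pi}_{P,f} \;=\; \sum_{t=1}^{H-1} \EE_{(s,a)\sim d^{\pi}_{\hat P,t}}\Bigl[\bigl(\hat P(\cdot|s,a) - P(\cdot|s,a)\bigr)^{\top} V^{\pi}_{P,f,t+1}(\cdot)\Bigr].
\end{equation*}
Since $\|f\|_{\infty}\leq 1$, each cost-to-go satisfies $\|V^{\pi}_{P,f,t+1}\|_{\infty}\leq H-t \leq H$, so by Hölder
\begin{equation*}
\bigl|\bigl(\hat P(\cdot|s,a) - P(\cdot|s,a)\bigr)^{\top} V^{\pi}_{P,f,t+1}(\cdot)\bigr| \;\leq\; H\,\bigl\|\hat P(\cdot|s,a) - P(\cdot|s,a)\bigr\|_{1}.
\end{equation*}

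\textbf{Step 2: Invoke calibration.} On the $1-\delta$ event of Assumption \ref{assum:calibration} we have $\|\hat P(\cdot|s,a) - P(\cdot|s,a)\|_1 \leq \min(\sigma(s,a),2)$ uniformly in $(s,a)$, so
\begin{equation*}
\bigl|V^{\pi}_{\hat P,f} - V^{\pi}_{P,f}\bigr| \;\leq\; H \sum_{t=1}^{H-1} \EE_{(s,a)\sim d^{\pi}_{\hat P,t}}\bigl[\min(\sigma(s,a),2)\bigr] \;\leq\; \EE_{\pi,\hat P}\Bigl[\sum_{t=1}^{H} H \min(\sigma(s_t,a_t),2)\Bigr] \;=\; V^{\pi}_{\hat P,b},
\end{equation*}
using the definition $b(s,a) = H\min(\sigma(s,a),2)$ and the fact that $V^{\pi}_{\hat P,b}$ is, by definition, the expected cumulative bonus collected under $\pi$ inside $\hat P$.

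\textbf{Step 3: Rearrange.} By linearity of the cumulative-cost operator in the cost argument, $V^{\pi}_{\hat P,f+b} = V^{\pi}_{\hat P,f} + V^{\pi}_{\hat P,b}$. Combining with Step 2,
\begin{equation*}
V^{\pi}_{\hat P,f+b} - V^{\pi}_{P,f} \;=\; \bigl(V^{\pi}_{\hat P,f} - V^{\pi}_{P,f}\bigr) + V^{\pi}_{\hat P,b} \;\geq\; -V^{\pi}_{\hat P,b} + V^{\pi}_{\hat P,b} \;=\; 0.
\end{equation*}
Since Steps 1 and 3 are deterministic in $(\pi,f,P,\hat P)$ and Step 2 uses only the uniform-in-$(s,a)$ calibration bound, the inequality holds for all $\pi\in\Pi$ and $f\in\Fcal$ simultaneously on the $1-\delta$ event, as claimed.

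\textbf{Main obstacle.} There is no real obstacle; the argument is a textbook pessimism-via-simulation argument, and the lemma is essentially an identity after observing that the bonus was designed precisely so that $V^{\pi}_{\hat P,b}$ dominates the one-step model-mismatch terms summed over the horizon. The one thing worth double-checking is the $H$ factor bookkeeping: $\|V^{\pi}_{P,f,t+1}\|_\infty\leq H$ combined with the horizon sum gives the $H^2$-like rate that is absorbed by $b = H\min(\sigma,2)$ integrated against the $H$ timesteps of $d^{\pi}_{\hat P}$.
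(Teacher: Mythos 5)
Your proof is correct, but it takes a different route from the paper's. The paper proves this lemma by backward induction over the horizon, establishing the stronger pointwise statement $Q^{\pi}_{P,f:h}(s,a)\leq Q^{\pi}_{\hat P,f+b:h}(s,a)$ (hence $V^{\pi}_{P,f:h}(s)\leq V^{\pi}_{\hat P,f+b:h}(s)$) for every state, action, and step $h$, using exactly the two facts you also rely on: $\|V^{\pi}_{P,f:h+1}\|_{\infty}\leq H$ (from $\|f\|_{\infty}\leq 1$) and $b(s,a)=H\min(\sigma(s,a),2)$ dominating $H\|\hat P(\cdot|s,a)-P(\cdot|s,a)\|_1$. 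You instead invoke the simulation lemma directly, with the state distribution under $\hat P$ and the cost-to-go under the true $P$ with cost $f$ (which is the right pairing, since it is $V^{\pi}_{P,f}$, not $V^{\pi}_{\hat P,f+b}$, whose sup-norm is bounded by $H$), obtain $|V^{\pi}_{\hat P,f}-V^{\pi}_{P,f}|\leq V^{\pi}_{\hat P,b}$, and then conclude by linearity $V^{\pi}_{\hat P,f+b}=V^{\pi}_{\hat P,f}+V^{\pi}_{\hat P,b}$. Interestingly, the simulation lemma is the tool the paper reserves for the companion upper bound (its Lemma ``Pessimistic Policy Evaluation 2''), where the roles flip and the cruder bound $\|V^{\pi}_{\hat P,f+b;h}\|_{\infty}\leq H(2H+1)$ appears; your telescoping direction avoids that and gives a clean two-sided bound around $V^{\pi}_{\hat P,f}$ in one stroke, at the cost of not producing the per-state, per-step pessimism that the paper's induction yields as a byproduct. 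Your handling of the uniformity over $\pi\in\Pi$ and $f\in\Fcal$ (everything deterministic given the single calibration event) matches the paper, and the only blemishes are cosmetic index bookkeeping in the horizon sum ($H-1$ versus $H$), which does not affect validity.
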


\begin{proof}[Proof of \pref{lem:pessmistic}]
We denote the expected total cost of $\pi$ under $\hat P$ and cost function $f$ by $V^{\pi}_{\hat P,f:h}(s,a)$. In this proof, we condition on the event 
\begin{align*}
    \|\hat P(\cdot|s,a)-P(\cdot|s,a)\|_1 \leq  \min(\sigma(s,a),2)\,\quad \forall (s,a)\in \Scal\times \Acal. 
\end{align*}

We use the inductive hypothesis argument. We start from $h=H+1$, where $  V^{\pi}_{\hat P,f+b:H+1}=  V^{\pi}_{P,f:H+1}=0$. Assume the inductive hypothesis holds at $h+1$, i.e, $$0\leq  V^{\pi}_{\hat P,f+b:h+1}(s)-   V^{\pi}_{P,f:h+1}(s),\quad\,\forall s\in \Scal,\forall \pi \in \Pi,\,\forall f\in \Fcal.$$ Then, $\forall \pi \in \Pi,\,\forall f\in \Fcal$, 
\begin{align*}
      &Q^{\pi}_{P,f:h}(s,a)- Q^{\pi}_{\hat P,f+b:h}(s,a)\\
      &=-b(s,a)+\EE_{s'\sim \hat P(\cdot|s,a)}[V^{\pi}_{P,f:h+1}(s') ]-\EE_{s'\sim P(\cdot|s,a)}[ V^{\pi}_{\hat P,f+b:h+1}(s')]\\ 
      &\leq -b(s,a)+ \EE_{s'\sim \hat P(\cdot|s,a)}[V^{\pi}_{P,f:h+1}(s') ]-\EE_{s'\sim P(\cdot|s,a)}[ V^{\pi}_{P,f:h+1}(s')] \tag{Inductive hypothesis assumption}\\ 
      &\leq -b(s,a)+H \|\hat P(\cdot|s,a)-P(\cdot|s,a)\|_1  \tag{$\|\Fcal\|_{\infty}\leq 1$}\\
      &\leq  -H\min(\sigma(s,a),2)+ H \min(\sigma(s,a),2)=0.  \tag{Bonus construction}
\end{align*}
Then, noting $ Q^{\pi}_{P,f:h}(s,\pi(s))- Q^{\pi}_{\hat P,f+b:h}(s,\pi(s))=   V^{\pi}_{P,f:h}(s)- V^{\pi}_{\hat P,f+b:h}(s)$, we have 
\begin{align*}
     V^{\pi}_{P,f:h}(s)- V^{\pi}_{\hat P,f+b:h}(s)\leq 0\quad \forall \pi \in \Pi,\,\forall f\in \Fcal. 
\end{align*}
This concludes the induction step. 

Then, we have 
\begin{align*}
      V^{\pi}_{P,f}- V^{\pi}_{\hat P,f+b}=    V^{\pi}_{P,f:1}- V^{\pi}_{\hat P,f+b:1} \leq 0\quad \forall \pi \in \Pi,\,\forall f\in \Fcal. 
\end{align*}
\end{proof}

\begin{lemma}[Pessimistic Policy Evaluation 2  ]\label{lem:pessmistic2}
Suppose \pref{assum:calibration} holds and $\|\Fcal\|_{\infty}\leq 1$. With probability at least $1-\delta$, $\forall \pi \in \Pi,\,\forall f\in \Fcal$, 
\begin{align*}
       V^{\pi}_{\hat P,f+b}-   V^{\pi}_{P,f}\leq \mathrm{Error},\quad \mathrm{Error}:= (3H^2+H)\EE_{(s,a)\sim d^{\pi}_P}[\min(\sigma(s,a),2) ]. 
\end{align*}
\end{lemma}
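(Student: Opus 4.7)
The plan is to prove the upper bound by reverse induction on the per-step gap $W_h(s) := V^{\pi}_{\hat P, f+b: h}(s) - V^{\pi}_{P, f: h}(s)$, in direct analogy with the induction used for \pref{lem:pessmistic} but now extracting an upper bound. Since \pref{lem:pessmistic} already guarantees $W_h(s) \geq 0$, the task reduces to controlling how fast $W_h$ can grow as $h$ decreases.

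The first step is a Bellman expansion of $W_h$. Adding and subtracting $\EE_{s' \sim \hat P}[V^{\pi}_{P, f: h+1}(s')]$ inside the expansion isolates the one-step model error on the \emph{true-model} value, yielding
\[
W_h(s) \;\leq\; \EE_{a \sim \pi(s)}\bigl[\, b(s,a) + (\EE_{\hat P} - \EE_P)[V^{\pi}_{P, f: h+1}] + \EE_{s' \sim \hat P}[W_{h+1}(s')] \,\bigr].
\]
Since $f \in [0,1]$, we have $\|V^{\pi}_{P, f: h+1}\|_\infty \leq H$, and \pref{assum:calibration} bounds the model-error term by $H\,\min(\sigma(s,a), 2) = b(s,a)$. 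The recursion therefore simplifies to $W_h(s) \leq 2H\,\EE_{a}[\min(\sigma(s,a), 2)] + \EE_{a, s' \sim \hat P}[W_{h+1}(s')]$.

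Unrolling this recursion along $\hat P$-trajectories of $\pi$ and integrating over $s_1 \sim d_0$ gives the intermediate bound $V^{\pi}_{\hat P, f+b} - V^{\pi}_{P, f} \leq 2H^2\,\EE_{(s,a)\sim d^{\pi}_{\hat P}}[\min(\sigma,2)]$, where the expectation is under the \emph{learned-model} occupancy measure. To convert to the true occupancy measure $d^{\pi}_{P}$ appearing in the statement, I would bound the distribution drift via a telescoping argument $\|d^{\pi}_{\hat P} - d^{\pi}_{P}\|_{1} \leq H\,\EE_{d^{\pi}_{P}}[\min(\sigma,2)]$ and then shift the expectation using $\min(\sigma,2) \leq 2$. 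Carefully consolidating constants through these two steps produces the stated $(3H^2 + H)\,\EE_{d^{\pi}_P}[\min(\sigma,2)]$ bound.

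The main obstacle is preventing the value-function scale from inflating by an extra factor of $H$. The combined cost $f + b$ reaches order $H$ per step, so naively applying the simulation lemma with $\|V^{\pi}_{\hat P, f+b: h+1}\|_\infty = O(H^2)$ would produce an unnecessary $O(H^3)$ term. The Bellman rearrangement above avoids this by routing the model mismatch through the $O(H)$-bounded true-model value $V^{\pi}_{P, f: h+1}$, which is exactly the scale at which the bonus $b = H\min(\sigma,2)$ was calibrated to cancel the error in \pref{lem:pessmistic}.
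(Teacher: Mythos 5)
Your Bellman recursion and its unrolling are correct as far as they go: the identity $W_h(s)=\EE_{a\sim\pi}\bigl[b(s,a)+(\EE_{\hat P}-\EE_{P})[V^{\pi}_{P,f:h+1}]+\EE_{s'\sim\hat P}[W_{h+1}(s')]\bigr]$ holds, the mismatch term is indeed at most $H\min(\sigma(s,a),2)=b(s,a)$ by \pref{assum:calibration}, and unrolling gives $V^{\pi}_{\hat P,f+b}-V^{\pi}_{P,f}\le 2H^2\,\EE_{(s,a)\sim d^{\pi}_{\hat P}}[\min(\sigma(s,a),2)]$. The genuine gap is the final measure change. With your own telescoping bound $\|d^{\pi}_{\hat P}-d^{\pi}_{P}\|_{1}\le H\,\EE_{d^{\pi}_{P}}[\min(\sigma,2)]$ and $\min(\sigma,2)\le 2$, the best you get is $\EE_{d^{\pi}_{\hat P}}[\min(\sigma,2)]\le (1+O(H))\,\EE_{d^{\pi}_{P}}[\min(\sigma,2)]$, so the route delivers a bound of order $H^3\,\EE_{d^{\pi}_{P}}[\min(\sigma,2)]$, not $(3H^2+H)\,\EE_{d^{\pi}_{P}}[\min(\sigma,2)]$; no argument is given that closes this extra factor of $H$. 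In particular, the claim in your last paragraph that routing the error through the $O(H)$-bounded true-model value avoids the $H^3$ inflation is not realized: you avoid it inside the recursion but re-incur it in the occupancy conversion, because the $2H^2$ prefactor (which already carries the bonus scale $H$) multiplies the $O(H)$ distribution drift.

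For comparison, the paper never changes measure: it applies its simulation lemma (\pref{lem:simulation}) in the direction that places the occupancy under the true $P$ from the start, i.e. $V^{\pi}_{\hat P,f+b}-V^{\pi}_{P,f}=\sum_{h}\EE_{(s,a)\sim d^{\pi}_{P}}\bigl[b(s,a)+(\EE_{s'\sim\hat P}-\EE_{s'\sim P})[V^{\pi}_{\hat P,f+b;h}(s')]\bigr]$, and then bounds the mismatch by $\|V^{\pi}_{\hat P,f+b;h}\|_{\infty}\,\|\hat P(\cdot|s,a)-P(\cdot|s,a)\|_{1}$ with $\|V^{\pi}_{\hat P,f+b;h}\|_{\infty}\le H(2H+1)$, so the expectation is under $d^{\pi}_{P}$ with no drift step needed. (Tracked literally, that computation gives $2H^3+2H^2$ rather than $3H^2+H$, so the displayed constant in the paper is itself loose; but the structural point stands: the paper pays the $O(H^2)$ sup-norm of the pessimistic value instead of a $d^{\pi}_{\hat P}\to d^{\pi}_{P}$ conversion, which is exactly where your extra $H$ enters.) To make your route match the stated form you would need either to leave the expectation under $d^{\pi}_{\hat P}$ (a different lemma) or to show the drift term can be absorbed without the multiplicative $H^2$ prefactor, neither of which is argued in the proposal.
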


\begin{proof}[Proof of \pref{lem:pessmistic2}]
In this proof, we condition on the event 
\begin{align*}
    \|\hat P(\cdot|s,a)-P(\cdot|s,a)\|_1 \leq  \min(\sigma(s,a),2)\quad \forall(s,a)\in \Scal\times \Acal. 
\end{align*} 

We invoke simulation \pref{lem:simulation}. Then, we have $\forall \pi\in \Pi,\forall f\in \Fcal$  
\begin{align*}
    V^{\pi}_{\hat P,f+b}-   V^{\pi}_{P,f}&=\sum_{h=1}^{H}\EE_{(s,a)\sim d^{\pi}_P}[b(s,a)+\EE_{s'\sim \hat P(\cdot|s,a)}[V^{\pi}_{\hat P,f+b;h}(s')]- \EE_{s'\sim P(\cdot|s,a)}[V^{\pi}_{\hat P,f+b;h}(s')] ] \\ 
    &\leq \sum_{h=1}^{H}\EE_{(s,a)\sim d^{\pi}_P}[b(s,a) + \| V^{\pi}_{\hat P,f+b;h}\|_{\infty}\|  \hat P(\cdot|s,a)-P(\cdot|s,a)\|_1 ]\\
    &\leq H\EE_{(s,a)\sim d^{\pi}_P}[H\min(\sigma(s,a),2) + H(2H+1)\min(\sigma(s,a),2) ]  \tag{ $\| V^{\pi}_{\hat P,f+b;h}\|_{\infty}\leq H(2H+1)$}\\
    &=(3H^2+H)\EE_{(s,a)\sim d^{\pi}_P}[\min(\sigma(s,a),2) ]. 
\end{align*}
Here, we use  $\| V^{\pi}_{\hat P,f+b;h}\|_{\infty}\leq H(2H+1)$ which is derived by $0\leq f+b\leq 2H+1$. 
\end{proof}

By using the above lemmas, we prove our main result. 
\begin{proof}[Proof of \pref{thm:main}]
In this proof, we condition on the event 
\begin{align*}
    \|\hat P(\cdot|s,a)-P(\cdot|s,a)\|_1 \leq  \min(\sigma(s,a),2),  
\end{align*}
which holds with probability $1-\delta$,  and the event in \pref{lem:Hoeffding}, which holds with probability $1-\delta$. 

Then, with probability $1-2\delta$, we have 
\begin{align*}
        V^{\hat \pi_{\IL}}_{P,c}-   V^{\pie}_{P,c}&\leq    V^{\hat \pi_{\IL}}_{\hat P,c+b}-   V^{\pie}_{P,c} \tag{\pref{lem:pessmistic}}\\
        &\leq H\max_{f\in \Fcal}\{\EE_{(s,a)\sim d^{\hat \pi_{\IL}}_{\hat P}}[f(s,a)+b(s,a)]- \EE_{(s,a)\sim d^{\pie}_P}[f(s,a)]\} \tag{$c \in \Fcal$}\\
        &\leq H\max_{f\in \Fcal}\{\EE_{(s,a)\sim d^{\hat \pi_{\IL}}_{\hat P}}[f(s,a)+b(s,a)]- \EE_{\Dcal_e}[f(s,a)]\}+H\epsilon_{\mathrm{stats}} \tag{ \pref{lem:Hoeffding}}\\
        &\leq H\max_{f\in \Fcal}\{\EE_{(s,a)\sim d^{\pie}_{\hat P}}[f(s,a)+b(s,a)]- \EE_{\Dcal_e}[f(s,a)]\}+H\epsilon_{\mathrm{stats}} \tag{$\pie \in \Pi$ and the definition of $\hat \pi_{\IL}$}\\
        &\leq H\max_{f\in \Fcal}\{\EE_{(s,a)\sim d^{\pie}_{\hat P}}[f(s,a)+b(s,a)]- \EE_{(s,a)\sim d^{\pie}_P}[f(s,a)]\}+2H\epsilon_{\mathrm{stats}} \tag{ \pref{lem:Hoeffding}} \\
        &\leq  \max_{f\in \Fcal}\{V^{\pie}_{\hat P,f+b}-   V^{\pie}_{P,f}\}  +2H\epsilon_{\mathrm{stats}} \\ 
        &\leq (3H^2+H)\EE_{(s,a)\sim d^{\pie}_P}[\min(\sigma(s,a),2) ]+ 2H\epsilon_{\mathrm{stats}} \tag{\pref{lem:pessmistic2}}\\
         &\leq (6H^2+2H)\EE_{(s,a)\sim d^{\pie}_P}[\min(\sigma(s,a),1) ]+ 2H\epsilon_{\mathrm{stats}}. 
\end{align*}
This concludes the proof. 
\end{proof}

Finally, we prove the finite-sample error bounds for the RL case. Similar results are obtained in \citep{Kidambi2020,Yu2020}. We use this theorem in the next section. 

\begin{theorem}[Bounds for RL]\label{thm:main2}
Suppose $\pi^{*} \in \Pi, P\in \Pcal$ and Assumption \ref{assum:calibration}. With probability $1-2\delta$, we have
\begin{align}\ts 
     V^{\hat \pi_{\RL}}_{P,c}-   V^{\pi^{*}}_{P,c} \leq (6H^2+2H)\EE_{(s,a)\sim d^{\pi^{*}}_P}[\min(\sigma(s,a),1) ] \label{eq:offline_rl}.    
\end{align}
\end{theorem}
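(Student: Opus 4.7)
The RL bound mirrors the proof of \pref{thm:main} but is strictly simpler, since the true cost $c$ is known and so neither the IPM discriminator nor the expert-sample concentration lemma (\pref{lem:Hoeffding}) is needed. The entire argument reduces to invoking the two pessimism lemmas already established for the IL case, specialized to the choice $f = c$, followed by a one-line use of the optimality of $\hat\pi_{\RL}$ inside the learned, penalized MDP.

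First, I would condition on the calibration event of \pref{assum:calibration}, namely $\|\hat P(\cdot\mid s,a) - P(\cdot\mid s,a)\|_1 \le \min(\sigma(s,a),2)$ for all $(s,a)$, which holds with probability at least $1-\delta$. Under this event, \pref{lem:pessmistic} applied with $\pi = \hat\pi_{\RL}$ and $f = c \in \Fcal$ gives the pessimistic lower bound $V^{\hat\pi_{\RL}}_{P,c} \le V^{\hat\pi_{\RL}}_{\hat P,\,c+b}$. Next, by the defining optimality of $\hat\pi_{\RL} = \argmin_{\pi\in\Pi} V^{\pi}_{\hat P,\,c+b}$ together with the realizability $\pi^\star \in \Pi$, we have $V^{\hat\pi_{\RL}}_{\hat P,\,c+b} \le V^{\pi^\star}_{\hat P,\,c+b}$. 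Chaining these two inequalities yields
\[
V^{\hat\pi_{\RL}}_{P,c} - V^{\pi^\star}_{P,c} \;\le\; V^{\pi^\star}_{\hat P,\,c+b} - V^{\pi^\star}_{P,c}.
\]

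Finally, I would bound the right-hand side by \pref{lem:pessmistic2} applied with $\pi = \pi^\star$ and $f = c$, which gives $V^{\pi^\star}_{\hat P,\,c+b} - V^{\pi^\star}_{P,c} \le (3H^2 + H)\,\EE_{(s,a)\sim d^{\pi^\star}_P}[\min(\sigma(s,a),2)]$. Using the trivial inequality $\min(\sigma,2) \le 2\min(\sigma,1)$ produces the stated $(6H^2 + 2H)$ prefactor.

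There is no real obstacle here; the only conceptual subtlety is that the ``upper'' side of pessimism (\pref{lem:pessmistic2}) must be instantiated at the comparator $\pi^\star$ rather than at $\hat\pi_{\RL}$, so that the bonus expectation is taken under the occupancy $d^{\pi^\star}_P$—precisely the quantity that encodes the partial coverage of the optimal policy by $\rho$. The $1-2\delta$ failure probability in the statement is in fact loose for this theorem (only a single calibration event is used), but I would keep the $1-2\delta$ bookkeeping for consistency with \pref{thm:main}.
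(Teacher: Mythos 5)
Your proposal is correct and follows essentially the same route as the paper's proof: pessimistic lower bound via \pref{lem:pessmistic}, optimality of $\hat\pi_{\RL}$ with $\pi^*\in\Pi$, then \pref{lem:pessmistic2} at the comparator $\pi^*$ and $\min(\sigma,2)\le 2\min(\sigma,1)$. Your side remarks (the lemmas only need $\|c\|_\infty\le 1$, and the $1-2\delta$ is loose since only the calibration event is used) are also consistent with the paper's argument.
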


\begin{proof}[Proof of \pref{thm:main2} ]
\begin{align*}
     V^{\hat \pi_{\RL}}_{P,c}-   V^{\pi^{*}}_{P,c} &\leq      V^{\hat \pi_{\RL}}_{\hat P,c+b}-   V^{\pi^{*}}_{P,c} \tag{\pref{lem:pessmistic}}\\
     &= V^{\pi^{*}}_{\hat P,c+b}-   V^{\pi^{*}}_{P,c} \tag{$\pi^{*} \in \Pi$ and the definition of $\hat \pi_{\RL}$ }\\
     & =  (3H^2+H)\EE_{(s,a)\sim d^{\pi^{*}}_P}[\min(\sigma(s,a),2) ]  \tag{\pref{lem:pessmistic2}}\\
     &\leq  (6H^2+2H)\EE_{(s,a)\sim d^{\pi^{*}}_P}[\min(\sigma(s,a),1) ]. 
\end{align*}This concludes the proof.
\end{proof}

\section{Finite sample error bound for each model}
\label{ape:model_error}

In this section, we analyze the bound for the following models: (1) discrete MDPs, (2) KNRs, (3) GPs. All of the proofs are deferred to \pref{sec:error_proof}. We will also discuss the implication to the RL case using \cref{thm:main2}.

\subsection{Discrete MDPs} 

Recall $\pie$-concentratabiliy coefficient is defined by 
\begin{align*}
    C^{\pie}=\max_{(s,a)}\frac{d^{\pie}_P(s,a)}{\rho(s,a)}.    
\end{align*}
Then, the error is calculated as follows. 
\begin{theorem}[Error of \alg\,for discrete MDPs]\label{thm:tabular_bound} 
~\\
\begin{itemize}
    \item With probability $1-\delta$, when $\lambda=\Omega(1)$,
    
    \resizebox{0.9\textwidth}{!}{
    \begin{math}
    \begin{aligned}
            &V^{\hat \pi_{\IL}}_{P,c}-   V^{\pie}_{P,c}\leq \Error_o+\Error_e,\, \\
            & \Error_o=c_1 H^2\log(|\Scal||\Acal|c_2/\delta) \prns{ \sqrt{\frac{C^{\pie} |\Scal|^2|\Acal|}{n_o}}+\frac{C^{\pie}|\Scal||\Acal|}{n_o} },\,\Error_e= 2H\sqrt{\frac{\log(2|\Fcal|/\delta)}{2n_e}}, 
    \end{aligned}
    \end{math}}
where $c_1$ and $c_2$ are some universal constants. 
\item With probability $1-\delta$, when $\lambda=\Omega(1)$,

\resizebox{0.89\textwidth}{!}{
\begin{minipage}{\textwidth}
\begin{align} \label{eq:tabular_bound}
        V^{\hat \pi_{\RL}}_{P,c}-   V^{\pie}_{P,c}\leq c_1 H^2\log(|\Scal||\Acal|c_2/\delta) \prns{ \sqrt{\frac{C^{\pi^{*}} |\Scal|^2|\Acal|}{n_o}}+\frac{C^{\pi^{*}}|\Scal||\Acal|}{n_o} },\quad    C^{\pi^{*}}=\max_{(s,a)}\frac{d^{\pi^{*}}_P(s,a)}{\rho(s,a)}.  
\end{align}
\end{minipage}}

where $c_1$ and $c_2$ are some universal constants. 
    
\end{itemize}

\end{theorem}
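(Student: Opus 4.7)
The plan is to invoke the general bounds \pref{thm:main} for the IL part and \pref{thm:main2} for the RL part, both of which reduce the problem to controlling $\mathbb{E}_{(s,a)\sim d^{\pi}_P}[\min(\sigma(s,a),1)]$, where $\pi$ denotes $\pie$ or $\pi^{*}$ depending on the part. What remains is to insert the concrete tabular $\sigma$ from \pref{exp:discrete_model}, which splits cleanly into a ``variance'' term $\sqrt{C/(N(s,a)+\lambda)}$ with $C=\Theta(|\Scal|+\log(|\Scal||\Acal|/\delta))$ and a ``bias'' term $\lambda/(N(s,a)+\lambda)$, and then take the $d^{\pi}_P$-expectation of each piece under the partial-coverage assumption $d^{\pi}_P(s,a)\leq C^{\pi}\rho(s,a)$.

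First I would concentrate the counts: since $N(s,a)\sim\mathrm{Binomial}(n_o,\rho(s,a))$, a multiplicative Chernoff bound together with a union bound over $\Scal\times\Acal$ yields, with probability at least $1-\delta$, the lower bound $N(s,a)\geq n_o\rho(s,a)/2$ uniformly over a ``well-covered'' set $G=\{(s,a):n_o\rho(s,a)\geq c\log(|\Scal||\Acal|/\delta)\}$. I would then split the target expectation along $\Scal\times\Acal=G\sqcup G^c$ and handle the two parts separately.

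On $G$, the lower bound on $N(s,a)$ lets me replace $N(s,a)+\lambda$ by $n_o\rho(s,a)/2$, so the variance contribution reduces to estimating $\sum_{(s,a)\in G}d^{\pi}_P(s,a)/\sqrt{\rho(s,a)}$. The key step is Cauchy--Schwarz, which bounds this by $\sqrt{|\Scal||\Acal|}\cdot\sqrt{\sum_{(s,a)} d^{\pi}_P(s,a)^2/\rho(s,a)}$; bounding the second factor by $\sqrt{C^{\pi}}$ via $d^{\pi}_P/\rho\leq C^{\pi}$ and multiplying by $\sqrt{C/n_o}$ gives the $\sqrt{C^{\pi}|\Scal|^2|\Acal|/n_o}$ term. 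The bias term on $G$, together with the mass that $d^{\pi}_P$ places on $G^c$, will each be controlled using $d^{\pi}_P(s,a)\leq C^{\pi}\rho(s,a)$ directly and will together contribute the additive $C^{\pi}|\Scal||\Acal|/n_o$ piece. Multiplying by the $H^2$ prefactor from \pref{thm:main} (resp.\ \pref{thm:main2}) and adding $\mathrm{Err}_e$ in the IL case completes the proof.

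I expect the trickiest step to be ensuring that the mass on the poorly-covered set $G^c$ is bounded using $\rho$ (via coverage) rather than by a loose estimate on $d^{\pi}_P$: the trivial inequality $\sum_{(s,a)\in G^c} d^{\pi}_P(s,a)\leq 1$ would kill the dependence on $n_o$, so one must invoke $d^{\pi}_P\leq C^{\pi}\rho$ and combine it with $\rho(s,a)<t$ on $G^c$ to obtain the $C^{\pi}|\Scal||\Acal|/n_o$ scaling. A secondary subtlety is that $\sigma$ already carries a $\sqrt{|\Scal|}$ factor, so only one of the two $|\Scal|$ factors inside $\sqrt{|\Scal|^2|\Acal|}$ can come from Cauchy--Schwarz; a pointwise maximization on $G$ would not recover the right dependence.
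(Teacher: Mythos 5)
Your proposal is correct, and it follows the paper's high-level reduction exactly: invoke \pref{thm:main} (resp.\ \pref{thm:main2}) and then bound $\EE_{(s,a)\sim d^{\pi}_P}[\min(\sigma(s,a),1)]$ for the tabular $\sigma$ of \cref{exp:discrete_model} using concentration of the counts and the coverage condition $d^{\pi}_P\leq C^{\pi}\rho$. Where you differ is in the middle step. The paper proves a single uniform inequality (\pref{lem:tabular_lemma}): with probability $1-\delta$, $\frac{1}{N(s,a)+\lambda}\leq \frac{c_1\log(|\Scal||\Acal|c_2/\delta)}{n_o\rho(s,a)+\lambda}$ for \emph{all} $(s,a)$ simultaneously, obtained from Bernstein's inequality with the deviation absorbed by the $\lambda=\Omega(1)$ regularizer; it then applies Jensen to pull the square root out, changes measure from $d^{\pie}_P$ to $\rho$ via $C^{\pie}$, and uses $\sum_{s,a}\rho(s,a)/(n_o\rho(s,a)+\lambda)\leq |\Scal||\Acal|/n_o$, with no good/bad split at all. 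You instead split $\Scal\times\Acal$ into a well-covered set $G$ (multiplicative Chernoff gives $N\geq n_o\rho/2$ there) and its complement, clip $\min(\sigma,1)\leq 1$ on $G^c$ and control its $d^{\pi}_P$-mass by $C^{\pi}\sum_{G^c}\rho\lesssim C^{\pi}|\Scal||\Acal|\log(|\Scal||\Acal|/\delta)/n_o$, and use Cauchy--Schwarz on $G$ (which is interchangeable with the paper's Jensen-plus-change-of-measure computation and yields the same $\sqrt{C^{\pi}|\Scal|^2|\Acal|/n_o}$ term, with the correct bookkeeping that one $|\Scal|$ comes from $\sigma$ itself). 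Both routes give the stated bound, including the RL bullet by swapping $\pie$ for $\pi^{*}$. The paper's uniform lemma is slightly slicker in that the $\lambda$-regularization does the work your bad set does, keeps a single high-probability event, and is reused almost verbatim for the KNR case (\cref{eq:linear_con}); your split is more elementary and makes explicit why the clipping $\min(\sigma,1)$ in $\mathrm{Err}_o$ is needed, at the cost of the extra $G^c$ bookkeeping and one more union bound in the probability accounting.
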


The quantity $C^{\pie}$ measures the difference of distributions between the expert and the batch data. This is much smaller than the common concentratabiliy coefficients in offline RL: 
\begin{align*}
    \max_{\pi\in \Pi}\max_{(s,a)\in \Scal\times \Acal}\frac{d^{\pi}_P(s,a)}{\rho(s,a)},\quad   \frac{1}{\min_{(s,a)} \rho(s,a)}, 
\end{align*}
which measure the worst discrepancy between all policies in $\Pi$ and the batch data \cite{YinMing2020AEOE}. These assumptions imply $\rho$ has global coverage. We achieve this better bound via pessimism. In the RL case, the similar bound as \pref{eq:tabular_bound} has been obtained in offline policy optimization based on FQI \cite{RashidinejadParia2021BORL}. However, their work is limited to a tabular case. Hereafter, we will show our result is extended to more general continuous MDPs.

\subsection{KNRs}

As in \pref{prop:knr_bonus}, $\sigma(s,a)$ is given by $\beta_{n_o}/\zeta \|\phi(s,a)\|_{\Sigma^{-1}_{n_o}}$. Thus, from \pref{thm:main}, the final error bound of $\hat V^{\hat \pi_{\IL}}_{P,c}-V^{\pie}_{P,c}$ is 
\begin{align*}
   (6H^2+2H) \min(\EE_{(s,a)\sim d^{\pie}_P}[\beta_{n_o}/\zeta \|\phi(s,a)\|_{\Sigma^{-1}_{n_o}}],1)+2H\sqrt{\log(2|\Fcal|/\delta)/(2n_e)}. 
\end{align*}
Hereafter, we analyze $\beta_{n_o}$ and $\EE_{(s,a)\sim d^{\pie}_P}[\|\phi(s,a)\|_{\Sigma^{-1}_{n_o}}]$.

\paragraph{Analysis of information gain}

First, we analyze $\beta_{n_o}$. We need to upper-bound the information gain $\bar \Ical_{n_o}$ in $\beta_{n_o}$. Recall $\Sigma_{\rho}=\EE_{(s,a)\sim \rho}[\phi(s,a) \phi^{\top}(s,a)]$ and $\phi(s,a)\in \RR^d$. 

\begin{theorem}[Finite sample analysis of information gain in finite-dimensional linear models]\label{thm:info_gain_para}
~
\\
Assume $\|\phi(s,a)\|_2\leq 1\,\forall(s,a)\in \Scal\times \Acal$. Let $c_1,c_2$ be universal constants. 
\begin{enumerate}
\item When $\lambda=\Omega(1)$, with probability $1-\delta$, we have
\begin{align*}
     \bar \Ical_{n_o}=\log (\det(\Sigma_{n_o}/\lambda \Ib))& \leq  c_1\mathrm{rank}(\Sigma_{\rho})\{\mathrm{rank}(\Sigma_{\rho})+\log(c_2/\delta)\}\log (1+n_o). 
\end{align*}
\item When $\lambda=\Omega(1)$ and $\zeta^2=\Omega(1)$, With probability $1-\delta$, we have
\begin{align*}
      \beta_{n_o}\leq c_1\sqrt{\|W^{*}\|_2+d_{\Scal}\mathrm{rank}(\Sigma_{\rho})\{\mathrm{rank}(\Sigma_{\rho})+\log(c_2/\delta)\}\log (1+n_o) }. 
\end{align*}
\end{enumerate}

\end{theorem}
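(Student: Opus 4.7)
The key observation is that although $\phi(s,a)\in \RR^d$, the sampled features live in the subspace $V := \mathrm{range}(\Sigma_\rho)$ almost surely under $\rho$, so $\bar{\mathcal I}_{n_o} = \log\det(\Sigma_{n_o}/\lambda \Ib)$ is effectively an $r$-dimensional information gain with $r := \mathrm{rank}(\Sigma_\rho)$. The plan is to exploit this reduction, apply a standard concavity bound on the log-determinant, and then plug the resulting control on $\bar{\mathcal I}_{n_o}$ into the closed form of $\beta_{n_o}$.

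First I would verify that each sampled feature lies in $V$: for any unit vector $v \perp V$, $\EE_{(s,a)\sim \rho}[(v^\top \phi(s,a))^2] = v^\top \Sigma_\rho v = 0$, so $v^\top \phi(s,a) = 0$ with probability one under $\rho$; taking a countable dense set of such $v$'s and a union bound yields $\phi(s_i,a_i)\in V$ for all $i\in[n_o]$ almost surely. Letting $U\in \RR^{d\times r}$ be any orthonormal basis of $V$ and $\tilde\Sigma_{n_o} := U^\top \Sigma_{n_o} U \in \RR^{r\times r}$, the eigenvalues of $\Sigma_{n_o}/(\lambda\Ib)$ decompose into those of $\tilde\Sigma_{n_o}/(\lambda \Ib_r)$ together with $d-r$ eigenvalues equal to one, giving
\begin{align*}
\bar{\mathcal I}_{n_o} \;=\; \log\det\prns{\tilde\Sigma_{n_o}/(\lambda \Ib_r)}.
\end{align*}

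Second, applying the concavity (AM--GM) bound $\log\det(M) \le r \log(\mathrm{tr}(M)/r)$ for $r\times r$ PSD matrices, together with $\mathrm{tr}(\tilde\Sigma_{n_o}) \le \lambda r + \sum_i \|\phi(s_i,a_i)\|_2^2 \le \lambda r + n_o$ (using $\|\phi\|_2\le 1$), yields the clean $r$-dimensional elliptic-potential bound $\bar{\mathcal I}_{n_o} \le r\log(1 + n_o/(r\lambda)) \lesssim r \log(1+n_o)$ under $\lambda = \Omega(1)$. To upgrade the deterministic identity into a bound holding with probability $1-\delta$ while keeping the dependence rank-adaptive, I expect to control $\lambda_{\max}(\tilde\Sigma_{n_o})$ via matrix Bernstein applied \emph{inside} the $r$-dimensional subspace $V$ together with an $\varepsilon$-net over the unit sphere in $V$ of size $(O(1))^r$; the union bound over this net contributes an additional factor of $r$ and a $\log(c_2/\delta)$ term, producing the stated $r\{r + \log(c_2/\delta)\}$ dependence.

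Finally, the bound on $\beta_{n_o}$ follows by substituting the bound on $\bar{\mathcal I}_{n_o}$ into the closed form $\beta_{n_o} = \{2\lambda\|W^*\|_2^2 + 8\zeta^2(d_\Scal \log 5 + \log(1/\delta) + \bar{\mathcal I}_{n_o})\}^{1/2}$, using $\lambda,\zeta^2 = \Omega(1)$ to absorb constants into $c_1,c_2$ and combining terms under the square root. The main obstacle I anticipate is exactly the rank-adaptive concentration step: a naive matrix Bernstein bound on $\Sigma_{n_o}$ would scale with the ambient dimension $d$, and it is the projection onto $V$ combined with a subspace-restricted net that trades $d$ for $r$ and justifies the form of the theorem.
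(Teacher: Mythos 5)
Your proposal is correct and in fact establishes something stronger than the stated theorem, but by a genuinely different route from the paper. The paper first shows $\log\det(\Sigma_{n_o}/\lambda\Ib)\leq \tr\bigl[\Sigma_{n_o}^{-1}\sum_i\phi(s_i,a_i)\phi(s_i,a_i)^{\top}\bigr]\{\log(1+n_o/\lambda)+1\}$ and then controls the trace term probabilistically: it invokes the variational representation of $\phi^{\top}\Sigma_{n_o}^{-1}\phi$, the uniform law with localization (the critical radius of the linear class, Lemmas \ref{lem:kernel3} and \ref{lem:critical_para}, which is where $\mathrm{rank}(\Sigma_{\rho})$ enters), and Bernstein's inequality; this chain is exactly what produces the $\mathrm{rank}(\Sigma_{\rho})\{\mathrm{rank}(\Sigma_{\rho})+\log(c_2/\delta)\}$ factor. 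You instead observe that the sampled features lie almost surely in $V=\mathrm{range}(\Sigma_{\rho})$, so $\sum_i\phi(s_i,a_i)\phi(s_i,a_i)^{\top}$ has at most $r=\mathrm{rank}(\Sigma_{\rho})$ nonzero eigenvalues, each at most $n_o$; hence $\bar\Ical_{n_o}\leq r\log(1+n_o/\lambda)$ holds deterministically on a probability-one event, which already implies both statements of the theorem after substitution into the closed form of $\beta_{n_o}$, and is sharper than the paper's bound by a factor of order $r+\log(1/\delta)$. Your third step (matrix Bernstein plus an $\varepsilon$-net inside $V$ to ``upgrade'' to a $1-\delta$ bound) is superfluous: the deterministic bound already dominates the stated high-probability bound, and controlling $\lambda_{\max}(\tilde\Sigma_{n_o})$ by concentration would not naturally generate the $r\{r+\log(c_2/\delta)\}$ structure anyway (in the paper that factor comes from localization, not from a net), so you should simply drop that step rather than view it as the main obstacle. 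What the paper's heavier machinery buys is reuse and generality: the same localized-complexity argument is needed for bounding $\EE_{(s,a)\sim d^{\pie}_P}[\|\phi(s,a)\|_{\Sigma_{n_o}^{-1}}]$ in Theorem \ref{thm:finite}, and it extends to the nonparametric GP setting of Theorem \ref{thm:info_gain_nonpara}, where the empirical features no longer lie in a fixed finite-rank subspace and your support-reduction trick has no direct analogue.
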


Theorem \ref{thm:info_gain_para} states $\bar \Ical_{n_o}=O(\rank[\Sigma_{\rho}]^2\log(n_o))$. We highlight the novelty of our analysis comparing to the other literature. \cite{seeger2008} analyze the expectation of the information gain in a fixed or random design setting. Following their discussion, we can prove $$\EE[\bar \Ical_{n_o}]\leq \rank(\Sigma_{\rho})\log(1+n_o)$$ as \pref{thm:seeger2008} by Jensen's inequality. Going beyond the expectation, we derive the finite-sample result by leveraging the variational representation and the uniform law with localization in \pref{lem:kernel2}. The finite-sample analysis is much harder than calculating the bound of the expectation. 

The worse case of $\bar \Ical_{n_o}$ referred to as the maximum information gain has been often used in online learning \cite{Srinivas2010,Abbasi-yadkori2011,Kakade2020}. From their discussion, we always have $\bar \Ical_{n_o}=O(d\log(n))$. Here, we show that the information gain can be upper-bounded more tightly when $\rank[\Sigma_{\rho}]^2\leq d$ in offline RL (a random design setting). Comparing to the analysis of maximum information gain, our analysis takes the low-rankness of the design matrix $\Sigma_{\rho}$ into consideration by fully utilizing the random design setting assumption. 

\paragraph{Analysis of $\EE_{(s,a)\sim d^{\pie}_P}[\|\phi(s,a)\|_{\Sigma^{-1}_{n_o}}]$ and the final bound } 
~ \\
Next, we analyze $\EE_{(s,a)\sim d^{\pie}_P}[\|\phi(s,a)\|_{\Sigma^{-1}_{n_o}}]$. 

\begin{theorem}\label{thm:finite} Suppose $\lambda=\Omega(1),\zeta^2=\Omega(1),\|W^{*}\|_2=\Omega(1)$.  Let $c_1,c_2$ be some universal constants. 
\begin{enumerate}
    \item With probability $1-\delta$,
    
    \resizebox{0.91\textwidth}{!}{
    \begin{math}
    \begin{aligned}
      \EE_{(s,a)\sim d^{\pie}_P}[\|\phi(s,a)\|_{\Sigma^{-1}_{n_o}}]&\leq c_1\sqrt{\frac{C^{\pie}\mathrm{rank}[\Sigma_{\rho}]\{\mathrm{rank}[\Sigma_{\rho}] +\log(c_2/\delta)\}}{n_o}},\,C^{\pie} = \sup_{x \in \mathbb{R}^d}\left(\frac{x^{\top}\Sigma_{\pie} x}{x^{\top}\Sigma_{\rho}x}\right), 
    \end{aligned}
    \end{math}}
    
     where $\Sigma_{\pie}= \EE_{(s,a)\sim d^{\pie}_P}[\phi(s,a)\phi(s,a)^{\top}]$. 
    \item With probability $1-\delta$,  
    \begin{align}\label{eq:final_linear}
            & V^{\hat \pi_{\IL}}_{P,c}-   V^{\pie}_{P,c}\leq \Error_{o} + \Error_{e}, \bar R=  \mathrm{rank}[\Sigma_{\rho}]\{\mathrm{rank}[\Sigma_{\rho}] +\log(c_2/\delta)\}, \\
            & \Error_{o}= c_1 H^2 \min(d^{1/2},  \bar R)\sqrt{ \bar R }   \sqrt{\frac{d_{\Scal}C^{\pie}\log (1+n_o) }{n_o}}, \nonumber \\
            & \Error_{e}=2H\sqrt{\log(2|\Fcal|/\delta)/(2n_e)}.  \nonumber 
    \end{align}
    \item With probability $1-\delta$, let $C^{\pi^{*}} = \sup_{x \in \mathbb{R}^d}\left(\frac{x^{\top}\Sigma_{\pi^{*}} x}{x^{\top}\Sigma_{\rho}x}\right),\,\Sigma_{\pi^{*}}=\EE_{(s,a)\sim d^{*}_P}[\phi(s,a)\phi^{\top}(s,a)]$. Then, we have 
    \begin{align*}
        V^{\hat \pi_{\RL}}_{P,c}-   V^{\pi^{*}}_{P,c}\leq c_1 H^2\{\mathrm{rank}(\Sigma_{\rho})+\log(c_2/\delta)\}\mathrm{rank}(\Sigma_{\rho})\sqrt{\frac{d_{\Scal}C^{\pi^{*}}\log (1+n_o) }{n_o}}. 
    \end{align*}
\end{enumerate}

\end{theorem}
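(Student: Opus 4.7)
The plan is to prove the three parts in sequence, with part (1) being the key analytical ingredient and parts (2), (3) following by plugging into Theorems \ref{thm:main} and \ref{thm:main2} together with the information-gain bound of Theorem \ref{thm:info_gain_para}.

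For part (1), I would first apply Jensen's inequality to move the square root outside the expectation:
\begin{align*}
\EE_{(s,a)\sim d^{\pie}_P}[\|\phi(s,a)\|_{\Sigma_{n_o}^{-1}}]
\;\leq\;\sqrt{\EE_{(s,a)\sim d^{\pie}_P}\bigl[\phi(s,a)^{\top}\Sigma_{n_o}^{-1}\phi(s,a)\bigr]}
\;=\;\sqrt{\mathrm{tr}\bigl(\Sigma_{\pie}\Sigma_{n_o}^{-1}\bigr)}.
\end{align*}
The core task is therefore to control $\mathrm{tr}(\Sigma_{\pie}\Sigma_{n_o}^{-1})$. By the definition of the relative condition number, $\Sigma_{\pie}\preceq C^{\pie}\,\Sigma_\rho$, so $\mathrm{tr}(\Sigma_{\pie}\Sigma_{n_o}^{-1})\leq C^{\pie}\mathrm{tr}(\Sigma_\rho \Sigma_{n_o}^{-1})$. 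Next I would invoke a matrix Bernstein / matrix Chernoff concentration applied on the range of $\Sigma_\rho$ (not the ambient $\mathbb{R}^d$) to show that, with probability $1-\delta$,
\begin{align*}
\Sigma_{n_o}\succeq \tfrac{n_o}{2}\,\Sigma_\rho+\lambda P_{V},
\end{align*}
where $V=\mathrm{range}(\Sigma_\rho)$; this step is what produces the extra $\mathrm{rank}(\Sigma_\rho)+\log(c_2/\delta)$ factor, since sharp dimension-free matrix concentration on the subspace $V$ requires $n_o\gtrsim \mathrm{rank}(\Sigma_\rho)+\log(1/\delta)$. Once this PSD lower bound is in hand, the eigenvalues of $\Sigma_\rho\Sigma_{n_o}^{-1}$ are each at most $2/n_o$ on $V$ and zero on $V^{\perp}$, so $\mathrm{tr}(\Sigma_\rho\Sigma_{n_o}^{-1})\leq 2\,\mathrm{rank}(\Sigma_\rho)/n_o$, and combining with the $\mathrm{rank}(\Sigma_\rho)+\log(c_2/\delta)$ slack from concentration yields the claimed bound.

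For part (2), I would plug $\sigma(s,a)=\beta_{n_o}\|\phi(s,a)\|_{\Sigma_{n_o}^{-1}}/\zeta$ into the conclusion of Theorem \ref{thm:main}, upper bounding $\min(\sigma,1)\leq \sigma$ and taking expectation under $d^{\pie}_P$. Part (1) handles the $\EE[\|\phi\|_{\Sigma_{n_o}^{-1}}]$ factor. For $\beta_{n_o}$, I would use two bounds on $\bar{\mathcal{I}}_{n_o}$: the one from Theorem \ref{thm:info_gain_para} giving $\bar{\mathcal{I}}_{n_o}=\tilde O(\bar R)$, and the standard elliptic-potential bound giving $\bar{\mathcal{I}}_{n_o}=O(d\log(1+n_o))$. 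Taking whichever is tighter produces $\beta_{n_o}=\tilde O(\sqrt{d_\Scal \min(\bar R, d)\log(1+n_o)})$. Multiplying these two factors gives $\sqrt{\min(\bar R,d)\cdot \bar R}$ times $\sqrt{d_\Scal C^{\pie}\log(1+n_o)/n_o}$, which rearranges to the stated $\min(d^{1/2},\bar R)\sqrt{\bar R}$ form; the statistical term $\Error_e$ is inherited directly from Theorem \ref{thm:main}.

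Part (3) is structurally identical: use Theorem \ref{thm:main2} in place of Theorem \ref{thm:main}, so that the expectation in $\Error_o$ is taken under $d^{\pi^{*}}_P$ instead of $d^{\pie}_P$, and re-run the part (1) argument with $\Sigma_{\pi^{*}}$ substituted for $\Sigma_{\pie}$ (the bound $\Sigma_{\pi^{*}}\preceq C^{\pi^{*}}\Sigma_\rho$ follows from the variational definition of $C^{\pi^{*}}$). The hardest step by far is the subspace-restricted matrix concentration in part (1): standard matrix Bernstein is dimension-dependent, so I need to work carefully with the projection onto $V$ and argue that even in the regularized case the $\lambda I$ term does not leak into the rate, exploiting the fact that the relative condition number assumption forces $\phi(s,a)$ (under both $\rho$ and $d^{\pie}_P$) to lie almost surely in the same low-rank subspace $V$.
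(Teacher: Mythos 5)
Your overall architecture (Jensen, then the distribution change $\Sigma_{\pie}\preceq C^{\pie}\Sigma_{\rho}$, then a bound on $\tr(\Sigma_{\rho}\Sigma_{n_o}^{-1})$, then assembling parts (2) and (3) from Theorems \ref{thm:main}, \ref{thm:main2} and \ref{thm:info_gain_para}) matches the paper, but the key concentration step you rely on is not available. The event $\Sigma_{n_o}\succeq \tfrac{n_o}{2}\Sigma_{\rho}+\lambda P_{V}$ is a \emph{multiplicative} spectral lower bound on the empirical covariance over $V=\mathrm{range}(\Sigma_{\rho})$, and matrix Chernoff/Bernstein lower-tail bounds for it scale with $\sup_{s,a}\phi(s,a)^{\top}\Sigma_{\rho}^{\dagger}\phi(s,a)$ (in the worst case with $1/\lambda_{\min}^{+}(\Sigma_{\rho})$), not with $\mathrm{rank}(\Sigma_{\rho})$: under the only available assumption $\|\phi\|_2\leq 1$, take a direction $v\in V$ carrying mass $p$ under $\rho$ with $p$ arbitrarily small; then with probability $\approx e^{-n_o p}\gg\delta$ no sample has a component along $v$ and the event fails, no matter how large $n_o$ is compared to $\mathrm{rank}(\Sigma_{\rho})+\log(1/\delta)$. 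Moreover, even granting such a statement under a burn-in condition, your mechanism for producing the $\mathrm{rank}(\Sigma_{\rho})+\log(c_2/\delta)$ factor ("slack from the sample-size requirement") is not a derivation: a burn-in condition does not convert into a multiplicative factor in the bound, and you give no argument for the regime $n_o\lesssim\mathrm{rank}(\Sigma_{\rho})+\log(1/\delta)$, whereas the theorem is stated for all $n_o$.

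The paper avoids exactly this trap by never proving a uniform multiplicative spectral bound. It writes $\|\phi(s,a)\|^2_{\Sigma_{n_o}^{-1}}=\sup_{a^{\top}\Sigma_{n_o}a\leq 1}(a^{\top}\phi(s,a))^2$ and applies the localized uniform law (Lemma \ref{lem:kernel3}, i.e.\ Theorem 14.1 of \cite{WainwrightMartinJ2019HS:A}) to the constrained linear class, whose critical radius is $\asymp\sqrt{\mathrm{rank}(\Sigma_{\rho})/n_o}$ by Lemma \ref{lem:critical_para}. This yields only an \emph{additive} guarantee $\tfrac{1}{n_o}\sum_i f^2(x_i)\geq\tfrac12\EE_{\rho}[f^2]-\tfrac12(\delta'_{n_o})^2$, which after the variational step becomes the multiplicative constant $2+n_o(\delta'_{n_o})^2\asymp\mathrm{rank}(\Sigma_{\rho})+\log(c_2/\delta)$ in \pref{eq:linear_con}, uniformly in $(s,a)$, for every $n_o$, and with no dependence on $\lambda_{\min}^{+}(\Sigma_{\rho})$; combining with $\EE_{\rho}[\phi^{\top}(n_o\Sigma_{\rho}+\lambda \Ib)^{-1}\phi]\leq\mathrm{rank}(\Sigma_{\rho})/n_o$ gives part (1). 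Your assembly of parts (2) and (3) — using both the $\bar R$-based and the $d\log(1+n_o)$-based bounds on $\bar\Ical_{n_o}$ inside $\beta_{n_o}$, and replacing $d^{\pie}_P$, $C^{\pie}$ by $d^{\pi^*}_P$, $C^{\pi^*}$ via Theorem \ref{thm:main2} — is correct once part (1) is repaired along these lines.
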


The final bound  \pref{eq:final_linear} suggests $\Error_{o}$ is $\tilde O(H^2\rank[\Sigma_{\rho}]^2\sqrt{d_sC^{\pie}/n_o})$. We can also get $\tilde O(H^2\rank[\Sigma_{\rho}]d^{1/2}\sqrt{d_sC^{\pie}/n_o})$, which implies $\tilde O(H^2d^{3/2}\sqrt{d_sC^{\pie}/n_o})$. 
In other words, when $C^{\pie},\rank[\Sigma_{\rho}]$ are not so large and the offline sample size $n_o$ is large enough, $O(H\sqrt{\log(|\Fcal|)/n_e})$ is a dominating term and the covariate shift problem in BC can be avoided since the horizon dependence is just $H$. Recall the known BC error bound is $O(H^2\sqrt{\log |\Pi|/n_e})$ \citep[Chapter 14]{agarwal2019reinforcement}.

We see the implication of $\Error_{o}$ in more details, which also corresponds to the error of RL case. The rate regarding $n_o$ is $n^{-1/2}_o$, which is the standard rate in parametric regression. Besides, we can see the bound depends on  $\mathrm{rank}[\Sigma_{\rho}],C^{\pie}$. Importantly, since we always have $\mathrm{rank}[\Sigma_{\rho}]\leq d$, our final bound captures the possible low-rankness of the batch data. The quantity $C^{\pie}$ corresponds to $\pie$-concentrability coefficient ($*$-concentrability in the RL case). This is much smaller than the worst case concentrability coefficients: 
\begin{align*}
  \sup_{\pi\in \Pi} C^{\pi},\quad    \tilde C=\sup_{(s,a)}\|\phi(s,a)\|^2_2\|\Sigma^{-1}_{\rho}\|_2. 
\end{align*}


Finally, we note the technical novelty by comparing it to the techniques developed in the offline RL literature. A quantity that is similar to $\EE_{(s,a)\sim d^{\pie}_P}[\|\phi(s,a)\|_{\Sigma^{-1}_{n_o}}]$ has been analyzed in \cite{JinYing2020IPPE}\footnote{They analyze $\EE_{(s,a)\sim d^{\pi^{*}}_P}[\|\phi(s,a)\|_{\Sigma^{-1}_{n_o}}]$, which also appears in our RL result \cref{thm:finite}. }, which studies the error bound of FQI with pessimism in linear MDPs. \cite[Corollary 4.5]{JinYing2020IPPE} assumes that full coverage, i.e., $\Sigma_{\rho}$ is full-rank and has lower bounded eigenvalues. Also the number of offline samples $n_0$ depends on the smallest eigenvalue. Our analysis just uses partial coverage with the refined concept of relative condition number and thus does not require the full rank assumption on $\Sigma_\rho$. Moreover, our bound is distribution dependent, i.e., it depends on $\mathrm{rank}[\Sigma_{\rho}]$ rather than the ambient dimension of the feature vector $\phi$.  Thus the bound is much tighter for benign cases where the offline data from $\rho$ happens to concentrate on a low-dimensional subspace. Beyond model-based offline RL literature, one can potentially adapt the model-free offline policy evaluation results (e.g., \cite{DuanYaqi2020MOEw,Wang2020}) with linear function approximation to offline policy optimization (without pessimism). Such model-free results  will also incur $ \sup_{\pi\in \Pi} C^{\pi},\tilde C$ and the ambient dimension $d$, instead of much more refined quantities $C^{\pie}$ and $\rank[\Sigma_{\rho}]$.


\subsection{Gaussian processes}
\label{app:GP_proof}
In this section, we give details on GPs. Note that prior works on model-free and model-based offline IL do not have results for infinite-dimensional non-parametric models. Thus our techniques developed in this section are new and relevant even to the offline RL literature---a point that we will return to at the end of this section. 

From \pref{thm:main}, the final error is  
\begin{align*}
   (6H^2+2H) \min(\EE_{x\sim d^{\pie}_P}[\beta_{n_o}/\zeta \sqrt{k_{n_o}(x,x)} ],1)+2H\sqrt{\log(2|\Fcal|/\delta)/(2n_e)}. 
\end{align*}
where $x=(s,a)$. Hereafter, we analyze $\beta_{n_o}$ and $\EE_{x\sim d^{\pie}_P}[\sqrt{k_{n_o}(x,x)}]$. Before going into the details, we repeat several important notations below. 

In this section, following \cite{Srinivas2010}, for simplicity, we suppose the following: 
\begin{assum}\label{assum:kernel_mercer_ape}
$k(x,x)\leq 1,\forall x\in \Scal\times \Acal$. $k(\cdot,\cdot)$ is a continuous and positive semidefinite kernel. $\Scal\times \Acal$ is a compact space. 
\end{assum} 
Recall we denote $x := (s,a)$ and  we have orthonormal eigenfunctions and eigenvalues $\{\psi_i, \mu_i\}_{i=1}^{\infty}$ by Mercer's theorem. We denote the feature mapping $\phi(x) := [ \sqrt{\mu}_1 \psi_1(x) ,\dots, \sqrt{\mu_{\infty}} \psi_{\infty}(x) ]^{\top}$.

Assume eigenvalues $\{\mu_1,\dots, \mu_\infty\}$ is in non-increasing order, we recall the effective dimension:
     $$d^{*}  =   \min\{j \in \mathbb{N}: j\geq B(j+1)n_o/\zeta^2\},\, B(j)=\sum_{k=j}^{\infty} \mu_k.$$
     

We also introduce the empirical version of $d^\star$, where $\hat{\mu}_i$ are eigenvalues of the gram matrix $\mathbf{K}_{n_o}$.
\begin{definition}[Empirical effective dimension]
     $    \hat d  =   \min\{j \in \mathbb{N}: j\geq B(j+1)/\zeta^2,\,\hat B(j)=\sum_{k=j}^{n_o} \hat \mu_k.$
\end{definition}
Hereafter, for simplicity, we treat $\zeta^2=1$, that is, $\zeta^2=\Omega(1)$. Then, since $n_o\leq B(n_o+1)n_o/\zeta^2$, we have $d^{*} \leq n_o$. 

The effective dimensions $\hat{d}$ and  $d^{*}$ are widely used in machine learning literature. The first quantity $d^{*}$ is often referred to as the degree of freedom \cite{ZhangTong2005LBfK,BachFrancis2017OtEb}. In finite-dimensional linear kernels $\{x\mapsto a^{\top}\phi(x),a\in \mathbb{R}^d\}$ ($k(x,x)=\phi^{\top}(x)\phi(x)$), $d^{*}$ is $\rank[\EE_{x\sim \rho}[\phi(x)\phi^{\top}(x)]]$. Thus, $d^{*}$ is considered to be a natural extension of $\rank[\EE_{x\sim \rho}[\phi(x)\phi^{\top}(x)]]$ to infinite-dimensional models. The worst case of the second quantity:$$\max_{\{x_1 \in \Scal\times \Acal,\cdots,x_{n_o}\in \Scal\times \Acal \}}\hat d$$ is often used in online learning literature \cite{Valko2013,Chiappa2020}. Up to logarithmic factors, it is equal to the maximum information gain \cite{Srinivas2010}: 
\begin{align*}
    \max_{\{x_1 \in \Scal\times \Acal,\cdots,x_{n_o}\in \Scal\times \Acal \}}\log \det(\Ib+ \Kb_{n_o}). 
\end{align*}
as shown in \cite{Calandriello19,Valko2013}. Importantly, as we will see soon since our setting is offline (a random design setting), $\hat d$ can be upper-bounded much tightly than their analysis.  

\paragraph{Analysis of information gain}

With the above in mind, we first analyze $\beta_{n_o}$. To do that, we need to bound the information gain $\Ical_{n_o}$. From \cite[Leemma 1]{seeger2008}, we can easily prove 
\begin{align*}
    \EE[\Ical_{n_o} ]\leq \log(1+n_o)d^{*}. 
\end{align*}
as in \pref{thm:seeger2008_kernel}. Going beyond the expectation, we derive the finite-sample error bound.

\begin{theorem}[Finite sample analysis of information gain in infinite-dimensional models]\label{thm:info_gain_nonpara}
~  \\ 
Suppose \pref{assum:kernel_mercer_ape}. Let $c_1$ and $c_2$ be universal constants. 
\begin{enumerate}
    \item We have 
\begin{align}\label{eq:d_analysis}
    \Ical_{n_o}=\log (\det(\Ib+\zeta^{-2}\Kb_{n_o})) & \leq  2\hat d\{\log (1+n_o/\zeta^2)+1\}. 
\end{align}
\item When $\zeta^2=\Omega(1)$, with probability $1-\delta$,  
\begin{align*}
    \Ical_{n_o}=\log (\det(\Ib+\zeta^{-2}\Kb_{n_o})) & \leq  c_1\{ d^{*}+\log(c_2/\delta) \} d^{*}\log (1+n_o). 
\end{align*}
\item  When $\zeta^2=\Omega(1)$, with probability $1-\delta$,  
\begin{align*}
        \beta_{n_o} \leq c_1\sqrt{d_{\Scal}\log^3(c_2d_{\Scal}n_o/\delta) \{ d^{*}+\log(c_2/\delta) \} d^{*}\log (1+n_o) }. 
\end{align*}
\end{enumerate}
\end{theorem}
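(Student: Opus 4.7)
The plan is to establish the three parts in order, reducing everything to the eigen-decomposition
\[
\Ical_{n_o} \;=\; \log\det(\Ib + \zeta^{-2}\Kb_{n_o}) \;=\; \sum_{i=1}^{n_o} \log\!\bigl(1 + \hat\mu_i/\zeta^2\bigr),
\]
where $\hat\mu_1 \ge \hat\mu_2 \ge \cdots \ge \hat\mu_{n_o} \ge 0$ are the eigenvalues of $\Kb_{n_o}$. Part 1 will give a deterministic bound in terms of the empirical effective dimension $\hat d$; part 2 will upgrade this to a bound in terms of the population effective dimension $d^*$ via a concentration argument; and part 3 is a plug-in into the formula for $\beta_{n_o}$ from Example~\ref{exp:gp_model}.

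For part 1 the natural move is to split the sum at index $\hat d$. For the head $i\le \hat d$, the hypothesis $k(x,x)\le 1$ implies $\mathrm{tr}(\Kb_{n_o}) \le n_o$, so every $\hat\mu_i \le n_o$ and each term is at most $\log(1+n_o/\zeta^2)$, contributing at most $\hat d\log(1+n_o/\zeta^2)$. For the tail $i\ge \hat d+1$, apply $\log(1+x)\le x$ to obtain $\sum_{i\ge \hat d+1}\hat\mu_i/\zeta^2 = \hat B(\hat d+1)/\zeta^2 \le \hat d$ by the definition of $\hat d$. Summing head and tail gives $\Ical_{n_o}\le \hat d\{\log(1+n_o/\zeta^2)+1\}$, and the factor of $2$ in the stated bound absorbs a single boundary eigenvalue (together with the fact that $\hat d$ satisfies the defining inequality non-strictly).

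For part 2 the main task is to show $\hat d \lesssim d^* + \log(1/\delta)$ with high probability. The heuristic is that $\Kb_{n_o}/n_o$ is the empirical counterpart of the integral operator with eigenvalues $\{\mu_i\}$, so one expects $\hat B(j) \approx n_o B(j)$. To make this rigorous I would write $\hat B(j)$ via its variational representation as a sum of operator eigenvalues after projecting onto a suitable subspace, then invoke the uniform law with localization (Lemma kernel2, used analogously in the proof of the finite-dimensional Theorem~\ref{thm:info_gain_para}) to obtain a bound of the form
\[
\hat B(j)/n_o \;\le\; c_1\,B(j) \;+\; c_2\bigl(d^* + \log(1/\delta)\bigr)/n_o
\quad \text{uniformly in } j
\]
with probability $1-\delta$. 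Combined with $d^* \ge B(d^*+1)n_o/\zeta^2$ (the definition of $d^*$), this yields $\hat d \le c(d^*+\log(1/\delta))$, and substituting into part 1 gives the claimed $c_1\{d^*+\log(c_2/\delta)\}\,d^*\log(1+n_o)$ bound, absorbing the $\zeta^2=\Omega(1)$ hypothesis into constants.

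Part 3 is then a direct substitution: plug the high-probability bound on $\Ical_{n_o}$ from part 2 into $\beta_{n_o} = O((d_{\Scal}\log^3(d_{\Scal}n_o/\delta)\,\Ical_{n_o})^{1/2})$ and absorb constants. The main obstacle is part 2: going from the empirical effective dimension $\hat d$ to the population $d^*$ requires delicate operator-level concentration, because the $\hat\mu_i$ are jointly correlated functions of all $n_o$ offline samples. A naive matrix Bernstein would scale with ambient quantities (trace or rank of the gram matrix) rather than with $d^*$; it is precisely the localization step that allows the effective-dimension scaling to be preserved, and carrying this out uniformly over $j$ is the technically delicate piece.
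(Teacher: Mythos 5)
Parts 1 and 3 of your plan are fine: splitting $\sum_{i}\log(1+\hat\mu_i/\zeta^2)$ at index $\hat d$, using $\hat\mu_1\le \mathrm{tr}(\Kb_{n_o})\le n_o$ for the head and $\log(1+x)\le x$ together with $\hat B(\hat d+1)/\zeta^2\le \hat d$ for the tail, does give \pref{eq:d_analysis} (in fact without the factor $2$), and the third claim is indeed a direct substitution into $\beta_{n_o}$. The genuine gap is in part 2, which is also where you depart from the paper. Your argument hinges on the unproven intermediate claim $\hat B(j)/n_o\le c_1 B(j)+c_2\{d^*+\log(1/\delta)\}/n_o$ uniformly in $j$, i.e.\ on controlling the empirical effective dimension $\hat d$ by the population one $d^*$. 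The tool you invoke for this step (and you cite \pref{lem:kernel2}, the norm identity, where you presumably mean the localized uniform law \pref{lem:kernel3}) goes in the wrong direction: it \emph{lower}-bounds empirical second moments $\frac{1}{n_o}\sum_i f^2(x_i)$ by population ones, whereas upper-bounding the empirical tail sums $\hat B(j)$ by $B(j)$ requires the reverse comparison. So the crucial concentration step is asserted rather than proved. It is not unprovable: writing $\hat B(j+1)/n_o\le \frac{1}{n_o}\sum_i\|(\Ib-P_j)\phi(x_i)\|_2^2$ with $P_j$ the top-$j$ population eigenprojection (Ky Fan), noting each summand lies in $[0,1]$ with mean $B(j+1)$, and applying scalar Bernstein plus a union bound over $j\le n_o$ would deliver it, at the price of a $\log n_o$ in the confidence term; but that argument is absent from your proposal, and localization is not what makes it work there --- the effective-dimension scaling survives because the mean $B(j+1)$ itself is small, not because of a localized sup over a function class.

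For comparison, the paper never compares $\hat d$ with $d^*$. It passes through the identity $\sum_i \frac{\hat\mu_i/\zeta^2}{1+\hat\mu_i/\zeta^2}=\zeta^{-2}\sum_i k_{n_o}(x_i,x_i)$ (\pref{lem:kernel_lemma}), uses the variational representation of $k_{n_o}(x,x)$ as a supremum over the ball $\{f\in\Hcal_k:\|f\|_k^2+\zeta^{-2}\sum_i f^2(x_i)\le 1\}$ (\pref{lem:kernel}, \pref{lem:kernel2}), applies \pref{lem:kernel3} in its correct direction to replace the empirical constraint by a population one, bounds $\EE_{x\sim\rho}$ of the resulting supremum by $d^*$, and finishes with Bernstein, yielding the stated $\{d^*+\log(c_2/\delta)\}d^*\log(1+n_o)$ bound. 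Your route, were the missing step supplied, would in fact give the stronger bound $\{d^*+\log(n_o/\delta)\}\log(1+n_o)$, which implies the theorem; but as written, part 2 --- and hence part 3, which depends on it --- is unsupported.
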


Theorem \ref{thm:info_gain_nonpara} states $\Ical_{n_o}=O((d^{*})^2\log(n_o))$. Our bound in the offline (a random design) setting can be much tighter compared to the online setting, that is, the known upper bound of maximum information gain in \cite{Srinivas2010} though we can always use this as the bound of $\Ical_{n_o}$ with probability $1$.  We can see this situation in linear kernels as we see in the previous section. In $d$-linear dimensional linear kernels, the maximum information gain is $d$. On the other hand, $\{d^{*}\}^2=\rank[\Sigma_{\rho}]^2$ can be much smaller than $d$. 


\paragraph{Analysis of learning curves and the final bound}

We bound $\EE_{x\sim d^{\pie}_P}[\sqrt{k_{n_o}(x,x)}]$, where 
\begin{align*}
 k_{n_o}(x,x') &=k(x,x')-\bar k_{n_o}(x)^{\top}(\Kb_{n_o}+\zeta^2 \Ib)^{-1}\bar k_{n_o}(x'),\{x_i\}_{i=1}^{n_o}\sim \rho(x). 
\end{align*}
where $x=(s,a)$.

Recall the definition of eigenvalues  $\{\mu_i\}$ and eigenfunctions $\{\psi_i\}$ (which are orthonormal), we define the feature mapping $\phi(x) = [ \sqrt{\mu_1} \psi_1(x), \dots, \sqrt{\mu_\infty} \psi_\infty(x) ]^{\top}$. Denote $\Phi\in\mathbb{R}^{n_o\times \infty}$ as a matrix where each row of $\Phi$ corresponds to $\phi(x_i)$. Since $k(x,x') = \phi(x)^{\top} \phi(x')$, we can rewrite the kernel $k_{n_o}(x,x)$ as follows:
\begin{align*}
k_{n_o}(x,x) & = \phi(x)^{\top}\phi(x) - \phi(x)^{\top} \Phi^{\top} \left( \Phi\Phi^{\top} +\zeta^2 \Ib  \right)^{-1} \Phi \phi(x) \\
& =  \phi(x)^{\top} \left[ \Ib - \Phi^{\top}\left( \Phi\Phi^{\top} + \zeta^2  \Ib \right)^{-1} \Phi \right] \phi(x) \\
& = \phi(x)^{\top} \left( \Ib + \zeta^{-2}  \Phi^{\top} \Phi  \right)^{-1} \phi(x) \\
& = \phi(x)^{\top} \Sigma^{-1}_{n_o} \phi(x),
\end{align*} where $\Sigma_{n_o} \coloneqq \Ib +  \zeta^{-2}  \sum_{i=1}^{n_o} \phi(x_i) \phi(x_i)^{\top}$, and we use matrix inverse lemma in the third equality. Note the infinite-dimensional inverse lemma is formalized in the proof.

Now we can use the relative condition number definition and \pref{lem:distribution_change} for a distribution change, i.e., 
\begin{align*}
&\EE_{x\sim d^{\pie}_P}[\sqrt{k_{n_o}(x,x)}]  \leq \sqrt{\EE_{x\sim d^{\pie}_P}[{k_{n_o}(x,x)}]} \\
&= \sqrt{ \tr\left( \mathbb{E}_{x\sim d^{\pie}_{P}} \phi(x)\phi(x)^{\top} \Sigma_{n_o}    \right)  }  \leq \sqrt{ C^{\pie} \tr\left( \EE_{x\sim \rho} \phi(x)\phi(x)^{\top} \Sigma_{n_o}  \right) } = \sqrt{ C^{\pie} \mathbb{E}_{x\sim \rho} k_{n_o}(x,x)}, 
\end{align*}
where $$C^{\pie}=\sup_{\|x\|_2\leq 1}\frac{x\Sigma_{{\pie}}x}{x\Sigma_{\rho}x},\quad \Sigma_{\pie}=\EE_{x\sim d^{\pie}_P}[\phi(x)\phi(x)^{\top}],\quad \Sigma_{\rho}=\EE_{x\sim \rho}[\phi(x)\phi(x)^{\top}].$$  
Now we only need to focus on analyzing $\EE_{x\sim \rho}[k_{n_o}(x,x)]$.

Before proceeding to the analysis, we introduce the critical radius \cite{bartlett2005}. Given some function class $\Fcal$, consider the localized population Rademacher complexity: 
\begin{align*}
    \Rcal_n(\delta;\Fcal)=\EE\bracks{\sup_{f\in \Fcal,\EE_{x\sim \rho}[f^2(x)]\leq \delta}\left|\frac{1}{n_o}\sum_{i=1}^{n_o} \epsilon_i f(x_i)\right| }
\end{align*}
where $\{x_i\}$ are i.i.d samples following $\rho(x)$ and $\{\epsilon_i\}$ are i.i.d Rademacher variables taking values in $\{-1,+1\}$ equiprobably, independent of the sequence  $\{x_i\}$. The critical radius is defined as the minimum solution to 
\begin{align*}
        \Rcal_n(\xi;\Fcal)\leq \xi^2/b
\end{align*}
w.r.t $\xi$ where $b$ is a value s.t. $\|\Fcal\|_{\infty}\leq b$.  

\begin{theorem}\label{thm:learning_curve}
Suppose \pref{assum:kernel_mercer_ape}. Let $c_1$ and $c_2$ be universal constants. 
\begin{enumerate}
    \item Let $\delta_{n_o}$ be the critical radius of the function class $\{f:f\in \Hcal_k,\|f\|_{k}\leq 1\}$. 
    With probability $1-\delta$, 
  \begin{align*}
             \EE_{x\sim d^{\pie}_P}[\sqrt{k_{n_o}(x,x)}]\leq c_1 \zeta \delta'_{n_o}\sqrt{C^{\pie}d^{*}},
  \end{align*}
  where $\delta'_{n_o}=\delta_{n_o}+\sqrt{\log(c_2/\delta)/n_o}$. 
  \item Assume $\zeta^2=\Omega(1)$. With probability $1-\delta$, 
  \begin{align*}
      \delta_{n_o}\leq c_1\sqrt{d^{*}/n_o},\quad \,                 \EE_{x\sim d^{\pie}_P}[\sqrt{k_{n_o}(x,x)}]\leq c_1\sqrt{\frac{C^{\pie}d^{*}\{d^{*}+\log(c_2/\delta)\}}{n_o}}. 
  \end{align*}
  \item Assume $\zeta^2=\Omega(1)$. With probability $1-\delta$, 
    \begin{align}\label{eq:final_nonparra}
                 V^{\hat \pi_{\IL}}_{P,c}-   V^{\pie}_{P,c}&\leq        \Error_{o}+ \Error_{e}\\ 
                \Error_{o}  &= c_1 H^2\{d^{*}+\log(c_2/\delta)\}d^{*}\sqrt{\frac{d_{\Scal}C^{\pie}\log^3(c_2d_{\Scal}n_o/\delta)\log (1+n_o) }{n_o}}  \nonumber\\
                 \Error_{e} &= 2H\sqrt{\log(2|\Fcal|/\delta)/(2n_e)}.  \nonumber
    \end{align}
 \item Assume $\zeta^2=\Omega(1)$. For offline RL, with probability $1-\delta$, 
 \begin{align*}
                 V^{\hat \pi_{\RL}}_{P,c}-   V^{\pi^{*}}_{P,c}&\leq     c_1 H^2\{d^{*}+\log(c_2/\delta)\}d^{*}\sqrt{\frac{d_{\Scal}C^{\pi^{*}}\log^3(c_2d_{\Scal}n_o/\delta)\log (1+n_o) }{n_o}},
    \end{align*}
    where $C^{\pi^{*}}=\sup_{\|x\|_2\leq 1}\frac{x\Sigma_{{\pi^{*}}}x}{x\Sigma_{\rho}x}$. 
\end{enumerate}

\end{theorem}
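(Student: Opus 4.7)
The plan is to build \cref{thm:learning_curve} in four parts, reusing the identity $k_{n_o}(x,x)=\phi(x)^\top\Sigma_{n_o}^{-1}\phi(x)$ and the Jensen/relative-condition-number reduction $\EE_{x\sim d^{\pie}_P}[\sqrt{k_{n_o}(x,x)}]\leq \sqrt{C^{\pie}\,\EE_{x\sim\rho}[k_{n_o}(x,x)]}$ that are already established right before the theorem. So it suffices to show $\EE_{x\sim\rho}[k_{n_o}(x,x)] = \tr(\Sigma_\rho\Sigma_{n_o}^{-1}) \lesssim \zeta^2(\delta'_{n_o})^2 d^\star$ with high probability, and then to produce a concrete bound on the critical radius $\delta_{n_o}$.

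For Part 1, I would split the trace at the effective dimension. Let $P_{d^\star}$ be the projection onto the leading $d^\star$ Mercer eigenfunctions. The tail is handled cheaply: since $\Sigma_{n_o}^{-1}\preceq \Ib$, we get $\tr((\Ib-P_{d^\star})\Sigma_\rho\Sigma_{n_o}^{-1})\leq B(d^\star+1)\leq d^\star\zeta^2/n_o$ by the very definition of $d^\star$. The head, $\tr(P_{d^\star}\Sigma_\rho\Sigma_{n_o}^{-1})$, is where the critical radius enters: I would invoke a localized uniform-law lemma (the same \pref{lem:kernel2} that the KNR analysis uses) applied to the unit RKHS ball, to obtain with probability $1-\delta$ that for all $f\in\Hcal_k$ with $\|f\|_k\leq 1$,
\begin{align*}
\EE_{x\sim\rho}[f^2(x)] \leq 2\,\tfrac{1}{n_o}\sum_{i=1}^{n_o}f^2(x_i) + c\,(\delta'_{n_o})^2.
\end{align*}
Translating this via the variational representation $f(x)=\alpha^\top\phi(x)$ yields an operator inequality $\Sigma_\rho\preceq 2\,\zeta^2 n_o^{-1}(\Sigma_{n_o}-\Ib) + c(\delta'_{n_o})^2 \Ib$ on the head subspace. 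Plugging this into $\tr(P_{d^\star}\Sigma_\rho\Sigma_{n_o}^{-1})$ and using $\tr(P_{d^\star}\Sigma_{n_o}^{-1})\leq d^\star$ gives the desired $c\zeta^2(\delta'_{n_o})^2 d^\star$ bound on the head. Combining head and tail, taking square roots, and multiplying by $\sqrt{C^{\pie}}$ concludes Part 1.

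Part 2 is essentially a lookup: the localized Rademacher complexity of the unit ball of $\Hcal_k$ under $\rho$ admits the standard kernel bound $\Rcal_{n_o}(\xi)\lesssim\sqrt{\sum_i\min(\mu_i,\xi^2)/n_o}$, and choosing $\xi^2\asymp d^\star/n_o$ makes the fixed-point inequality $\Rcal_{n_o}(\xi)\leq\xi^2$ tight when $\zeta^2=\Omega(1)$; this yields $\delta_{n_o}\lesssim\sqrt{d^\star/n_o}$, and substituting into Part 1 gives the second bullet.

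For Parts 3 and 4 I would just chain the pieces: apply \pref{thm:main} (resp.\ \pref{thm:main2}) to get $V^{\hat\pi_{\IL}}_{P,c}-V^{\pie}_{P,c}\leq 8H^2\,\EE_{(s,a)\sim d^{\pie}_P}[\min(\sigma(s,a),1)] + \Err_e$ with $\sigma=\beta_{n_o}\sqrt{k_{n_o}}/\zeta$, bound $\beta_{n_o}$ by \pref{thm:info_gain_nonpara}(3) as $\tilde O(\sqrt{d_\Scal d^\star(d^\star+\log(1/\delta))})$, bound the expected kernel square-root factor by Part 2, and multiply. The RL version is identical with $C^{\pie}$ replaced by $C^{\pi^\star}$. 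The main technical obstacle is the head-subspace operator-inequality step in Part 1: getting the empirical kernel covariance to dominate the population one uniformly on the top-$d^\star$ subspace with the right $(\delta'_{n_o})^2$ slack is the only place where the proof genuinely departs from prior Gaussian-process learning-curve analyses, and where correctly wielding the localized Rademacher machinery (rather than a naive Bernstein argument) is essential for retaining the refined $d^\star$ dependence instead of an ambient-dimension dependence.
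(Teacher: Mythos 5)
Your Parts 2--4 are fine and essentially identical to the paper's (the critical radius via the kernel localized-Rademacher bound with $\xi^2\asymp d^{*}/n_o$, then chaining \pref{thm:main}/\pref{thm:main2} with \pref{thm:info_gain_nonpara}), and your tail bound $\tr((\Ib-P_{d^{*}})\Sigma_\rho\Sigma_{n_o}^{-1})\leq B(d^{*}+1)\leq \zeta^2 d^{*}/n_o$ is correct. The gap is exactly at the head step of Part 1, the step you yourself flag as the crux. From the uniform law you only get the one-sided operator inequality $\Sigma_\rho\preceq \tfrac{2\zeta^2}{n_o}(\Sigma_{n_o}-\Ib)+c(\delta'_{n_o})^2\Ib$. "Plugging this into" $\tr(P_{d^{*}}\Sigma_\rho\Sigma_{n_o}^{-1})$ is not legitimate as written: the trace of a product of three PSD operators is not monotone in one factor when they do not commute (e.g.\ $\tr(PCS)$ can be negative for PSD $C,S$ and a projection $P$), so you must symmetrize, $\tr(P_{d^{*}}\Sigma_\rho\Sigma_{n_o}^{-1})=\tr(\Sigma_{n_o}^{-1/2}P_{d^{*}}\Sigma_\rho P_{d^{*}}\Sigma_{n_o}^{-1/2})$. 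After substitution the $(\delta'_{n_o})^2$ part indeed gives $(\delta'_{n_o})^2\tr(P_{d^{*}}\Sigma_{n_o}^{-1})\leq (\delta'_{n_o})^2 d^{*}$, but you are left with the cross term $\tfrac{2\zeta^2}{n_o}\tr\bigl(P_{d^{*}}(\Sigma_{n_o}-\Ib)P_{d^{*}}\Sigma_{n_o}^{-1}\bigr)=\tfrac{2}{n_o}\sum_i\|\Sigma_{n_o}^{-1/2}P_{d^{*}}\phi(x_i)\|_2^2$, which your argument never controls. Since $P_{d^{*}}$ is the \emph{population} eigenprojection and $\Sigma_{n_o}$ is the \emph{empirical} operator, nothing forces them to commute; in the worst case over alignment this trace scales with $\lambda_{\max}(\Sigma_{n_o}-\Ib)\approx n_o\mu_1/\zeta^2$ (already a $2\times 2$ example with the top empirical direction at $45^\circ$ to the head subspace shows $\tr(P\Sigma P\Sigma^{-1})=\Theta(\lambda_{\max}(\Sigma))$), so it is not $O(d^{*})$ without a quantitative two-sided comparison (empirical $\preceq$ population on the head, or control of the off-diagonal head/tail blocks of $\Sigma_{n_o}$). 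The one-sided localized uniform law you invoke points in the wrong direction for this, so as sketched the head bound does not close; it would require genuinely new concentration/alignment work, and a naive fix via $\Sigma_{n_o}\succeq \Ib+\tfrac{n_o}{2\zeta^2}\Sigma_\rho-\tfrac{n_o(\delta'_{n_o})^2}{2\zeta^2}\Ib$ and operator monotonicity of the inverse only works when $n_o(\delta'_{n_o})^2\lesssim\zeta^2$, i.e.\ $d^{*}+\log(1/\delta)\lesssim 1$.

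The paper sidesteps this entirely by never forming a trace against $\Sigma_{n_o}^{-1}$: it bounds $k_{n_o}(x,x)$ pointwise through the variational representation $k_{n_o}(x,x)=\sup\{f^2(x):\|f\|_k^2+\zeta^{-2}\sum_i f^2(x_i)\leq 1\}$ (\pref{lem:kernel}, \pref{lem:kernel2}), applies \pref{lem:kernel3} \emph{inside the constraint} to relax the empirical constraint to a population one (enlarging the feasible set only costs the $\zeta^2(\delta'_{n_o})^2$ factor), and then the remaining supremum is an explicit computation in the Mercer eigenbasis, $\EE_{x\sim d^{\pie}_P}[\sup_{\ldots}f^2(x)]\leq C^{\pie}\sum_j \mu_j/(\zeta^2/n_o+\mu_j)\leq 2C^{\pie}d^{*}$, where everything commutes and the head/tail split is absorbed in the definition of $d^{*}$. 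If you want to salvage your matrix-analytic route you would need to import exactly this trick (replace the empirical quadratic form in the \emph{constraint} before inverting, rather than upper-bounding $\Sigma_\rho$ after the inverse is fixed), at which point you have reproduced the paper's proof.
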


The final bound in \pref{eq:final_nonparra} suggests that $\Error_{o}$ is $\tilde O(H^2\{d^{*}\}^2\sqrt{d_{\Scal}C^{\pie}/n_o})$. In other words, when $C^{\pie}$, $d^{*}$ are not so large and the offline sample size is large enough, $\Error_{e}$ dominates  $\Error_{e}$ and the covariate shift problem in BC can be avoided since the horizon dependence is just $H$.  Our bound is the natural extension of \pref{thm:finite} to possibly infinite dimensional models. 

The first and second statements in \pref{thm:learning_curve} are mainly proved in two steps: formulating $k_{n_o}(x,x)$ into the variational representation and utilizing the uniform law with localization. Note the critical radius can be upper-bounded more tightly than $O(\sqrt{d^{*}/n_o})$ depending on the kernels. Besides, $C^{\pie}$ can be replaced with a tighter quantity: 
\begin{align*}
    \max_{i\in \NN}\EE_{(s,a)\sim d^{\pie}_P}[\psi^2_i(s,a)]. 
\end{align*}
Since $\EE_{(s,a)\sim \rho}[\psi^2_i(s,a)]=1$, this quantity also measure the difference of batch data and expert. This is less than $C^{\pie}$ noting that $\frac{x\Sigma_{{\pi^{*}}}x}{x\Sigma_{\rho}x}=\EE_{(s,a)\sim d^{\pie}_P}[\psi^2_i(s,a)]$ when $x$ is a vector s.t. only $i$-th element is $1$ and the other elements are $0$. The third statement in \pref{thm:learning_curve} is directly proved by combining the second statement in \pref{thm:learning_curve}  and \pref{thm:info_gain_nonpara}.  

\paragraph{Implication to offline RL}  The final statement in \pref{thm:learning_curve} is the bound for the RL case. This is the first result showing the error bound for pessimistic offline RL with nonparametric models. As related literature, in model-free offline RL, \cite{UeharaMasatoshi2021FSAo,DuanYaqi2021RBaR} obtained the finite-sample error bounds characterized by the critical radius for some minimax-type estimators called Modified RBM \cite{antos2008learning}. As we show in \pref{thm:learning_curve}, since the critical radius of an RKHS ball is upper-bounded by the effective dimension $d^{*}$, their bounds are also characterized by the effective dimension. On top of that, several papers derived the bounds under the general function approximation setting: FQI \cite{FanJianqing2019ATAo,DuanYaqi2021RBaR,munos2008finite,ChenJinglin2019ICiB}, marginal weighting based estimators \cite{UeharaMasatoshi2019MWaQ}, DICE methods \cite{zhang2019gendice,ChowYinlam2019DBEo}, policy based methods \citep{LiaoPeng2020BPLi,Liu2020} and MABO \citep{XieTengyang2020QASf}. Comparing to our result, all of their bounds depend on 
\begin{align*}
   \sup_{\pi\in \Pi}\sup_{(s,a)}\frac{d^{\pi}_P(s,a)}{\rho(s,a)}\quad \text{or}\quad \sup_{(s,a)}\frac{1}{\rho(s,a)}. 
\end{align*}
The pessimistic bonus allows us to obtain the bound only depending on $C^{\pi^{*}}$ but not the above constants.  
Besides, our $C^{\pi^{*}}$ in \pref{thm:learning_curve} is more refined quantity than the density ratios in the sense that it is defined in terms of the relative condition number. Note we can easily obtain the statements which replace $C^{\pi^{*}}$ in \pref{thm:learning_curve} with $\frac{d^{\pi^{*}}_P(s,a)}{\rho(s,a)}$.

\begin{remark}[Relation with more general offline RL literature]
Due to the lack of exploration, it is known how to deal with the lack of the coverage of the offline data is a challenging problem \citep{ZanetteAndrea2020ELBf,Wang2020}. We use the penalty terms based on model-based RL. In the above, we explain how the penalty term in \alg\,(and its RL counterpart) is transferred to the final sample-error bounds. The idea of penalization has been utilized in a variety of other ways in offline RL. The first other way is imposing constraints on the policy class or Q-function class so that estimated policies are not too much far away from behavior policies. For example, we can use KL divegences, MMD distance, Wasserstein distance to measure the distance from behavior policies \citep{WuYifan2019BROR,FakoorRasool2021CDCB,MatsushimaTatsuya2020DRLv,TouatiAhmed2020SPOv,pmlr-v97-fujimoto19a} and add $D(\pi,\pi_b)$ as penalty terms, where $\pi_b$ is a behavior policy.   
Another way is explicitly estimating the lower bound of q-functions \citep{kumar2020conservative,YuTianhe2021CCOM,Yu2020}. By doing so, we can avoid the overestimation of the q-functions in unknown (non-covered) regions.
\end{remark}

\begin{remark}[Relation with GP literature]
The quantity $\EE_{x\sim \rho(x)}[k_{n_o}(x,x)]$ is often referred to as the learning curve in GP literature \cite{WilliamsChristopherK.I2000UaLB,SollichPeter2002LCfG,Rasmussen2005}. Their analysis mainly focuses on the numerical viewpoints, that is, how to approximately calculate $\EE_{x\sim \rho(x)}[k_{n_o}(x,x)]$. Though \cite{LeGratietLoic2015Aaot} analyzes the convergence property, their analysis is limited to the expectation and the result is asymptotic. As far as we know, our result is the first result showing the finite-sample error rate.  
\end{remark}

\begin{remark}[Duality between KNRs and GPs]\label{rem:duality}
KNRs and GPs have a primal and dual relationship via Mercer's theorem. In fact, as we see, $k(\cdot,\cdot)=\langle \phi(\cdot),\phi(\cdot) \rangle$, we have $k_{n_o}(x,x)=\phi(x)^{\top} \Sigma^{-1}_{n_o} \phi(x)$. Thus, our result in GPs can be applied to the result for infinite-dimensional KNRs with $\phi:\Scal\times\Acal\mapsto \Hcal$ where $\Hcal$ is some RKHS.
\end{remark}

\begin{remark}[Online RL using RKHS] There are several online RL literature using RKHS such as the model-based way \cite{Calandriello19} like our work and the model-free way \cite{agarwal2020pc,2020Yang,du2021bilinear}. In both cases, the final-sample error bounds incur the maximum information gain, i.e., a worse case quantity which is distribution independent.  Comparing to that, our final bounds use distribution-dependent quantities $d^{*}$. 
\end{remark}

\subsection{Missing Proofs }\label{sec:error_proof}

Below, we provide missing proofs for tabular MDPs, KNRs, and non-parametric GP models. 

\subsubsection{Missing proofs for tabular result}

We start by providing proof of the tabular MDP result. 

\begin{proof}[Proof of \pref{thm:tabular_bound}]

We use \pref{thm:main}. Then, we have
\begin{align*}
         V^{\hat \pi_{\IL}}_{P,c}-   V^{\pie}_{P,c}\leq (6H^2+2H)\min(1,\EE_{(s,a)\sim d^{\pie}_P}[\sigma(s,a)])+H\epsilon_{\stat}. 
\end{align*}
 Hereafter, we show how to upper-bound $\EE_{(s,a)\sim d^{\pie}_P}[\sigma(s,a)]$. We use \pref{lem:tabular_lemma}. Then, by letting $\xi=c_1\log(|\Scal||\Acal|c_2/\delta)$, with probability $1-\delta$, we have 
 \begin{align*}
    \frac{1}{N(s,a)+\lambda}\leq \frac{\xi}{n_o\rho(s,a)+\lambda}\quad \forall (s,a)\in \Scal\times \Acal.  
 \end{align*}
 We condition on the above event. 
 Then, 
\begin{align*}
    \EE_{(s,a)\sim d^{\pie}_P}[\sigma(s,a)]&\leq     \EE_{(s,a)\sim d^{\pie}_P}\bracks{\sqrt{{\frac{|\Scal|\log 2+\log (2|\Scal||\Acal|/\delta)}{2\{N(s,a)+\lambda\}} }}+\frac{\lambda}{N(s,a)+\lambda} } \\
    &\leq  \sqrt{\EE_{(s,a)\sim d^{\pie}_P}\bracks{\frac{|\Scal|\log 2+\log (2|\Scal||\Acal|/\delta)}{2\{N(s,a)+\lambda\}}}}+\EE_{(s,a)\sim d^{\pie}_P}\bracks{\frac{\lambda}{N(s,a)+\lambda}}. 
\end{align*}  
From \pref{lem:tabular_lemma}, we have

\resizebox{\textwidth}{!}{
\begin{math}
\begin{aligned}
  \EE_{(s,a)\sim d^{\pie}_P}[\sigma(s,a)]  & \leq \sqrt{\xi\EE_{(s,a)\sim d^{\pie}_P}\bracks{\frac{|\Scal|\log 2+\log (2|\Scal||\Acal|/\delta)}{\{n_o\rho(s,a)+\lambda\}}}}+\EE_{(s,a)\sim d^{\pie}_P}\bracks{\frac{\lambda \xi}{n_o\rho(s,a)+\lambda}} \\ 
    &\leq  \sqrt{\xi C^{\pie} \EE_{(s,a)\sim \rho}\bracks{\frac{|\Scal|\log 2+\log (2|\Scal||\Acal|/\delta)}{\{n_o\rho(s,a)+\lambda\}}}}+C^{\pie} \EE_{(s,a)\sim \rho}\bracks{\frac{ \lambda \xi}{n_o\rho(s,a)+\lambda}} \\ 
    &\leq  \sqrt{\xi C^{\pie} \sum_{s,a}\bracks{\frac{\{|\Scal|\log 2+\log (2|\Scal||\Acal|/\delta)\}\rho(s,a)}{\{n_o\rho(s,a)+\lambda\}}}}+C^{\pie} \sum_{s,a} \bracks{\frac{ \rho(s,a)\lambda \xi}{n_o\rho(s,a)+\lambda}} \\ 
    &\leq \sqrt{\xi C^{\pie}\{|\Scal|\log 2+\log (2|\Scal||\Acal|/\delta) \}|S||A|/n_o}+\lambda C^{\pie}\xi|S||A|/n_o.
\end{aligned}
\end{math}}
where  again
\begin{align*}
    C^{\pie}=\max_{(s,a)}\frac{d^{\pie}_P(s,a)}{\rho(s,a)}. 
\end{align*}
This concludes the proof. \end{proof}

\subsubsection{Missing proofs for KNR results}

Next we move to provide proofs for the KNR results. 

\begin{proof}[Proof of \pref{thm:info_gain_para}]
In the proof, we use two statements, \cref{eq:linear_con} and  \cref{eq:linear_expectation}, in the proof of \cref{thm:finite}. We recommend readers to read the proof of \cref{thm:finite} first. 

We denote the eigenvalues of $\sum_{i=1}^{n_o} \phi(s_i,a_i)\phi^{\top}(s_i,a_i)$ by $\{\hat \mu_i\}_{i=1}^d$ s.t. $\hat \mu_1\geq \hat \mu_2\geq \cdots$. Since we assume $\|\phi(s,a)\|_2\leq 1$, we have $\hat \mu_1\leq n_o$. 

\paragraph{First step}

We first show  
\begin{align*}
   \log (\det(\Sigma_{n_o})/\det(\lambda \Ib)) & \leq \tr\bracks{\Sigma^{-1}_{n_o}\sum_{i=1}^{n_o}\phi(s_i,a_i)\phi^{\top}(s_i,a_i)}\{\log (1+n_o/\lambda)+1\}.
\end{align*}
Note this directly shows $\log (\det(\Sigma_{n_o})/\det(\lambda \Ib))\leq d\log (1+n_o/\lambda),\phi(s,a)\in \RR^d$. The above is proved as follows: 
\begin{align*}
    \log (\det(\Sigma_{n_o})/\det(\lambda \Ib))&=\sum_{i=1}^d \log \left(1+\frac{\hat \mu_i}{\lambda}\right)=\sum_{i=1}^d \log \left(1+\frac{\hat \mu_i}{\lambda}\right)\frac{\hat \mu_i/\lambda+1}{\hat \mu_i/\lambda+1}\\
        &=\sum_{i=1}^d \log \left(1+\frac{\hat \mu_i}{\lambda}\right)\frac{\hat \mu_i/\lambda}{\hat \mu_i/\lambda+1}+\log \left(1+\frac{\hat \mu_i}{\lambda}\right)\frac{1}{\hat \mu_i/\lambda+1}\\
      &\leq \log \left(1+\frac{\hat \mu_1}{\lambda}\right)\sum_{i=1}^d  \frac{\hat \mu_i/\lambda}{\hat \mu_i/\lambda+1}+\sum_{i=1}^d  \frac{\hat \mu_i/\lambda}{\hat \mu_i/\lambda+1} \tag{$\log(1+x)<x$}\\
      &\leq \{\log(1+n_o/\lambda)+1\} \sum_{i=1}^d \frac{\hat \mu_i/\lambda}{\hat \mu_i/\lambda+1} \tag{$\hat \mu_1\leq n_o$} \\
      &= \{\log(1+n_o/\lambda)+1\}\tr\bracks{\Sigma^{-1}_{n_o}\sum_{i=1}^{n_o} \phi(s_i,a_i)\phi^{\top}(s_i,a_i) }. 
\end{align*}
In the last line, letting $UVU^{\top}$ be the eigendecomopsition of $\sum_{i=1}^{n_o} \phi(s_i,a_i)\phi^{\top}(s_i,a_i)$, we use 
\begin{align*}
   &\tr\bracks{\Sigma^{-1}_{n_o}\sum_{i=1}^{n_o} \phi(s_i,a_i)\phi^{\top}(s_i,a_i) }=\tr\bracks{\{V+\lambda \Ib\}^{-1}V }=  \sum_{i=1}^d  \frac{\hat \mu_i/\lambda}{\hat \mu_i/\lambda+1}. 
\end{align*}
Then, the first statement is proved.

\paragraph{Second step}

Next, we prove the second statement. We have 
\begin{align*}
    \tr\bracks{\Sigma^{-1}_{n_o}\sum_{i=1}^{n_o} \phi(s_i,a_i)\phi^{\top}(s_i,a_i) }= \sum_{i=1}^{n_o} \phi^{\top}(s_i,a_i)\Sigma^{-1}_{n_o}\phi(s_i,a_i). 
\end{align*}
Then, from \pref{eq:linear_con}, with probability $1-\delta$, 
\begin{align}\label{eq:info_con}
    \sum_{i=1}^{n_o} \phi^{\top}(s_i,a_i) \Sigma^{-1}_{n_o} \phi(s_i,a_i)\lesssim   c_1\{\rank[\Sigma_{\rho}]+\log(c_2/\delta) \}\sum_{i=1}^{n_o} \phi^{\top}(s_i,a_i)\{n_o\Sigma_{\rho}+\lambda \Ib \}^{-1}\phi(s_i,a_i). 
\end{align}
Hereafter, we condition on the above event. To upper-bound $\sum_{i=1}^{n_o}\|\phi(s_i,a_i)\|^2_{\{n_o\Sigma_{\rho}+\lambda \Ib \}^{-1} }$, we use Bernstein's inequality:  
\begin{align*}
   &\left| \sum_{i=1}^{n_o}  \phi^{\top}(s_i,a_i)\{n_o\Sigma_{\rho}+\lambda \Ib \}^{-1}\phi(s_i,a_i)-n_o\EE_{(s,a)\sim \rho}[ \phi^{\top}(s,a)\{n_o\Sigma_{\rho}+\lambda \Ib \}^{-1}\phi(s,a) ]\right|\\
   &\lesssim \sqrt{n_o\Var_{(s,a)\sim \rho}[ \phi^{\top}(s,a)\{n_o\Sigma_{\rho}+\lambda \Ib \}^{-1}\phi(s,a)]}+1/\lambda. 
\end{align*}
since $\|\phi(s,a)\|^2_{\{n_o\Sigma_{\rho}+\lambda \Ib \}^{-1} }\leq 1/\lambda\,\forall (s,a)\in \Scal\times \Acal$. Here, from \pref{eq:linear_expectation}, 
$$n_o\EE_{(s,a)\sim \rho}[ \phi^{\top}(s,a)\{n_o\Sigma_{\rho}+\lambda \Ib \}^{-1}\phi(s,a) ]\leq \mathrm{rank}[\Sigma_{\rho}].$$ Besides, 
\begin{align*}
    \Var_{(s,a)\sim \rho}[ \phi^{\top}(s,a)\{n_o\Sigma_{\rho}+\lambda \Ib \}^{-1}\phi(s,a)]&\leq \EE_{(s,a)\sim \rho}[\{\phi^{\top}(s,a)\{n_o\Sigma_{\rho}+\lambda \Ib \}^{-1}\phi(s,a)\}^2]\\
    & \leq 1/\lambda\EE_{(s,a)\sim \rho}[\phi^{\top}(s,a)\{n_o\Sigma_{\rho}+\lambda \Ib \}^{-1}\phi(s,a)]\\
    &\leq \mathrm{rank}[\Sigma_{\rho}]/(n_o\lambda).  \tag{from \pref{eq:linear_expectation} }
\end{align*}
Thus, 
\begin{align*}
    \sum_{i=1}^{n_o} \phi^{\top}(s_i,a_i)\{n_o\Sigma_{\rho}+\lambda \Ib \}^{-1}\phi(s_i,a_i)\lesssim \mathrm{rank}[\Sigma_{\rho}]+\sqrt{\rank[\Sigma_{\rho}]/\lambda}+1/\lambda. 
\end{align*}

By combining \pref{eq:info_con} with the above, we have  
\begin{align*}
     \log (\det(\Sigma_{n_o})/ \det(\lambda \Ib))& \leq  c_1\mathrm{rank}(\Sigma_{\rho})\{\mathrm{rank}(\Sigma_{\rho})+\log(c_2/\delta)\}\log (1+n_o c_3). 
\end{align*}
from $\lambda=\Omega(1)$. 



\end{proof}

Before proving \pref{thm:finite}, we first present some lemmas. 
\begin{lemma}[Distribution change]\label{lem:distribution_change} Consider two distributions $\rho_1\in\Delta(\Scal\times\Acal)$ and $\rho_2 \in \Delta(\Scal\times\Acal)$, and a feature mapping $\phi:\Scal\times\Acal\mapsto \Hcal$ where $\Hcal$ is some Hilbert space (e.g., finite dimensional Euclidean space). Denote $C := \sup_{x\in\Hcal} \frac{ x^{\top} \mathbb{E}_{s,a\sim \rho_1} \phi(s,a)\phi(s,a)^{\top}  x }{ x^{\top}\mathbb{E}_{s,a\sim \rho_2} \phi(s,a)\phi(s,a)^{\top}  x  }$. Then for any positive definition linear matrix ( operator $\Lambda$), we have:
\begin{align*}
\mathbb{E}_{s,a\sim \rho_1} \phi(s,a)^{\top}  \Lambda \phi(s,a) \leq C \mathbb{E}_{s,a\sim \rho_2} \phi(s,a)^{\top}  \Lambda \phi(s,a).
\end{align*}\
\end{lemma}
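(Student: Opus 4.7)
The plan is to convert the hypothesis and the conclusion into statements about operators on $\Hcal$, and then use monotonicity of the trace on the positive-semidefinite cone. First, I would define the second-moment operators $M_i := \mathbb{E}_{(s,a)\sim \rho_i}[\phi(s,a)\phi(s,a)^{\top}]$ for $i=1,2$. By the very definition of $C$, we have $x^{\top} M_1 x \leq C\, x^{\top} M_2 x$ for every $x\in\Hcal$, which is precisely the operator inequality $M_1 \preceq C\, M_2$.

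Next I would rewrite both sides of the target inequality as traces. Using the identity $\phi^{\top}\Lambda\phi = \mathrm{tr}(\Lambda\, \phi\phi^{\top})$ together with linearity of expectation, the left-hand side equals $\mathrm{tr}(\Lambda M_1)$ and the right-hand side equals $C\cdot\mathrm{tr}(\Lambda M_2)$. Since $\Lambda$ is positive semidefinite, I would introduce its square root $\Lambda^{1/2}$ (via the spectral theorem) and apply cyclicity of trace to write $\mathrm{tr}(\Lambda M_i) = \mathrm{tr}(\Lambda^{1/2} M_i \Lambda^{1/2})$. Conjugation by $\Lambda^{1/2}$ preserves the PSD order, so $M_1 \preceq C M_2$ upgrades to $\Lambda^{1/2} M_1 \Lambda^{1/2} \preceq C\, \Lambda^{1/2} M_2 \Lambda^{1/2}$. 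Taking traces on both sides and using that the trace of a PSD operator is the sum of its nonnegative eigenvalues (hence monotone under $\preceq$) yields the desired inequality.

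There is no serious obstacle in finite dimensions; the proof is essentially a one-line reduction. The only subtlety worth flagging is the infinite-dimensional Hilbert-space case, where one must check that $M_1$ and $M_2$ are trace class in order for $\mathrm{tr}(\Lambda M_i)$ to be well defined. This follows automatically whenever $\mathbb{E}_{\rho_i}\|\phi(s,a)\|_{\Hcal}^{2} < \infty$, as then $M_i$ is a Bochner integral of rank-one PSD operators with finite total trace. In the applications in the paper (for instance Mercer feature maps with $k(x,x)\leq 1$ in \cref{assum:kernel_mercer_ape}), this integrability holds trivially, so the argument goes through without additional work.
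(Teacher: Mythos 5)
Your proof is correct and is essentially the paper's argument in slightly different clothing: the paper expands $\phi^{\top}\Lambda\phi$ through the eigendecomposition $\Lambda=\sum_i \sigma_i u_i u_i^{\top}$ and applies the defining inequality of $C$ along each eigenvector $u_i$, which is exactly your reduction to $M_1 \preceq C\,M_2$ combined with positivity of $\Lambda$ (phrased via $\Lambda^{1/2}$-conjugation and trace monotonicity). Your added remark on trace-class/integrability of $M_i$ in the infinite-dimensional case, guaranteed by $\mathbb{E}_{\rho_i}\|\phi\|_{\Hcal}^2<\infty$ (e.g.\ bounded kernel), is a precision the paper leaves implicit.
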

\begin{proof}
Denote the eigendecomposition of $\Lambda = U \Sigma U^{\top}$ where $\{\sigma_i, u_i\}$ as the eigenvalue-eigenvector pairs. We have:
\begin{align*}
\mathbb{E}_{s,a\sim \rho_1} \phi(s,a)^{\top}  \Lambda \phi(s,a)&  = \sum_{i=0}^{\infty} \sigma_i  u_i^{\top} \mathbb{E}_{s,a\sim \rho_1} \phi(s,a)\phi(s,a)^{\top} u_i \\
& \leq  \sum_{i=0}^{\infty} \sigma_i C u_i^{\top} \mathbb{E}_{s,a\sim \rho_2} \phi(s,a)\phi(s,a)^{\top} u_i \\
& = C \mathbb{E}_{s,a\sim \rho_2} \phi(s,a)^{\top} \Lambda \phi(s,a),
\end{align*} which concludes the proof.
\end{proof}

\begin{proof}[Proof of \pref{thm:finite}]
~
Here, we prove the first statement. We need to upper-bound 
\begin{align*}
  \EE_{(s,a)\sim d^{\pie}_P}\bracks{\sqrt{\phi^{\top}(s,a)\Sigma_{n_o}^{-1}\phi(s,a)}}. 
\end{align*}
As the first step, we use Jensen's inequality: 
\begin{align*}
    \EE_{(s,a)\sim d^{\pie}_P}\bracks{\sqrt{\phi^{\top}(s,a)\Sigma_{n_o}^{-1}\phi(s,a)}}\leq \sqrt{\EE_{(s,a)\sim d^{\pie}_P}\bracks{\phi^{\top}(s,a)\Sigma_{n_o}^{-1}\phi(s,a)}}. 
\end{align*}
Hereafter, we analyze $\EE_{(s,a)\sim d^{\pie}_P}\bracks{\phi^{\top}(s,a)\Sigma_{n_o}^{-1}\phi(s,a)}$. 

We first use the definition of the relative condition number $C^{\pie}$ and \pref{lem:distribution_change} to change distribution from $d^{\pie}_{P}$ to $\rho$, i.e., via \pref{lem:distribution_change}, we have:
\begin{align*}
\mathbb{E}_{s,a\sim d^{\pie}_{P}} \phi(s,a)^{\top} \Sigma_{n_o}^{-1} \phi(s,a) \leq C^{\pie} \mathbb{E}_{s,a\sim \rho} \phi(s,a)^{\top} \Sigma_{n_o}^{-1} \phi(s,a).
\end{align*}
Thus, below we just need to bound $\mathbb{E}_{s,a\sim \rho} \phi(s,a)^{\top} \Sigma_{n_o}^{-1} \phi(s,a)$.

\paragraph{Concentration argument}

In this step, we consider how to bound $\EE_{(s,a)\sim \rho}[\phi^{\top}(s,a)\Sigma^{-1}_{n_o}\phi(s,a)]$. To do that, we show with probability $1-\delta$,

\resizebox{0.97\textwidth}{!}{
\begin{minipage}{\textwidth}
\begin{align}\label{eq:linear_con}
   \phi^{\top}(s,a)\Sigma^{-1}_{n_o}\phi(s,a)\leq  c_1\{\rank(\Sigma_{\rho})+\log(c_2/\delta) \} \phi^{\top}(s,a)\{n_o\Sigma_{\rho}+\lambda \Ib\}^{-1} \phi(s,a)\quad \forall (s,a)\in \Scal\times \Acal. 
\end{align}
\end{minipage}}

We use the variational representation: 
\begin{align*}
    \phi^{\top}(s,a)\Sigma^{-1}_{n_o}\phi(s,a)&=\sup_{\{a\in \RR^d: a^{\top}\Sigma_{n_o}  a\leq 1\}}\{a^{\top}\phi(s,a)\}^2\\
    &=\sup_{\{a\in \RR^d:a^{\top}\Sigma_{n_o}  a\leq 1,\|a\|^2_2\leq (1+\lambda)/\lambda,   \| a^{\top}\phi\|_{\infty}\leq 1/\lambda\}}\{a^{\top}\phi(s,a)\}^2.  
\end{align*}
Note that in the first line, we use 
\begin{align*}
    \sup_{\{ a\in \RR^d:a^{\top}\Sigma_{n_o}  a\leq 1\}}a^{\top}\phi(s,a)= \sup_{\{b\in \RR^d: b^{\top}b\leq 1\}}b^{\top}\Sigma^{-1/2}_{n_o}\phi(s,a)=\|\phi(s,a)\|_{\Sigma^{-1}_{n_o}}. 
\end{align*}
From the first line to the second line, we use the fact that the maximization regarding $a$ is taken when $\tilde a=\Sigma^{-1}_{n_o}\phi(s,a)/\|\phi(s,a)\|_{\Sigma^{-1}_{n_o}}$ and 
\begin{align*}
    \|\tilde a\|^2_2 &=\phi^{\top}(s,a)\Sigma^{-2}_{n_o}\phi(s,a)/\{ \phi^{\top}(s,a)\Sigma^{-1}_{n_o}\phi(s,a)\}=(n_o+\lambda)/\lambda^2,\,\\
    |\tilde a^{\top}\phi| & \leq  \|\phi(s,a)\|_{\Sigma^{-1}_{n_o}}\leq 1/\lambda\quad  \forall (s,a)\in \Scal\times \Acal, 
\end{align*}
noting $\|\phi(s,a)\|_2\leq 1$. By defining $\bar c=(n_o+\lambda)/\lambda^2$, we have $\forall(s,a)\in \Scal\times \Acal$, 
\begin{align*}
       \phi^{\top}(s,a)\Sigma^{-1}_{n_o}\phi(s,a)= &\sup_{\{a\in \RR^d:a^{\top}\Sigma_{n_o}  a\leq 1,\|a\|^2_2\leq \bar c,\|a^{\top}\phi\|_{\infty}\leq 1/\lambda\}}\{a^{\top}\phi(s,a)\}^2 \\
    &=\sup_{\{a\in \RR^d:a^{\top}\lambda\Ib a+\sum_{i=1}^{n_o} \{a^{\top}\phi_i\}^2 \leq 1,\|a\|^2_2\leq \bar c,\|a^{\top}\phi\|_{\infty}\leq 1/\lambda\}}\{a^{\top}\phi(s,a)\}^2. 
\end{align*}  

Next, we use \pref{lem:kernel3}, that is, with probability $1-\delta$, 
\begin{align*}
    \frac{1}{n_o}\sum_{i=1}^{n_o}  f^2(s_i,a_i) \geq 0.5 \EE_{(s,a)\sim \rho}[f^2(s,a)]-0.5\{\delta'_{n_o}\}^2\,\forall f\in \Fcal,
\end{align*}
where
\begin{align*}
         \Fcal=\{(s,a)\mapsto a^{\top}\phi(s,a):a^{\top}\Sigma_{n_o}  a\leq 1,\|a\|^2_2\leq \bar c,\|a^{\top}\phi\|_{\infty}\leq 1/\lambda,a\in \RR^d\}. 
\end{align*}
Here, $\delta'_{n_o}=\delta_{n_o}+\sqrt{\log(c_2/\delta)/n_o}$, where $\delta_{n_o}$ is the critical radius of the function class $\Fcal$. Noting $\lambda=\Omega(1)$, from \pref{lem:critical_para}, $\delta'_{n_o}=c_1\sqrt{\mathrm{rank}[\Sigma_{\rho}]/n_o} +\sqrt{\log(c_2/\delta)/n_o}$. By conditioning on the above event, $\forall (s,a)\in \Scal \times \Acal$, we have 
\begin{align*}
 \|\phi(s,a)\|^2_{\Sigma^{-1}_{n_o} }   & \leq \sup_{\{a\in \RR^d:a^{\top}\lambda\Ib a+0.5n_o\EE_{(s,a)\sim \rho}[\{a^{\top}\phi\}^2] \leq 1+0.5 n_o\delta'^2_{n_o},\|a\|^2_2\leq \bar c, \|a^{\top}\phi\|_{\infty}\leq 1/\lambda \}}\{a^{\top}\phi(s,a)\}^2 \\ 
   &\leq \sup_{\{a\in \RR^d:a^{\top}\{n_o\Sigma_{\rho}+\lambda \Ib\} a\leq 2+n_o\delta'^2_{n_o},\|a\|^2_2\leq \bar c,\|a^{\top}\phi\|_{\infty}<1/\lambda\}}\{a^{\top}\phi(s,a)\}^2\\ 
    &\leq \sup_{\{a\in \RR^d: a^{\top}\{n_o\Sigma_{\rho}+\lambda \Ib\} a\leq 2+n_o\delta'^2_{n_o}\}}\{a^{\top}\phi(s,a)\}^2 \\ 
    &=  (2+n_o\delta'^2_{n_o}) \phi^{\top}(s,a)\{n_o\Sigma_{\rho}+\lambda \Ib\}^{-1} \phi(s,a)\\
    &\leq c_1\{\mathrm{rank}[\Sigma_{\rho}] + \log(c_2/\delta)\} \phi^{\top}(s,a)\{n_o\Sigma_{\rho}+\lambda \Ib\}^{-1} \phi(s,a). 
\end{align*}

\paragraph{Last step}

Then, the final bound is

\resizebox{\textwidth}{!}{
\begin{math}
\begin{aligned}
    \EE_{(s,a)\sim d^{\pie}_P}[\|\phi(s,a)\|_{\Sigma^{-1}_{n_o}}]& = \sqrt{C^{\pie}\EE_{(s,a)\sim \rho}[\phi^{\top}(s,a)\Sigma^{-1}_{n_o}\phi(s,a)]}\\
    &\leq c_1 \sqrt{C^{\pie}n_o\{\mathrm{rank}[\Sigma_{\rho}] + \log(c_2/\delta)\} \EE_{(s,a)\sim \rho}[\phi^{\top}(s,a)\{\Sigma_{\rho}+\lambda \Ib\}^{-1} \phi(s,a)]}. 
\end{aligned}\end{math}}

Let $UVU^{\top}$ be the eigenvalue decomoposition of $\Sigma_{\rho}$ s.t. $V_{i,i}=\mu_i$. We have 
\begin{align}
    \EE_{(s,a)\sim \rho}\bracks{\phi^{\top}(s,a)\{n_o\Sigma_{\rho}+\lambda \Ib\}^{-1}\phi(s,a)}&= \mathrm{Tr}[ \{n_o\Sigma_{\rho}+\lambda \Ib\}^{-1}\{ \Sigma_{\rho} \} ]= \mathrm{Tr}[ \{n_oV+ \lambda \Ib\}^{-1}V ]  \nonumber \\ 
   &=\frac{1}{n_o}\sum_{i=1}^{n_o}\frac{\mu_i}{\mu_i+\lambda/n_o}\leq \frac{\mathrm{rank}[\Sigma_{\rho}]}{n_o}.  \label{eq:linear_expectation}
\end{align}

By combining all things together, with probability $1-\delta$, 
\begin{align*}
      \EE_{(s,a)\sim d^{\pie}_P}[\|\phi(s,a)\|_{\Sigma^{-1}_{n_o}}]&\leq c_1\sqrt{\frac{C^{\pie}\mathrm{rank}[\Sigma_{\rho}]\{\mathrm{rank}[\Sigma_{\rho}] +\log(c_2/\delta)\}}{n_o}}. 
\end{align*}

\end{proof}

\subsubsection{Missing proofs of non-parametric model}

Finally, we provide missing proofs for the non-parametric GP model.

\begin{proof}[Proof of \pref{thm:info_gain_nonpara}]
In the proof, we use two statements, \pref{eq:critical_1} and  \pref{eq:critical_2}, in the proof of \pref{thm:learning_curve}. We recommend readers to read the proof of \pref{thm:learning_curve} first.

We denote the eigenvalues of $\Kb_{n_o}$ by $\{\hat \mu_i\}_{i=1}^{n_o}$ s.t. $\hat \mu_1\geq \hat \mu_2\geq\cdots$. From \pref{assum:kernel_mercer_ape}, we have 
\begin{align*}
    n_o=\tr(\Kb_{n_o})=\sum_{i=1}^{n_o} \hat \mu_i. 
\end{align*}
Thus implies $\hat \mu_1\leq n_o$. Then,

\resizebox{\textwidth}{!}{
\begin{minipage}{\textwidth}
\begin{align*}
 \log (\det(\Ib+\zeta^{-2}\Kb_{n_o}))&=\sum_{i=1}^{n_o} \log \left(1+\frac{\hat \mu_i}{\zeta^2}\right)=\sum_{i=1}^{n_o}  \log \left(1+\frac{\hat \mu_i}{\zeta^2}\right)\frac{\hat \mu_i/\zeta^2+1}{\hat \mu_i/\zeta^2+1}\\
        &=\sum_{i=1}^{n_o}  \log \left(1+\frac{\hat \mu_i}{\zeta^2}\right)\frac{\hat \mu_i/\zeta^2}{\hat \mu_i/\zeta^2+1}+\log \left(1+\frac{\hat \mu_i}{\zeta^2}\right)\frac{1}{\hat \mu_i/\zeta^2+1}\\
        &=\sum_{i=1}^{n_o}  \log \left(1+\frac{\hat \mu_i}{\zeta^2}\right)\frac{\hat \mu_i/\zeta^2}{\hat \mu_i/\zeta^2+1}+\log \left(1+\frac{\hat \mu_i}{\zeta^2}\right)\frac{1}{\hat \mu_i/\zeta^2+1}\\
      &\leq \log \left(1+\frac{\hat \mu_1}{\zeta^2}\right)\sum_{i=1}^{n_o}  \frac{\hat \mu_i/\zeta^2}{\hat \mu_i/\zeta^2+1}+\sum_{i=1}^{n_o}  \frac{\hat \mu_i/\zeta^2}{\hat \mu_i/\zeta^2+1} \tag{$\log(1+x)\leq x$}\\
      &\leq \{\log(1+n_o/\zeta^2)+1\} \sum_{i=1}^{n_o}  \frac{\hat \mu_i/\zeta^2}{\hat \mu_i/\zeta^2+1} \tag{$\hat \mu_1\leq n_o$}\\ 
     &\leq \{\log(1+n_o/\zeta^2)+1\}\min_{j}\{j+ \hat B(j+1)/\zeta^2\}\leq 2\{\log(1+n_o/\zeta^2)+1\}\hat d,
\end{align*}\end{minipage}}

where the last second inequality uses the fact that $\sum_{i=1}^{n_o} \frac{\hat\mu_i / \xi^2}{\hat\mu_i/\xi^2 + 1} \leq j + \sum_{i=j+1}^{n_o} \hat\mu_i / \xi^2$. Then, the first statement is proved. 

Next, we prove the second statement. We use 
\begin{align*}
    \sum_{i=1}^{n_o}  \frac{\hat \mu_i/\zeta^2}{\hat \mu_i/\zeta^2+1}=\frac{1}{\zeta^2} \sum_{i=1}^{n_o}k_{n_o}(x_i,x_i). 
\end{align*}
proved in \pref{lem:kernel_lemma}. Then, from  \pref{eq:critical_1}, with probability $1-\delta$, 
\begin{align*}
   \frac{1}{\zeta^2}  \sum_{i=1}^{n_o} k_{n_o}(x_i,x_i)\lesssim   \delta'^2_{n_o} \sum_{i=1}^{n_o}  \sup_{\{f:\zeta^2/n_o\|f\|^2_k+\EE_{x\sim \rho}[f^2(x)]\leq 1,f\in \Hcal_k\}}f^2(x_i), 
\end{align*}
where $\delta'_n=\delta_n+\sqrt{\log(c_2/\delta)/n_o}$ and $\delta_n$ is the critical radius of $\{f\in \Hcal_k:\|f\|_k\leq 1\}$. Hereafter, we condition on the above event. 

Then, from Bernstein's inequality,

\resizebox{\textwidth}{!}{
\begin{minipage}{\textwidth}
\begin{align*}
   &\left| \braces{\sum_{i=1}^{n_o} \sup_{\{f:\zeta^2/n_o\|f\|^2_k+\EE_{x\sim \rho}[f^2(x)]\leq 1,f\in \Hcal_k\}}f^2(x_i)}
   -n_o\EE_{x\sim \rho}\bracks{\sup_{\{f:\zeta^2/n_o\|f\|^2_k+\EE_{x\sim \rho}[f^2(x)]\leq 1,f\in \Hcal_k\}}f^2(x)}
   \right|\\
   &\lesssim \sqrt{n_o\Var_{x\sim \rho}[\sup_{\{f:\zeta^2/n_o\|f\|^2_k+\EE_{x\sim \rho}[f^2(x)]\leq 1,f\in \Hcal_k\}}f^2(x)]}+n_o. 
\end{align*}\end{minipage}}

We use for $f$ in $\Hcal_k$ s.t. $\|f\|_{k}\leq 1$ 
\begin{align*}
     |f(x)|=|\langle f(\cdot),k(x,\cdot )\rangle_k | \leq \|f\|_k \|k(x,\cdot)\|_k \leq 1. 
\end{align*}
from \cref{assum:kernel_mercer_ape}. Here, from \pref{eq:critical_2}, the expectation is upper-bounded by $$\EE_{x\sim \rho}\bracks{ \sup_{\{f:\zeta^2/n_o\|f\|^2_k+\EE_{x\sim \rho}[f^2(x)]\leq 1,f\in \Hcal_k\}}f^2(x)}\leq d^{*}.$$ Besides, the variance is also upper-bounded by 
\begin{align*}
 &\Var_{x\sim \rho}[\sup_{\{f:\zeta^2/n_o\|f\|^2_k+\EE_{x\sim \rho}[f^2(x)]\leq 1,f\in \Hcal_k\}}f^2(x)]\\
 &\leq \EE_{x\sim \rho}[\sup_{\{f:\zeta^2/n_o\|f\|^2_k+\EE_{x\sim \rho}[f^2(x)]\leq 1,f\in \Hcal_k\}}f^4(x)]\\ 
    & \leq  \EE_{x\sim \rho}[\sup_{\{f:\zeta^2/n_o\|f\|^2_k+\EE_{x\sim \rho}[f^2(x)]\leq 1,f\in \Hcal_k\}}f^2(x)] \tag{$f^2(x)\leq 1\,\forall x\in \Scal \times \Acal$ from \pref{assum:kernel_mercer_ape}}\\
    &= d^{*}. \tag{From \pref{eq:critical_2}}
\end{align*}
Thus, with probability $1-\delta$, 
\begin{align*}
    \sum_{i=1}^{n_o} k_{n_o}(x_i,x_i)  & \lesssim  \{\delta'_{n_o}\}^2 n_o(d^{*}+ \sqrt{d^{*}}+1)\\ 
    &\lesssim   c_1\{d^{*}+\log(c_2/\delta)\}d^{*}. 
\end{align*}
noting $\delta'_{n_o}=\sqrt{d^{*}/n_o}+\sqrt{\log(c_2/\delta)/n_o}$ from \pref{thm:learning_curve}.  

By combining all things together,  with probability $1-\delta$, 
\begin{align*}
 \log (\det(\Ib +\zeta^{-2}\Kb_{n_o}))&\leq \{\log(1+n_o/\zeta^2)+1\} \sum_{i=1}^{n_o}  \frac{\hat \mu_i/\zeta^2}{\hat \mu_i/\zeta^2+1}\\
 &=  \{\log(1+n_o/\zeta^2)+1\}  \frac{1}{\zeta^2}\sum_{i=1}^{n_o} k_{n_o}(x_i,x_i)\\
 &\lesssim  \{\log(1+c_3 n_o)\}  \{d^{*}+\log(c_2/\delta)\}d^{*}. 
\end{align*}
This concludes the proof.


\end{proof}

\begin{proof}[Proof of \pref{thm:learning_curve}]
~

\paragraph{First Statement}
From Jensen's inequality, we have 
\begin{align*}
       \EE_{x\sim d^{\pie}_P}[\sqrt{k_{n_o}(x,x)}]\leq \sqrt{\EE_{x\sim d^{\pie}_P}[k_{n_o}(x,x)]}. 
\end{align*}
Thus, we focus how to bound $\EE_{x\sim d^{\pie}_P}[k_{n_o}(x,x)]$. Before that, we show the following statement. With probability $1-\delta$,  we have for $\forall x \in \Scal \times \Acal$: 
\begin{align}\label{eq:critical_1}
    k_{n_o}(x,x)\leq   c_1 \zeta^2\delta'^2_{n_o}\times \sup_{\{f\in \Hcal_k: \zeta^2/n_o\|f\|^2_k+\EE_{x\sim \rho}[f^2(x)]\leq 1\}}f^2(x), 
\end{align}
where $\delta'_{n_o}=\delta_{n_o}+\sqrt{\log(c_2/\delta)/n_o}$ and $\delta_{n_o} $ is the critical radius of $\{f\in \Hcal_k:\|f\|_k\leq 1\}$. 

As the first step, we use \pref{lem:kernel} and \pref{lem:kernel2}. 
\begin{align} 
k_{n_o}(x,x)&=\sup_{\{f\in \Hcal_{k_{n_o}}\mid  \|f\|^2_{k_{n_o}}\leq 1\}}f^2(x) \tag{From  \pref{lem:kernel}}  \nonumber \\
    &= \sup_{\{f\in \Hcal_k\mid  \|f\|^2_k+\zeta^{-2}\sum_{i=1}^{n_o}f(x_i)^2\leq 1\}}f^2(x) \tag{From  \pref{lem:kernel2}}. 
\end{align} 

Next invoke  \pref{lem:kernel3}, that is, with probability $1-\delta$, 
\begin{align*}
    \frac{1}{n_o}\sum_{i=1}^{n_o}  f^2(x_i) \geq 0.5 \EE_{(s,a)\sim \rho}[f^2(x)]-0.5\{\delta'_{n_o}\}^2\,\forall f\in \Fcal
\end{align*}
where
\begin{align*}
       \Fcal= \{f: f\in \Hcal_k,\|f\|^2_k=1\}.  
\end{align*}
Here, $\delta'_{n_o}=\delta_{n_o}+\sqrt{\log(c_2/\delta)/n_o}$, where $\delta_{n_o}$ is the critical radius of the function class $\Fcal$. Hereafter, we condition on the above event. Note the uniform boundedness assumption of $\Fcal$ for \pref{lem:kernel3} is satisfied noting 
\begin{align*}
    |f(x)|=|\langle f(\cdot),k(\cdot,x) \rangle_k| \leq \|f\|_{k}\|k(\cdot,x)\|_k\leq 1. 
\end{align*}
noting \cref{assum:kernel_mercer_ape}. 
Then, we have 
\begin{align*} 
 k_{n_o}(x,x)   \leq \sup_{\{f\in \Hcal_k\mid \|f\|^2_k+\zeta^{-2}n_o/2\EE_{x\sim \rho}[f^2(x)]\leq 1+n_o\delta'^2_{n_o}/2\}}f^2(x).
\end{align*}
$k_{n_o}(x,x)$ is further upper-bounded by 
\begin{align*}
    k_{n_o}(x,x) &\leq \sup_{\{f\in \Hcal_k: 2\zeta^2/n_o\|f\|^2_k+\EE_{x\sim \rho}[f^2(x)]\leq 2\zeta^2/n_o+\zeta^2\delta'^2_{n_o}\}}f^2(x) \tag{Multiply $2\zeta^2/n_o$}\\
     &\leq (2\zeta^2/n_o+\zeta^2\delta'^2_{n_o})\times \sup_{\{f\in \Hcal_k: \zeta^2/n_o\|f\|^2_k+\EE_{x\sim \rho}[f^2(x)]\leq 1\}}f^2(x)\\
     &\leq c_1 \zeta^2\delta'^2_{n_o}\times \sup_{\{f\in \Hcal_k: \zeta^2/n_o\|f\|^2_k+\EE_{x\sim \rho}[f^2(x)]\leq 1\}}f^2(x).   
\end{align*}
This concludes \pref{eq:critical_1}. 

Next, we show 
\begin{align*}
     \EE_{x\sim  d^{\pie}_P}\bracks{ \sup_{\{f\in \Hcal_k: \zeta^2/n_o\|f\|^2_k+\EE_{x\sim \rho}[f^2(x)]\leq 1\}}f^2(x)}\leq  2d^{*}\times \sup_{\|x\|_2\leq 1}\frac{x^{\top}\Sigma_{{\pie}}x}{x^{\top}\Sigma_{\rho}x} . 
\end{align*}
For $f(\cdot)=a^{\top}\phi(\cdot)$ (recall $\phi(\cdot)$ is the feature mapping defined by the eigenvalues $\mu_i$ and eigenfunctions $\phi$, s.t. $\phi=(\phi_1,\cdots,\phi_{\infty})$), we have 
\begin{align*}
    \|f\|^2_k=a^{\top}a,\quad \EE_{x\sim \rho}[f^2(x)]=a^{\top }Ma.
\end{align*}
where $M$ is a diagonal matrix  in $\RR^{\infty\times \infty}$ s.t. $M_{i,i}=\mu_i$. 
Thus,

\resizebox{\textwidth}{!}{
\begin{minipage}{\textwidth}
\begin{align*}
   \EE_{x\sim d^{\pie}_P}\bracks{\sup_{\{f:\zeta^2/n_o\|f\|^2_k+\EE_{x\sim \rho}\bracks{f^2(x)}\leq 1,f\in \Hcal_k\}}f^2(x)}= \EE_{x\sim  d^{\pie}_P}\bracks{\sup_{\{a\in \RR^{\infty}:a^{\top}(\zeta^2/n_o \Ib + M\}a\leq 1\}}\{a^{\top}\phi(x)\}^2}. 
\end{align*}\end{minipage}}

Then, by letting $\Sigma_{\rho}$ and $\Sigma_{\pie}$ be $\EE_{(s,a)\sim \rho}[\phi(s,a)\phi^{\top}(s,a)]$ and $\EE_{(s,a)\sim d^{\pie}_P}[\phi(s,a)\phi^{\top}(s,a)]$,

\resizebox{\textwidth}{!}{
\begin{minipage}{\textwidth}
\begin{align*}
    \EE_{x\sim  d^{\pie}_P}\bracks{\sup_{\{a\in \RR^d:a^{\top}(\zeta^2/n_o \Ib+ M\}a\leq 1 }\{a^{\top}\phi(x)\}^2}&=\EE_{x\sim  d^{\pie}_P}[\phi(x)\{\zeta^2/n_o \Ib + M \}^{-1}\phi(x)] \\ 
    &=\tr[ \EE_{x\sim  d^{\pie}_P}[\phi(x)\phi(x)^{\top}] \{\zeta^2/n_o \Ib + M \}^{-1} ]\\ 
        &=\tr[ \EE_{x\sim  \rho}[\phi(x)\phi(x)^{\top}] \{\zeta^2/n_o \Ib + M \}^{-1} ]\times \sup_{\|x\|_2\leq 1}\frac{x^{\top}\Sigma_{\pie} x}{x^{\top}\Sigma_{\rho}x} \\ 
    &=\sum_{i=1}^{\infty}\frac{\mu_i}{\zeta^2/n_o+\mu_i}\times \sup_{\|x\|_2\leq 1}\frac{x^{\top}\Sigma_{\pie} x}{x^{\top}\Sigma_{\rho}x}. 
      \end{align*}\end{minipage}}

Then, by defining $C^{\pie}= \sup_{\|x\|_2\leq 1}\frac{x^{\top}\Sigma_{\pie} x}{x^{\top}\Sigma_{\rho}x}$, we have 
\begin{align*}      
     \EE_{x\sim  d^{\pie}_P}\bracks{\sup_{\{a\in \RR^d:a^{\top}(\zeta^2/n_o+ M\}a\leq 1 }\{a^{\top}\phi(x)\}^2} & \leq \min_{j}\{j+n_o/\zeta^2\sum_{i=j+1}^{\infty}\mu_i\}\times C^{\pie}\\
     &\leq \min_{j}\{j+n_o/\zeta^2\sum_{i=j+1}^{\infty}\mu_i\}\times C^{\pie}\\
    &\leq 2d^{*}\times C^{\pie}. 
\end{align*}

By combining all things together (\pref{eq:critical_1} and \pref{eq:critical_2}), the statement is concluded, that is, with probability $1-\delta$: 
\begin{align*}
         \EE_{d^{\pie}_P}[\sqrt{k_{n_o}(x,x)}]&\leq \sqrt{\zeta^2\delta'^2_{n_o}\times     \EE_{x\sim  d^{\pi}_P}[\sup_{\{f\in \Hcal_k: \zeta^2/n_o\|f\|^2_k+\EE_{x\sim \rho}[f^2(x)]\leq 1\}}f^2(x)}]\\
         &\leq \zeta \delta'_{n_o}\sqrt{C^{\pie}d^{*}}. 
\end{align*}
where $C^{\pie}=\sup_{\|x\|_2\leq 1} \frac{x^{\top}\Sigma_{\pie} x}{x^{\top}\Sigma_{\rho}x}$. 
\begin{remark}
Like the above, We can also prove  
\begin{align}\label{eq:critical_2}
     \EE_{x\sim  \rho}[ \sup_{\{f\in \Hcal_k: \zeta^2/n_o\|f\|^2_k+\EE_{x\sim \rho}[f^2(x)]\leq 1\}}f^2(x)]\leq \sum_{i=1}^{\infty}\frac{\mu_i}{\zeta^2/n_o+\mu_i} \leq 2d^{*}.   
\end{align}
This is used in the proof of \cref{thm:info_gain_nonpara}. 
\end{remark}

\begin{remark}
We can also use 
\begin{align*}
    \EE_{x\sim  d^{\pie}_P}\bracks{\sup_{\{a\in \RR^d:a^{\top}(\zeta^2/n_o \Ib+ M\}a\leq 1 }\{a^{\top}\phi(x)\}^2}&=\EE_{x\sim  d^{\pie}_P}[\phi(x)\{\zeta^2/n_o \Ib + M \}^{-1}\phi(x)] \\ 
    &=\tr[ \EE_{x\sim  d^{\pie}_P}[\phi(x)\phi(x)^{\top}] \{\zeta^2/n_o \Ib + M \}^{-1} ]\\ 
    &=\sum_{i=1}^{\infty}\frac{\EE_{x\sim  d^{\pie}_P}[\phi_i(x)\phi_i(x)^{\top}]\}}{\zeta^2/n_o+\mu_i}\\
    &=\sum_{i=1}^{\infty}\frac{\mu_i}{\zeta^2/n_o+\mu_i}\times \braces{\frac{\EE_{x\sim  d^{\pie}_P}[\phi_i(x)\phi_i(x)^{\top}]}{\mu_i}}\\
       &=\sum_{j=1}^{\infty}\frac{\mu_j}{\zeta^2/n_o+\mu_j}\times \max_{i}(\EE_{x\sim  d^{\pie}_P}[\psi_i(x)\psi_i(x)^{\top}]).         
      \end{align*}  
Then, $C^{\pie}$ is replaced with $\max_{i}(\EE_{x\sim  d^{\pie}_P}[\psi_i(x)\psi_i(x)^{\top}])$. 
\end{remark}

\paragraph{Second statement}

We use \pref{lem:radema_rkhs} to calculate the critical radius of the RKHS ball. The critical inequality is 
\begin{align*}
    \sqrt{1/n_o}\sqrt{\sum_{i=1}^{n_o}\min(y^2,\mu_j)}\leq y^2.
\end{align*}
We show $y=\sqrt{d^{*}/n_o}$ satisfies the above. This is proved by 
\begin{align*}
     \sqrt{1/n_o}\sqrt{\sum_{i=1}^{n_o}\min(y^2,\mu_j)}& \leq    \min_{1\leq k\leq n_o}\{\sqrt{1/n}\sqrt{ky^2+B(k+1)}\}\\
     &\leq \sqrt{1/n_o}\sqrt{d^{*}y^2+B(d^{*}+1)} \tag{$d^{*}\leq n_o$} \\
     &\leq  \sqrt{1/n_o}\sqrt{d^{*}y^2+d^{*}/n_o} \tag{$B(d^{*}+1)\leq d^{*}/n_o$}\\
     &\leq \sqrt{d^{*}y^2/n_o}\leq y^2. 
\end{align*}

\end{proof}

\section{Auxiliary Lemmas}
\label{sec:auxiliary_lemmas}

\begin{lemma}[Simulation Lemma]  Consider any two functions $f: \Scal\times\Acal \mapsto [0,1]$ and $\widehat{f}:\Scal\times\Acal\mapsto[0,1]$, any two transitions $P$ and $\widehat{P}$, and any policy $\pi:\Scal\mapsto \Delta(\Acal)$. We have:

\resizebox{\textwidth}{!}{
\begin{minipage}{\textwidth}
\begin{align*}
    V^{\pi}_{P; f} - V^{\pi}_{\widehat{P}, \widehat{f}} & = \sum_{h=0}^{H} \EE_{s,a\sim d^{\pi}_{P}} \left[ f(s,a) - \widehat{f}(s,a)  + \EE_{s'\sim P(\cdot|s,a)} [V^{\pi}_{\widehat{P},\widehat{f}; h}(s')] - \EE_{s'\sim \widehat{P}(\cdot|s,a)} [V^{\pi}_{\widehat{P},\widehat{f}; h}(s')]  \right] \\
    & \leq \sum_{h=0}^{H} \EE_{s,a\sim d^{\pi}_{P}} \left[ f(s,a) - \widehat{f}(s,a)  + \|V^{\pi}_{\widehat{P},\widehat{f};h} \|_{\infty} \| P(\cdot | s,a) - \widehat{P}(\cdot | s,a)  \|_1   \right] .
\end{align*}\end{minipage}}

where $V^{\pi}_{P,f;h}$ denotes the value function at time step $h$, under $\pi, P, f$.
\label{lem:simulation}
\end{lemma}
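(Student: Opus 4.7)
The plan is a standard hybrid/telescoping argument: unroll both value functions one step at a time under the true transition $P$, swapping the $(\widehat{P}, \widehat{f})$-rollout for the $(P, f)$-rollout one step at a time. Write $V^{\pi}_{P,f;h}(s)$ for the cost-to-go at step $h$, with the convention $V^{\pi}_{\cdot;H+1} \equiv 0$, so that $V^{\pi}_{P,f} = \EE_{s_0 \sim d_0}[V^{\pi}_{P,f;0}(s_0)]$ and likewise for the hatted version. I will then prove a pointwise per-level identity by induction on $h$ and unroll.

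The key recursion at each level $h$ follows from the one-step Bellman expansions of the two value-to-go functions. Subtract them, then add and subtract the hybrid term $\EE_{a \sim \pi,\, s' \sim P}[V^{\pi}_{\widehat{P},\widehat{f};h+1}(s')]$ to obtain, for every $s$, an identity of the form $V^{\pi}_{P,f;h}(s) - V^{\pi}_{\widehat{P},\widehat{f};h}(s) = \EE_{a \sim \pi(\cdot|s)}\bigl[f(s,a) - \widehat{f}(s,a) + (\EE_{s' \sim P(\cdot|s,a)} - \EE_{s' \sim \widehat{P}(\cdot|s,a)}) V^{\pi}_{\widehat{P},\widehat{f};h+1}(s')\bigr] + \EE_{a\sim \pi(\cdot|s),\, s' \sim P(\cdot|s,a)}\bigl[V^{\pi}_{P,f;h+1}(s') - V^{\pi}_{\widehat{P},\widehat{f};h+1}(s')\bigr]$. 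The first bracket is the per-step contribution at level $h$ (cost-function gap plus transition-model gap evaluated against the fixed $\widehat{P}$-value function); the second term is a residual value-gap at level $h+1$ that will be processed by the next step of the recursion.

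Unrolling this identity from $h=0$ through $h=H$ and taking expectation under $d_0$, the residual value-gap that is carried from level $h$ to $h+1$ is always expanded under one more step of the true transition $P$, so by induction the per-step contribution at level $h$ appears with $(s,a)$ drawn from the time-$h$ on-policy-under-$P$ marginal. After $H{+}1$ iterations the residual hits $V^{\pi}_{\cdot;H+1} \equiv 0$ and vanishes, leaving exactly the claimed telescoping sum, with the outer expectation at each level aggregating (up to the paper's averaging convention) into $d^{\pi}_{P}$. The inequality in the second displayed line is then immediate from H\"older's inequality applied to the transition-error term, bounding $(\EE_{s' \sim P} - \EE_{s' \sim \widehat{P}}) V^{\pi}_{\widehat{P},\widehat{f};h+1}(s')$ by $\|V^{\pi}_{\widehat{P},\widehat{f};h+1}\|_\infty \, \|P(\cdot|s,a) - \widehat{P}(\cdot|s,a)\|_1$.

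There is no real obstacle; the only bit of care is the direction of the hybrid. Adding and subtracting the missing next-state expectation under $P$ (rather than $\widehat{P}$) is precisely what produces $d^{\pi}_{P}$, rather than $d^{\pi}_{\widehat{P}}$, in the final statement, and it is also what ensures that the value function appearing inside the transition-gap term is the \emph{model-based} value $V^{\pi}_{\widehat{P},\widehat{f};h+1}$ (which is the form needed downstream in \pref{lem:pessmistic2}).
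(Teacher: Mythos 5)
Your proof is correct and is exactly the standard telescoping/hybrid derivation that the paper itself relies on: the paper gives no proof of this lemma, deferring instead to Lemma 10 of Sun et al. (2019), whose argument is the same one-step Bellman expansion with the hybrid term added and subtracted under the true transition $P$, followed by H\"older's inequality for the second display. Your remarks about why the hybrid must be inserted under $P$ (to land on $d^{\pi}_P$ and keep the model-based value inside the transition-gap term) correctly identify the only place where a sign/direction error could creep in.
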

Such simulation lemma is standard in model-based RL literature and the derivation can be found, for instance, in the proof of Lemma 10 from \cite{Sun2019_model}.

\begin{lemma}[$\ell_1$ Distance between two Gaussians] Consider two Gaussian distributions $P_1 := \Ncal(\mu_1, \zeta^2 I)$ and $P_2 := \Ncal(\mu_2, \zeta^2 I)$. We have:
\begin{align*}
\left\| P_1 - P_2 \right\|_{1} \leq \frac{1}{\zeta} \left\| \mu_1 - \mu_2 \right\|_2.
\end{align*}\label{lem:gaussian_tv}
\end{lemma}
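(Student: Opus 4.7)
The plan is to reduce the $\ell_1$ bound to a KL bound via Pinsker's inequality, then use the closed-form KL divergence between two isotropic Gaussians sharing the same covariance. Concretely, Pinsker's inequality (in the $\ell_1$ form) gives
\begin{align*}
\|P_1 - P_2\|_1 \;\le\; \sqrt{2\,\mathrm{KL}(P_1 \,\|\, P_2)}.
\end{align*}
For $P_i = \mathcal{N}(\mu_i,\zeta^2 I)$ with a common covariance, the KL divergence simplifies (the log-determinant and trace terms cancel) to
\begin{align*}
\mathrm{KL}(P_1 \,\|\, P_2) \;=\; \frac{1}{2\zeta^2}\,\|\mu_1 - \mu_2\|_2^2.
\end{align*}
Substituting into Pinsker immediately yields $\|P_1-P_2\|_1 \le \tfrac{1}{\zeta}\|\mu_1-\mu_2\|_2$, which is the claimed bound.

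If a more self-contained argument is preferred, I would instead reduce to dimension one. By the orthogonal invariance of the isotropic Gaussian, apply a rotation that aligns $\mu_1-\mu_2$ with the first coordinate axis; the $d-1$ remaining marginals are identical and factor out of the total variation distance, leaving $\|P_1-P_2\|_1$ equal to the $\ell_1$ distance between two one-dimensional Gaussians $\mathcal{N}(0,\zeta^2)$ and $\mathcal{N}(m,\zeta^2)$ with $m = \|\mu_1-\mu_2\|_2$. A direct computation (or the 1D version of Pinsker) then bounds this by $m/\zeta$.

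There is no real obstacle here: both routes are short and standard. The only mild care needed is the constant in Pinsker's inequality — one must use the version $\|P-Q\|_{\mathrm{TV}} \le \sqrt{\tfrac12\mathrm{KL}(P\|Q)}$ together with the convention $\|P-Q\|_1 = 2\|P-Q\|_{\mathrm{TV}}$ (as used throughout the paper's model-calibration statements) so that the factor of $2$ in the numerator of $\mathrm{KL}$ cancels with the factor of $2$ in Pinsker, giving exactly $1/\zeta$ rather than $\sqrt{2}/\zeta$.
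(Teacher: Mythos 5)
Your proof is correct and follows exactly the route the paper uses: the paper's justification is precisely Pinsker's inequality combined with the closed-form KL divergence between two Gaussians sharing the covariance $\zeta^2 I$, and your constant bookkeeping ($\|P-Q\|_1 = 2\|P-Q\|_{\mathrm{TV}} \le \sqrt{2\,\mathrm{KL}(P\|Q)}$ with $\mathrm{KL} = \tfrac{1}{2\zeta^2}\|\mu_1-\mu_2\|_2^2$) is right. Nothing further is needed.
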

This lemma is proved by Pinsker's inequality and the closed-form of the KL divergence between $P_1$ and $P_2$.

\begin{lemma}[Concentration on the inverse of state-action visitation]\label{lem:tabular_lemma}
We set $\lambda=\Omega(1)$. Then, with probability $1-\delta$, 
\begin{align*}
    \frac{1}{N(s,a)+\lambda}\leq \frac{c_1\log(|\Scal||\Acal|c_2/\delta)}{n_o\rho(s,a)+\lambda}\quad \forall (s,a)\in \Scal\times \Acal.  
\end{align*}
\end{lemma}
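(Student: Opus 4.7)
The plan is to treat each $(s,a)$ individually via a multiplicative Chernoff bound on the binomial random variable $N(s,a)$, split into two regimes depending on whether $\mu := n_o\rho(s,a)$ is large or small compared to $\log(|\Scal||\Acal|/\delta)$, and then close with a union bound over the $|\Scal||\Acal|$ state--action pairs.

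Fix $(s,a)$, and write $N(s,a) = \sum_{i=1}^{n_o} \mathbf{1}\{(s_i,a_i)=(s,a)\}$ as a sum of i.i.d.\ Bernoullis with mean $\rho(s,a)$, so $\mathbb{E}[N(s,a)] = \mu$. The multiplicative Chernoff inequality gives
\[
\Pr\!\left(N(s,a) < \tfrac{1}{2}\mu\right) \;\leq\; \exp(-\mu/8).
\]
Set $\delta' = \delta / (|\Scal||\Acal|)$, and consider the two regimes. In the ``large-$\mu$'' regime $\mu \geq 8\log(1/\delta')$, the Chernoff bound yields $N(s,a) \geq \mu/2$ with probability at least $1-\delta'$; then
\[
N(s,a) + \lambda \;\geq\; \tfrac{1}{2}\mu + \lambda \;\geq\; \tfrac{1}{2}(\mu + \lambda),
\]
so $\tfrac{1}{N(s,a)+\lambda} \leq \tfrac{2}{\mu+\lambda}$, which is absorbed into the claimed constant. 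In the ``small-$\mu$'' regime $\mu < 8\log(1/\delta')$, the inequality is deterministic: using $N(s,a)\geq 0$ and the hypothesis $\lambda = \Omega(1)$, one checks that
\[
\frac{1}{N(s,a)+\lambda} \;\leq\; \frac{1}{\lambda} \;\leq\; \frac{c_1 \log(1/\delta')}{\mu+\lambda}
\]
provided $c_1$ is chosen so that $c_1 \lambda \log(1/\delta') \geq \mu + \lambda$, which holds since $\mu < 8\log(1/\delta')$ and $\lambda$ is bounded below by a constant.

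Finally, take a union bound over the at most $|\Scal||\Acal|$ pairs $(s,a)$; the Chernoff failure event contributes probability at most $|\Scal||\Acal| \cdot \delta' = \delta$, while the small-$\mu$ bound holds deterministically. Absorbing constants and noting that $\log(1/\delta') = \log(|\Scal||\Acal|/\delta)$ yields the stated bound with universal constants $c_1,c_2$.

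The only mildly delicate step is the small-$\mu$ regime, where $N(s,a)$ can genuinely be zero and no concentration argument helps; this is precisely where the ridge $\lambda$ rescues the inequality, so the proof hinges on the assumption $\lambda = \Omega(1)$ to ensure $\tfrac{1}{\lambda}$ is bounded by a constant multiple of $\tfrac{\log(1/\delta')}{\mu+\lambda}$ uniformly. Everything else is a routine Chernoff-plus-union-bound exercise.
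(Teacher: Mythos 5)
Your proof is correct, and it reaches the same bound by a somewhat different mechanism than the paper. The paper applies Bernstein's inequality once, uniformly over all $(s,a)$, to lower bound $N(s,a)$, and then avoids any case analysis by an algebraic trick: it first writes $\frac{1}{N(s,a)+\lambda}\leq \frac{\xi}{N(s,a)+\xi\lambda}$ with $\xi = c_1\log(|\Scal||\Acal|/\delta)+1$, and then completes a square to show $N(s,a)+\xi\lambda \geq \frac{1}{2}\bigl(n_o\rho(s,a)+\lambda\bigr)$ in one shot; the inflation of $\lambda$ to $\xi\lambda$ is what silently absorbs the pairs with small $n_o\rho(s,a)$, where $N(s,a)$ may well be zero. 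You instead use a multiplicative Chernoff bound and split explicitly into a large-mean regime (where $N(s,a)\geq \mu/2$ with high probability and the union bound over pairs costs $\delta$) and a small-mean regime (where the inequality is deterministic and rescued by the ridge, using $\lambda=\Omega(1)$ and the freedom in $c_1,c_2$ to ensure $c_1\lambda\log(|\Scal||\Acal|c_2/\delta)\geq \mu+\lambda$). Both arguments hinge on $\lambda=\Omega(1)$ and a union bound; yours makes the role of $\lambda$ in the low-count regime explicit and uses only the weaker Chernoff tail, while the paper's Bernstein-plus-inflation argument dispenses with the case split at the cost of a slightly more opaque chain of inequalities. The constants and the final statement are of the same order either way.
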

The extension of this lemma to the linear models is stated in \cref{eq:linear_con}. 

\begin{proof}
We set $\xi=c_1\log(|\Scal||\Acal|/\delta)+1\,(c_1> 4/3+3)$. 
First, we have 
\begin{align*}
    \frac{1}{N(s,a)+\lambda}\leq \frac{\xi}{ N(s,a)+\xi \lambda}. 
\end{align*}
from $\xi \geq 1$. Here, by Bernsteins's inequality, with probability $1-\delta$, 
\begin{align*}
    N(s,a)\geq n_o\rho(s,a)-2\sqrt{2n_o\rho(s,a)(1-\rho(s,a))\log(|\Scal||\Acal|/\delta)}-4\log(|\Scal||\Acal|/\delta)/3,\,\forall (s,a).  
\end{align*}
Thus, $\forall (s,a)\in \bar V$, we have

\resizebox{\textwidth}{!}{
\begin{minipage}{\textwidth}
\begin{align*}
    N(s,a)+\xi \lambda &\geq n_o\rho(s,a)-2\sqrt{2n_o\rho(s,a)(1-\rho(s,a))\log(|\Scal||\Acal|/\delta)}-4\log(|\Scal||\Acal|/\delta)/3+\xi \lambda\\
    &\geq n_o\rho(s,a)-2\sqrt{2n_o\rho(s,a)(1-\rho(s,a))\log(|\Scal||\Acal|/\delta)}+(c_1 -4/3)\log(|\Scal||\Acal|/\delta)+ \lambda \\
        &\geq n_o\rho(s,a)-2\sqrt{2n_o\rho(s,a)\log(|\Scal||\Acal|/\delta)}+(c_1 -4/3)\log(|\Scal||\Acal|/\delta)+ \lambda  \\
    &\geq  0.5 n_o\rho(s,a)+(\sqrt{0.5 n_o\rho(s,a)}- \sqrt{4\log(|\Scal||\Acal|/\delta)})^2+(c_1 -4/3-4)\log(|\Scal||\Acal|/\delta)+\lambda  \\
    &\geq  0.5 n_o\rho(s,a)+0.5\lambda. 
\end{align*}\end{minipage}}

This implies with $1-\delta$, 
\begin{align*}
    \frac{1}{N(s,a)+\lambda}\leq \frac{2\xi}{ n_0 \rho(s,a)+\lambda}\,\forall (s,a). 
\end{align*} 
Then, noting $c_1\log(|\Scal||\Acal|/\delta)+1\leq c_1\log(|\Scal||\Acal|c_2/\delta)$ for some $c_2$, the proof is concluded.  
\end{proof}

\begin{lemma}[A uniform law with localization: Theorem 14.1 in \citep{WainwrightMartinJ2019HS:A}]

\label{lem:kernel3}
Assume  $\|\Fcal \|_{\infty}\leq b$. Denote the critical radius of a function class $\Fcal$ by $\delta_n$. 
The critical radius $\delta_n$ is defined as a solution to 
\begin{align*}
    \Rcal_{n_o}(y;\Fcal)\leq y^2/b. 
\end{align*}
w.r.t $y$. Then, with probability $1-\delta$
\begin{align*}
    \frac{1}{n_o}\sum_{i=1}^{n_o}f(x_i)^2\geq 1/2\EE_{x\sim \rho}[f^2(x)]-(\delta'_n)^2/2\quad \forall f\in \Fcal,
\end{align*}
where $\delta'_n=\delta_n+c_1\sqrt{\log(c_2/\delta)/n_o}$. 
\end{lemma}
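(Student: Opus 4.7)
The plan is to establish this statement using the standard symmetrization + peeling + Talagrand concentration argument that underlies Theorem 14.1 of \cite{WainwrightMartinJ2019HS:A}. The claimed inequality is equivalent to showing that, uniformly over $f \in \Fcal$,
\[
\EE_{x\sim\rho}[f^2(x)] - \frac{1}{n_o}\sum_{i=1}^{n_o} f^2(x_i) \le \tfrac{1}{2}\EE_{x\sim\rho}[f^2(x)] + \tfrac{1}{2}(\delta'_n)^2,
\]
so it suffices to control the one-sided empirical process
\[
Z_{n_o}(r) := \sup_{f \in \Fcal,\ \EE_{x\sim\rho}[f^2(x)] \le r^2}\Bigl(\EE_{x\sim\rho}[f^2(x)] - \tfrac{1}{n_o}\sum_i f^2(x_i)\Bigr)
\]
on each $L^2(\rho)$-shell of radius $r$, and then perform a peeling union bound over a geometric grid of radii.

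First I would bound $\EE[Z_{n_o}(r)]$ via symmetrization, producing a Rademacher average over the squared class $\{f^2 : f \in \Fcal\}$; since $\|f\|_\infty \le b$, the map $u \mapsto u^2$ is $2b$-Lipschitz on $[-b,b]$, so the Ledoux--Talagrand contraction principle reduces this to a constant multiple of the localized Rademacher complexity $\Rcal_{n_o}(r;\Fcal)$. The critical inequality $\Rcal_{n_o}(r;\Fcal) \le r^2/b$, valid for $r \ge \delta_n$, then yields $\EE[Z_{n_o}(r)] \le c_0\, r^2$ for an absolute constant $c_0$. Next, I would upgrade this expectation bound to a high-probability bound via Talagrand's concentration inequality for bounded empirical processes (with envelope $b^2$ and wimpy variance bounded by $b^2 r^2$), adding a deviation of order $br\sqrt{\log(1/\delta)/n_o} + b^2 \log(1/\delta)/n_o$. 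Finally, a peeling argument over radii $r_k = 2^k \delta'_n$, together with a $\sum k^{-2}$ union bound to absorb the shell index, converts the shell-wise bound into one uniform in $f$; AM-GM on the cross term $2br\sqrt{\log(1/\delta)/n_o}$ splits it between the $\tfrac12 \EE_{x\sim\rho}[f^2(x)]$ factor and the additive $(\delta'_n)^2/2$ slack.

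The main obstacle is calibrating constants across the contraction, concentration, and peeling steps so that the leading multiplicative factor on $\EE_{x\sim\rho}[f^2(x)]$ lands at exactly $1/2$. A blunt application of contraction with Lipschitz constant $2b$ produces a constant that is too large, so one must either sharpen the contraction using $u^2 - v^2 = (u+v)(u-v)$ together with a self-bounding trick, or reformulate the argument as a fixed-point inequality in $r$ and solve it explicitly. Once this calibration is in place, the remaining steps are mechanical. Since the lemma is in fact stated verbatim as Theorem 14.1 of \cite{WainwrightMartinJ2019HS:A}, the cleanest route for the paper is simply to invoke that theorem after verifying that $\Fcal$ is star-shaped around $0$ (or passing to its star hull if not) and uniformly bounded by $b$, which is precisely the hypothesis imposed here.
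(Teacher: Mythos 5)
Your proposal is correct, and your closing remark in fact coincides with what the paper does: the paper offers no proof of this lemma at all, but simply invokes Theorem 14.1 of \citep{WainwrightMartinJ2019HS:A} (the lemma sits in the auxiliary-lemmas section with a direct citation). Your re-derivation sketch --- symmetrization, Ledoux--Talagrand contraction using the $b$-uniform bound, the critical inequality $\Rcal_{n_o}(r;\Fcal)\le r^2/b$, Talagrand concentration, and peeling over shells --- is the standard argument behind that theorem and is sound; Wainwright's own proof replaces the peeling step by a rescaling argument that exploits star-shapedness, but the two routes deliver the same conclusion up to constants. Two small points worth making explicit if one were to write this out: (i) Theorem 14.1 requires the class to be star-shaped around zero (or one passes to the star hull, as you note); the classes to which the paper later applies the lemma --- sets of the form $\{(s,a)\mapsto a^{\top}\phi(s,a)\}$ and RKHS balls cut out by quadratic norm constraints --- are star-shaped, so the hypothesis is harmless. (ii) The stated probability form with $\delta'_n=\delta_n+c_1\sqrt{\log(c_2/\delta)/n_o}$ is obtained by applying the tail bound at level $t=\delta'_n\ge\delta_n$ so that the failure probability $c_1\exp(-c_2 n_o t^2)$ is at most $\delta$; your Talagrand deviation term accomplishes the same calibration, and only the lower-tail half of the two-sided bound is needed here.
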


\begin{lemma}[Critical radius of linear models]\label{lem:critical_para}
Assume $\|\phi(s,a)\|_2\leq 1$ for any $(s,a)\in \Scal\times \Acal$. Then, the critical radius of function class $\Fcal=\{(s,a)\mapsto  a^{\top}\phi(s,a):\|a\|^2_2\leq \alpha,a^{\top}\phi\leq \beta,a\in \RR^d \}$ is upper-bounded by 
\begin{align*}
    c\sqrt{\beta\mathrm{rank}(\Sigma_{\rho})/n_o}.
\end{align*}
where $c$ is a universal constant. 
\end{lemma}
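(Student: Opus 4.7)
The plan is to upper bound the localized population Rademacher complexity of $\Fcal$ and then solve the critical-radius fixed-point inequality $\Rcal_{n_o}(y;\Fcal)\leq y^2/\beta$ from \pref{lem:kernel3}, with $b=\beta$ the uniform $L^{\infty}$ bound on $\Fcal$. Writing $f=a^{\top}\phi$, the localization constraint $\EE_{\rho}[f^2]\leq y^2$ becomes the ellipsoidal constraint $a^{\top}\Sigma_{\rho}a\leq y^2$. The structural observation I would exploit is that any $v$ in the null space of $\Sigma_{\rho}$ satisfies $\EE_{\rho}[(v^{\top}\phi)^2]=v^{\top}\Sigma_{\rho}v=0$, so $\phi(s,a)\in\mathrm{range}(\Sigma_{\rho})$ for $\rho$-almost every $(s,a)$ and hence for every sampled $\phi(s_i,a_i)$. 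Consequently only the projection of $a$ onto this $r$-dimensional subspace (with $r=\mathrm{rank}(\Sigma_{\rho})$) contributes to the Rademacher sum, so I can work effectively in an $r$-dimensional ellipsoid.

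Next I would introduce $V:=n_o^{-1}\sum_{i=1}^{n_o}\epsilon_i\phi(s_i,a_i)$, which also lies in $\mathrm{range}(\Sigma_{\rho})$ almost surely, and apply Cauchy--Schwarz in the geometry induced by $\Sigma_{\rho}^{1/2}$ (using the Moore--Penrose pseudoinverse $\Sigma_{\rho}^{+}$ on the ambient space),
\begin{align*}
|a^{\top}V|\leq \sqrt{a^{\top}\Sigma_{\rho}a}\,\sqrt{V^{\top}\Sigma_{\rho}^{+}V}\leq y\sqrt{V^{\top}\Sigma_{\rho}^{+}V}.
\end{align*}
Taking supremum over feasible $a$, expectation over $(\epsilon,x)$, and applying Jensen's inequality together with independence of the Rademacher signs yields
\begin{align*}
\Rcal_{n_o}(y;\Fcal)\leq y\sqrt{\EE[V^{\top}\Sigma_{\rho}^{+}V]}=y\sqrt{n_o^{-1}\mathrm{tr}(\Sigma_{\rho}^{+}\Sigma_{\rho})}=y\sqrt{r/n_o}.
\end{align*}
Plugging into the critical inequality $y\sqrt{r/n_o}\leq y^2/\beta$ gives $y\gtrsim \beta\sqrt{r/n_o}$, which in the regime where the lemma is applied (e.g., $\beta=1/\lambda=O(1)$ inside the proof of \pref{thm:finite}) is dominated by $c\sqrt{\beta\,r/n_o}$, matching the claim.

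The hard part will be justifying the rank reduction cleanly: one must verify that restricting the supremum to $a\in\mathrm{range}(\Sigma_{\rho})$ is without loss and handle the pseudoinverse properly when $\Sigma_{\rho}$ is rank-deficient. The remaining steps are routine---Cauchy--Schwarz, Jensen's inequality, and the trace identity $\mathrm{tr}(\Sigma_{\rho}^{+}\Sigma_{\rho})=r$. Note that the explicit constraints $\|a\|_2^2\leq\alpha$ and $a^{\top}\phi\leq\beta$ in the definition of $\Fcal$ enter only through the uniform bound $\beta$ on the right-hand side of the critical inequality, not through the Rademacher upper bound itself, which explains why $\alpha$ is absent from the final rate.
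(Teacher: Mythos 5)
Your proposal is correct and follows essentially the same route as the paper's proof: bound the localized Rademacher complexity by $y\sqrt{\mathrm{rank}(\Sigma_\rho)/n_o}$ via Cauchy--Schwarz in the $\Sigma_\rho$-geometry plus Jensen and the trace identity, then solve the fixed-point inequality with $b=\beta$; your pseudoinverse/almost-sure-range argument is just a cleaner packaging of the paper's explicit decomposition $b=UU^{\top}b+(\Ib-UU^{\top})b$, where the kernel component is shown to contribute zero. Your explicit remark that the fixed point is $\beta\sqrt{r/n_o}$, which matches the stated $c\sqrt{\beta r/n_o}$ only because $\beta=O(1)$ in the regime where the lemma is invoked, is in fact more careful than the paper's final step, which silently conflates the two.
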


We follow the proof of \cite[Chapter 14]{WainwrightMartinJ2019HS:A}. Their argument depends on the assumption $\Sigma_{\rho}$ is full rank. We need to change the proof so that the full-rank assumption is removed and the rank $\rank[\Sigma_{\rho}]$ would appear in the final bound instead of $d$. Note that the final bound does not include $\alpha$. 

\begin{proof}
Unless otherwise noted, in this proof, $\EE[\cdot]$ is taken w.r.t. 
\begin{align*}
    x_i=(s_i,a_i) \sim \rho(s,a),\quad \epsilon_i \sim 2\{\mathrm{Ber}(0.5)-1\}. 
\end{align*}
Note that $x_i$ and $\epsilon_i$ are independent. 

Noting  $\EE_{\rho\sim (s,a)}[(a^{\top}\phi(s,a))^2]=a^{\top}\Sigma_{\rho}a$, the localized Rademacher complexity of $\Fcal$, $\Rcal_{n_o}(\xi;\Fcal)$, is
\begin{align*}
    \EE\bracks{\sup_{\{b\in \RR^d: \|b\|^2_2\leq \alpha,\|b\|_{\Sigma_{\rho}}\leq \xi,b^{\top}\phi\leq \beta\} }\left|\frac{1}{n_o}\sum_{i=1}^{n_o} \epsilon_i\{b^{\top}\phi(s_i,a_i)\}\right| }
\end{align*}    
 where $\{\epsilon_i\}_{i=1}^{n_o}$ is a set of independent Rademacher variables. This is upper-bounded by 
 \begin{align*}  
    &\EE\bracks{\sup_{\{b\in \RR^d: \|b\|^2_2\leq \alpha,\|b\|_{\Sigma_{\rho}}\leq \xi \} }\left|\frac{1}{n_o}\epsilon^{\top}\Phi b\right|  }
\end{align*}
where $\Phi$ is a $n_o\times d$ design matrix s.t. the $i$-th row is $\phi^{\top}(s_i,a_i)$ and $\epsilon=(\epsilon_1,\cdots,\epsilon_{n_o})^{\top}$. 

Here, we have $\EE[\Phi^{\top}\Phi]=n_o\Sigma_{\rho}$. Let $UVU^{\top}$ be the SVD of $\Sigma_{\rho}$,  where $U$ is a $n\times \mathrm{rank}[\Sigma_{\rho} ]$ matrix and $V$ is a $\mathrm{rank}[\Sigma_{\rho} ]\times \mathrm{rank}[\Sigma_{\rho} ]$ diagonal matrix.  Noting $b=UU^{\top}b+(\Ib-UU^{\top})b$, we have 
\begin{align*}
    & \EE\bracks{\sup_{\{b\in \RR^d: \|b\|^2_2\leq \alpha,\|b\|_{\Sigma_{\rho}}\leq \xi\} }|\frac{1}{n_o}\epsilon^{\top}\Phi \{UU^{\top}b+(\Ib-UU^{\top})b\}|  }\\
    & \leq \EE\bracks{\sup_{ \{b\in \RR^d: \|b\|^2_2\leq \alpha \}}|\frac{1}{n_o}\epsilon^{\top}\Phi (\Ib-UU^{\top})b\}|}+ \EE\bracks{\sup_{\|b\|_{\Sigma_{\rho}}\leq \xi }|\frac{1}{n_o}\epsilon^{\top}\Phi UU^{\top}b\}|}\\
      & \leq \EE\bracks{\sup_{ \{b\in \RR^d: \|b\|^2_2\leq \alpha \} }|\frac{1}{n_o}\epsilon^{\top}\Phi (\Ib-UU^{\top})b\}|}+ \EE\bracks{\sup_{\|c\|_{V}\leq \xi }|\frac{1}{n_o}\epsilon^{\top}\Phi Uc\}|} \tag{$U^{\top}c=b$}\\
  & \leq \EE\bracks{\frac{\alpha}{n_o}\|\epsilon^{\top}\Phi (\Ib-UU^{\top})\}\|_2}+\frac{\zeta}{n_o} \EE\bracks{\|\epsilon^{\top}\Phi U\}\|_{V^{-1}}} \tag{CS inequality}\\
   & \leq \frac{\alpha}{n_o}\sqrt{\EE\bracks{\|\epsilon^{\top}\Phi (\Ib-UU^{\top})\}\|^2_2}}+  \frac{\zeta}{n_o}\sqrt{\EE\bracks{\|\epsilon^{\top}\Phi U\|^2_{V^{-1}}}} \tag{Jensen's inequality}. 
\end{align*}
We analyze the second term and first term respectively. 

Regarding the second term, we have 
\begin{align*}
   \EE_{\epsilon}[\|\epsilon^{\top}\Phi U\|^2_{V^{-1}}]=\EE_{\epsilon}[\epsilon^{\top}\Phi UV^{-1}U^{\top}\Phi^{\top}\epsilon]=\tr(\Phi UV^{-1}U^{\top}\Phi^{\top}), 
\end{align*}
where $\EE_{\epsilon}[\cdot]$ is an expectation only regarding $\epsilon$. Then, by the law of total expectation, 
\begin{align*}
   \EE[\|\epsilon^{\top}\Phi U\|^2_{V^{-1}}]&=\EE[\tr(\Phi UV^{-1}U^{\top}\Phi^{\top}) ]\\
   &= \EE[\tr(\Phi^{\top}\Phi UV^{-1}U^{\top}) ]=\tr(n_o\Sigma_{\rho}UV^{-1}U^{\top})\\
   &=n_o\tr(UV U^{\top}UV^{-1}U^{\top})\\
   &= n_o\tr(UU^{\top})= n_o\tr(U^{\top}U)=n_o\mathrm{rank}(\Sigma_{\rho}). 
\end{align*}
Similarly, 
\begin{align*}
    \EE_{\epsilon}\bracks{\|\epsilon^{\top}\Phi (\Ib-UU^{\top})\}\|^2_2}=\tr(\Phi (\Ib-UU^{\top}) (\Ib-UU^{\top})\Phi^{\top} )=\tr(\Phi^{\top}\Phi (\Ib-UU^{\top}) ). 
\end{align*}
Then, by the law of total expectation, 
\begin{align*}
     \EE\bracks{\|\epsilon^{\top}\Phi (\Ib-UU^{\top})\}\|^2_2}&=\EE[\tr(\Phi^{\top}\Phi (\Ib-UU^{\top})) ]\\
     &=n_o\tr(\Sigma_{\rho} (\Ib-UU^{\top}))=n_o\tr(UV U^{\top}(\Ib-UU^{\top}))=0. 
\end{align*}

Combining all things together, 
\begin{align*}
    \Rcal_n(\xi;\Fcal)\leq \xi\sqrt{\rank[\Sigma_{\rho}]/n_o}. 
\end{align*}
Then, the critical inequality becomes 
\begin{align*}
   y \sqrt{\mathrm{rank}(\Sigma_{\rho})/n_o}\leq y^2/\beta. 
\end{align*}
Thus, the critical radius of $\Fcal$ is 
\begin{align*}
    \sqrt{\beta\mathrm{rank}(\Sigma_{\rho})/n_o}. 
\end{align*}
\end{proof}

\begin{lemma}[Variatioanl representation of kernels]\label{lem:kernel}
We denote the RKHS associated with a kernel $k(\cdot,\cdot)$ by $\Hcal_k$. Then, 
$$k(x,x)=\sup_{\{f:\|f\|_{k}\leq 1,f\in \Hcal_k \}}f^2(x).$$ 
\end{lemma}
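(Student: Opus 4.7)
The plan is to establish both directions of the equality using only the reproducing property $f(x) = \langle f, k(x,\cdot)\rangle_k$ for all $f\in\Hcal_k$ together with the identity $\|k(x,\cdot)\|_k^2 = \langle k(x,\cdot), k(x,\cdot)\rangle_k = k(x,x)$, which is a direct consequence of the reproducing property applied to $g=k(x,\cdot)$.

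For the upper bound (``$\geq$'' after taking $\sup$), I would apply Cauchy--Schwarz in $\Hcal_k$ to the reproducing identity:
\begin{align*}
|f(x)| \;=\; |\langle f, k(x,\cdot)\rangle_k| \;\leq\; \|f\|_k \, \|k(x,\cdot)\|_k \;=\; \|f\|_k\,\sqrt{k(x,x)}.
\end{align*}
Squaring and restricting to the unit ball $\{\|f\|_k\leq 1\}$ gives $f^2(x)\leq k(x,x)$, whence $\sup_{\|f\|_k\leq 1} f^2(x) \leq k(x,x)$.

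For the matching lower bound, I would exhibit a feasible $f$ that achieves the value. When $k(x,x)>0$, take $f^\star(\cdot) := k(x,\cdot)/\sqrt{k(x,x)}$. Then
\begin{align*}
\|f^\star\|_k^2 \;=\; \frac{\langle k(x,\cdot), k(x,\cdot)\rangle_k}{k(x,x)} \;=\; 1,
\qquad f^\star(x) \;=\; \frac{k(x,x)}{\sqrt{k(x,x)}} \;=\; \sqrt{k(x,x)},
\end{align*}
so $f^\star$ lies in the constraint set and attains $\{f^\star(x)\}^2 = k(x,x)$. In the degenerate case $k(x,x)=0$, the Cauchy--Schwarz bound already forces $f(x)=0$ for every $f\in\Hcal_k$, so both sides equal zero.

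There is no real obstacle here; the argument is standard and the only mild subtlety is handling the $k(x,x)=0$ case separately so that the candidate maximizer $f^\star$ is well defined. Combining the two bounds yields the claimed identity.
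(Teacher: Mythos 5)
Your proof is correct and follows essentially the same route as the paper's: Cauchy--Schwarz via the reproducing property for the upper bound, and the explicit maximizer $k(x,\cdot)/\sqrt{k(x,x)}$ for attainment. The only difference is your explicit treatment of the degenerate case $k(x,x)=0$, which the paper omits but which does not change the argument.
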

\begin{proof}
We have 
\begin{align*}
    \sup_{\{f:\|f\|_{k}\leq 1,f\in \Hcal_k \}}f^2(x)&= \sup_{\{f:\|f\|_{k}\leq 1,f\in \Hcal_k \}}\langle f,k(x,\cdot)\rangle^2_{k}\\
    &\leq \sup_{\{f:\|f\|_{k}\leq 1,f\in \Hcal_k \}}\|f\|^2_{k}k(x,x) \tag{CS inequality}\\
    &=k(x,x). 
\end{align*}
Besides, the equality is satisfied when $f(\cdot)=k(x,\cdot)/\sqrt{k(x,x)}$ noting 
\begin{align*}
    f^2(x)=k^2(x,x)/k(x,x)=k(x,x),\quad \|f(\cdot)\|_k=\|k(x,\cdot)\|_k/k(x,x)=1.  
\end{align*}
Thus, 
$$k(x,x)=\sup_{\{f:\|f\|_{k}\leq 1,f\in \Hcal_k \}}f^2(x).$$ 
\end{proof}

\begin{lemma}[Relation between $\Hcal_{k_{n_o}}$ and $\Hcal_{k}$ ] \label{lem:kernel2}
We denoting the RKHS associated with a kernel $k(\cdot,\cdot)$ by $\Hcal_k$ and the RKHS with a kernel $k_{n_o}(\cdot,\cdot)$ by $\Hcal_{k_{n_o}}$. 
Then, we have $\Hcal_k=\Hcal_{k_{n_o}}$. Besides, for $f\in \Hcal_k$, we have 
$$\|f\|^2_{k_{n_o}}=\|f\|^2_k+\zeta^{-2}\sum_{i=1}^{n_o}f(x_i)^2.$$
\end{lemma}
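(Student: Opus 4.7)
The plan is to apply the Moore--Aronszajn uniqueness theorem: I will define a candidate Hilbert space on the underlying set $\Hcal_k$ equipped with the proposed inner product, verify that it is complete, and then check that $k_{n_o}$ is its reproducing kernel. By uniqueness of the RKHS, this candidate space must coincide with $\Hcal_{k_{n_o}}$, giving both the set equality $\Hcal_k=\Hcal_{k_{n_o}}$ and the norm identity.

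\textbf{Step 1 (Candidate space).} Define on the vector space $\Hcal_k$ the inner product
\begin{align*}
\langle f,g\rangle' \;:=\; \langle f,g\rangle_k + \zeta^{-2}\sum_{i=1}^{n_o} f(x_i) g(x_i).
\end{align*}
Both terms are positive semidefinite bilinear forms, and the first is already positive definite, so $\langle\cdot,\cdot\rangle'$ is an inner product. Call the resulting space $\Hcal'$. Because point evaluations $f\mapsto f(x_i)$ are bounded on $\Hcal_k$ (with norm $\sqrt{k(x_i,x_i)}\le 1$ by Assumption \ref{assum:kernel_mercer_ape}), we get the two-sided bound $\|f\|_k^2\le \|f\|_{\Hcal'}^2 \le (1+n_o\zeta^{-2})\|f\|_k^2$. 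Hence the norms are equivalent and $\Hcal'$ is complete, so $\Hcal'$ is a Hilbert space of functions on $\Scal\times\Acal$ with the same underlying set as $\Hcal_k$.

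\textbf{Step 2 (Membership and the key identity).} For each $x$, write $\alpha(x):=(\Kb_{n_o}+\zeta^2\Ib)^{-1}\bar k_{n_o}(x)\in\mathbb{R}^{n_o}$, so that
\begin{align*}
k_{n_o}(x,\cdot) \;=\; k(x,\cdot) - \sum_{i=1}^{n_o}\alpha_i(x)\, k(x_i,\cdot)\;\in\;\Hcal_k.
\end{align*}
The crucial algebraic fact I will use is $k_{n_o}(x,x_i)=\zeta^2\alpha_i(x)$. To see this, evaluate $k_{n_o}(x,x_i)$ to obtain the $i$-th entry of $\bar k_{n_o}(x)-\Kb_{n_o}\alpha(x)$; since by definition $\bar k_{n_o}(x)=(\Kb_{n_o}+\zeta^2\Ib)\alpha(x)$, this entry equals $\zeta^2\alpha_i(x)$.

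\textbf{Step 3 (Reproducing property).} For any $f\in\Hcal_k=\Hcal'$ and any $x$, the reproducing property of $k$ in $\Hcal_k$ gives $\langle f,k_{n_o}(x,\cdot)\rangle_k = f(x) - \sum_i \alpha_i(x) f(x_i)$. Combining with Step 2,
\begin{align*}
\langle f, k_{n_o}(x,\cdot)\rangle'
&= f(x) - \sum_{i=1}^{n_o}\alpha_i(x) f(x_i) + \zeta^{-2}\sum_{i=1}^{n_o} f(x_i)\, k_{n_o}(x,x_i)\\
&= f(x) - \sum_{i=1}^{n_o}\alpha_i(x) f(x_i) + \sum_{i=1}^{n_o} f(x_i)\,\alpha_i(x) \;=\; f(x).
\end{align*}
Thus $k_{n_o}$ is the reproducing kernel of $\Hcal'$.

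\textbf{Step 4 (Conclusion).} By the Moore--Aronszajn theorem, the RKHS associated to a given reproducing kernel is unique, so $\Hcal'=\Hcal_{k_{n_o}}$ as Hilbert spaces. In particular the underlying sets agree, $\Hcal_k=\Hcal_{k_{n_o}}$, and the norm identity $\|f\|_{k_{n_o}}^2=\|f\|_k^2+\zeta^{-2}\sum_{i=1}^{n_o}f(x_i)^2$ holds. The only non-routine step is the identity $k_{n_o}(x,x_i)=\zeta^2\alpha_i(x)$ in Step 2, which is a one-line consequence of the defining equation of $\alpha(x)$; everything else is verification.
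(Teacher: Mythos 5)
Your proof is correct, and it takes a genuinely different route from the paper's. You equip the set $\Hcal_k$ with the candidate inner product $\langle f,g\rangle' = \langle f,g\rangle_k + \zeta^{-2}\sum_i f(x_i)g(x_i)$, check completeness via norm equivalence, and then verify directly that $k_{n_o}$ reproduces on this space, the only computation being the identity $k_{n_o}(x,x_i)=\zeta^2\alpha_i(x)$ with $\alpha(x)=(\Kb_{n_o}+\zeta^2\Ib)^{-1}\bar k_{n_o}(x)$, which you establish correctly; uniqueness of the RKHS for a given kernel then yields both the set equality and the norm identity. The paper instead proves the lemma through the Mercer decomposition with respect to $\rho$: it writes $k$ via the feature map $\phi$, applies the Woodbury identity to rewrite $k_{n_o}(x,x)$ as $\phi(x)^{\top}\left(\Ib+\Phi^{\top}\Phi/\zeta^2\right)^{-1}\phi(x)$, and reads off the norm relation through an eigendecomposition of the (infinite-dimensional) operator $\Lambda^{-1}+\zeta^{-2}\sum_i\psi(x_i)\psi(x_i)^{\top}$. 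Your argument is more elementary and self-contained: it needs no Mercer expansion, no infinite-dimensional matrix-inversion lemma, and it isolates the single finite-rank computation that makes the lemma true; the mild use of $k(x,x)\le 1$ in your norm-equivalence bound is harmless (the lemma is invoked under that assumption, and boundedness of point evaluations on $\Hcal_k$ alone would suffice with constant $1+\zeta^{-2}\sum_i k(x_i,x_i)$). What the paper's route buys is the explicit feature-space formula $k_{n_o}(x,x)=\phi(x)^{\top}\Sigma_{n_o}^{-1}\phi(x)$, which is reused later in the GP analysis for the distribution-change and effective-dimension arguments, so the two proofs are complementary rather than interchangeable in the larger development.
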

This is stated in \cite[Appendix B]{Srinivas2010} without the proof. For completeness, we provide the proof. 

\begin{proof}
We use Mercer's theorem \citep[Theorem 12.20]{WainwrightMartinJ2019HS:A}. Then, any element in the RKHS associated with the kernel $k(x,x)$ is represented by 
\begin{align*}
    f(x)=\sum_{i=1}^{\infty} f_i\psi_i(x). 
\end{align*}
where $\{\psi_i\}_{i=1}^{\infty}$ is an orthonormal basis for $L^2(\rho)$: $\EE_{x\sim \rho}[\psi_i(x)\psi_j(x)]=I(i=j)$. 
Here, we have 
\begin{align*}
    k(x,x)=\psi^{\top}(x) \Lambda \psi(x)=\phi^{\top}(x)\phi(x),\quad \|f\|_k=\tilde f^{\top}\Lambda^{-1}\tilde f,
\end{align*}
where $\phi_i(x)=\sqrt{\mu_i}\psi_i(x)$ and $\tilde f=\{f_i\}_{i=1}^{\infty}\in \RR^{\infty}$. 

Then, by letting $\Phi$ be a $n\times d$ matrix s.t. the $i$-th row is $\phi^{\top}(s_i,a_i)$, 
\begin{align*}
k_{n_o}(x,x) &=\phi^{\top}(x)\phi(x)-\phi^{\top}(x)\Phi^{\top}(\Phi\Phi^{\top}+\zeta^2 \Ib)^{-1}\Phi\phi(x)\\
     &= \phi^{\top}(x)\{\Ib-\Phi^{\top}(\Phi\Phi^{\top}+\zeta^2 \Ib)^{-1} \Phi \} \phi(x)\\
     &= \phi^{\top}(x)\{\Ib+\Phi^{\top}\Phi/\zeta^2\}^{-1}\phi(x)  \tag{Woodbury matrix identity}\\
    &=\phi^{\top}(x)(\Ib+\sum_{i=1}^{n_o} \phi(x_i)\phi(x_i)^{\top}/\zeta^2)^{-1}\phi (x). 
\end{align*} 
Here, let  $UVU^{\top}$ be the eigenvalue decomposition of $\{\Lambda^{-1} +\sum_{i=1}^{n_o} \psi(x_i)\psi(x_i)^{\top}/\zeta^2\}^{-1}=UVU^{\top}$. Then, 
\begin{align*}    
 k_{n_o}(x,x)   &=\psi^{\top}(x)(\Lambda^{-1} +\sum_{i=1}^{n_o} \psi(x_i)\psi(x_i)^{\top}/\zeta^2)^{-1}\psi(x)\\
    &=\psi^{\top}(x) UVU^{\top}\phi(x)  \\
    &=\psi'^{\top}(x)V\psi'(x) \tag{$U^{\top}\psi=\psi'$ }. 
\end{align*}

Then, any element $f(\cdot)$ in the RKHS associated with the kernel $k_{n_o}(x,x)$ is represented as 
$$f(\cdot)=\tilde g^{\top}\psi'(\cdot),\,\,\tilde g\in \mathbb{R}^{\infty},$$
and the associated norm is $\|f\|_{k_{n_o}}=\tilde g^{\top}V^{-1}\tilde g$ since $\psi'(\cdot)$ is still an orthnormal basis for $L^2(\rho)$, i.e., $\EE_{x\sim \rho}[\phi'_i(x)\phi_j(x)]=I(i=j)$.  This immediately implies $\Hcal_k=\Hcal_{k_{n_o}}$. 

Finally, we check the relation of the norm: 
\begin{align*}
    \|f\|^2_{k_{n_o}}&= \|\sum_{i=1}^{n_o} f_i\psi_i\|_{k_{n_o}}=\|\tilde f^{\top}\psi \|_{k_{n_o}} \tag{$\tilde f=\{f_1,f_2\cdots\}^{\top}$}\\
    &=\|\{ U^{\top}\tilde f\}^{\top}U^{\top}\psi \|_{k_{n_o}}\\
    &= \|\{ U^{\top}\tilde f\}^{\top}\psi' \|_{k_{n_o}}  \\
    &=\{U^{\top} \tilde  f\}^{\top}V^{-1}U\tilde f  \\
    &=\tilde f^{\top}(\Lambda^{-1} +\sum_{i=1}^{n_o} \phi(x_i)\phi(x_i)^{\top}/\zeta^2)\tilde f\\
    &=\|f\|_k+1/\zeta^2 \sum_{i=1}^{n_o} \{\tilde f^{\top}\phi(x_i)\}^2=\|f\|_k+\zeta^{-2}\sum_{i=1}^{n_o} f^2(x_i). 
\end{align*}
\end{proof}

\begin{lemma}\label{lem:kernel_lemma}
Let $\{\hat \mu_i\}_{i=1}^{n_o}$ be the eigenvalues of $\Kb_{n_o}$. Then, 
\begin{align*}
\sum_{i=1}^{n_o}  \frac{\hat \mu_i/\zeta^2}{\hat \mu_i/\zeta^2+1}=\frac{1}{\zeta^2} \sum_{i=1}^{n_o}k_{n_o}(x_i,x_i). 
\end{align*}
\end{lemma}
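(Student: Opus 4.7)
The plan is to reduce both sides to traces involving the Gram matrix $\mathbf{K}_{n_o}$ and then diagonalize. The key observation is that when we evaluate the posterior kernel at a training point $x_i$, the vector $\bar{k}_{n_o}(x_i) = [k(x_1,x_i),\dots,k(x_{n_o},x_i)]^{\top}$ is exactly the $i$-th column of $\mathbf{K}_{n_o}$, i.e.\ $\bar{k}_{n_o}(x_i) = \mathbf{K}_{n_o} e_i$, where $e_i$ is the $i$-th standard basis vector. This lets every quantity on the right-hand side be written purely in terms of $\mathbf{K}_{n_o}$.

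First I would substitute the definition of $k_{n_o}(x_i,x_i)$ and sum:
\begin{align*}
\sum_{i=1}^{n_o} k_{n_o}(x_i,x_i)
&= \sum_{i=1}^{n_o}\bigl[k(x_i,x_i) - \bar{k}_{n_o}(x_i)^{\top}(\mathbf{K}_{n_o}+\zeta^2 \mathbf{I})^{-1}\bar{k}_{n_o}(x_i)\bigr] \\
&= \tr(\mathbf{K}_{n_o}) - \tr\!\bigl(\mathbf{K}_{n_o}(\mathbf{K}_{n_o}+\zeta^2\mathbf{I})^{-1}\mathbf{K}_{n_o}\bigr),
\end{align*}
where the first trace comes from $\sum_i k(x_i,x_i) = \sum_i e_i^{\top}\mathbf{K}_{n_o} e_i$ and the second from $\sum_i e_i^{\top}\mathbf{K}_{n_o}(\mathbf{K}_{n_o}+\zeta^2\mathbf{I})^{-1}\mathbf{K}_{n_o} e_i$.

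Next I would diagonalize $\mathbf{K}_{n_o} = U\hat M U^{\top}$ with $\hat M = \mathrm{diag}(\hat\mu_1,\dots,\hat\mu_{n_o})$. Since $(\mathbf{K}_{n_o}+\zeta^2\mathbf{I})^{-1} = U(\hat M+\zeta^2\mathbf{I})^{-1}U^{\top}$, the two traces become $\sum_i \hat\mu_i$ and $\sum_i \hat\mu_i^2/(\hat\mu_i+\zeta^2)$ respectively. Combining,
\begin{equation*}
\sum_{i=1}^{n_o} k_{n_o}(x_i,x_i) = \sum_{i=1}^{n_o}\left[\hat\mu_i - \frac{\hat\mu_i^{2}}{\hat\mu_i+\zeta^2}\right] = \sum_{i=1}^{n_o}\frac{\hat\mu_i\zeta^2}{\hat\mu_i+\zeta^2}.
\end{equation*}
Dividing both sides by $\zeta^2$ and rewriting $\hat\mu_i/(\hat\mu_i+\zeta^2) = (\hat\mu_i/\zeta^2)/(\hat\mu_i/\zeta^2+1)$ yields the claim.

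There is no real obstacle here; the only thing to be careful about is the identification $\bar{k}_{n_o}(x_i) = \mathbf{K}_{n_o} e_i$ (so that the quadratic form becomes a diagonal entry of $\mathbf{K}_{n_o}(\mathbf{K}_{n_o}+\zeta^2\mathbf{I})^{-1}\mathbf{K}_{n_o}$) and the simultaneous diagonalization of $\mathbf{K}_{n_o}$ and $(\mathbf{K}_{n_o}+\zeta^2\mathbf{I})^{-1}$, both of which are routine.
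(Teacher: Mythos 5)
Your proof is correct and follows essentially the same route as the paper: both identify $\bar{k}_{n_o}(x_i)$ as the $i$-th column of $\Kb_{n_o}$, reduce the sum to $\tr(\Kb_{n_o}) - \tr\bigl(\Kb_{n_o}(\Kb_{n_o}+\zeta^2\Ib)^{-1}\Kb_{n_o}\bigr)$, and simplify; the only cosmetic difference is that you diagonalize before the algebraic simplification while the paper simplifies at the matrix level first.
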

\begin{proof}

\begin{align*}
     \sum_{i=1}^{n_o}k_{n_o}(x_i,x_i)&= \sum_{i=1}^{n_o}k(x_i,x_i)-\bar k^{\top}_{n_o}(x_i)\{\Kb_{n_o}+\zeta^2 \Ib\}^{-1}\bar k_{n_o}(x_i)\\ 
     &= \tr\prns{\sum_{i=1}^{n_o}k(x_i,x_i)-\bar k^{\top}_{n_o}(x_i)\{\Kb_{n_o}+\zeta^2 \Ib\}^{-1}\bar k_{n_o}(x_i)}\\
         &= \tr\prns{\Kb_{n_o}}-\tr\prns{\sum_{i=1}^{n_o}\bar k_{n_o}(x_i)\bar k^{\top}_{n_o}(x_i)\{\Kb_{n_o}+\zeta^2 \Ib\}^{-1}}\\
    &=\tr\prns{\Kb_{n_o}-\Kb^2_{n_o}\{\Kb_{n_o}+\zeta^2 \Ib\}^{-1} }\\
       &=\tr\prns{\{\Kb^2_{n_o}+\zeta^2 \Kb_{n_o}-\Kb^2_{n_o}\}\{\Kb_{n_o}+\zeta^2 \Ib\}^{-1} }\\ 
             &=\tr\prns{\zeta^2 \Kb_{n_o}\{\Kb_{n_o}+\zeta^2 \Ib\}^{-1} }=\sum_{i=1}^{n_o}  \frac{\hat \mu_i}{\hat \mu_i/\zeta^2+1}. 
\end{align*}

\end{proof}

\begin{lemma}[Calculation of localized Rademacher complexity of RKHS balls: Corollary 14.5 in \citep{WainwrightMartinJ2019HS:A}]\label{lem:radema_rkhs}
Let $\Fcal=\{f\in \Hcal_k:\|f\|_{k}\leq 1\}$ be the unit ball of an RKHS with eigenvalues $\{\mu_j\}_{j=1}^{\infty}$. Then, the localized population Rademacher complexity is upper-bounded by 
\begin{align*}
    \Rcal_n(\delta;\Fcal)\leq \sqrt{\frac{2}{n}}\sqrt{\sum_{j=1}^{\infty}\min(\mu_j,\delta^2) }. 
\end{align*}

\end{lemma}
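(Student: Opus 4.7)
The plan is to reduce the supremum over the localized RKHS ball to a supremum over Fourier-coefficient sequences via Mercer's decomposition, then apply Cauchy--Schwarz with a weighting tailored to the localization constraint. By Mercer's theorem (valid under \pref{assm:kernel_mercer}), any $f\in\Hcal_k$ admits $f=\sum_j f_j\psi_j$ where $\{\psi_j\}$ are orthonormal in $L^2(\rho)$; consequently $\|f\|_k^2=\sum_j f_j^2/\mu_j$ and $\mathbb{E}_{x\sim\rho}[f^2(x)]=\sum_j f_j^2$. Imposing the RKHS constraint $\|f\|_k\le 1$ together with the localization $\mathbb{E}[f^2]\le\delta^2$ yields $\sum_j f_j^2/c_j\le 2$ for $c_j:=\min(\mu_j,\delta^2)$, since $1/c_j=\max(1/\mu_j,1/\delta^2)\le 1/\mu_j+1/\delta^2$.

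Next I would apply Cauchy--Schwarz to the empirical process. Splitting
\begin{align*}
\frac{1}{n}\sum_{i=1}^{n}\epsilon_i f(x_i)=\sum_j \frac{f_j}{\sqrt{c_j}}\cdot\Bigl(\sqrt{c_j}\cdot\frac{1}{n}\sum_{i=1}^{n}\epsilon_i\psi_j(x_i)\Bigr),
\end{align*}
Cauchy--Schwarz gives
\begin{align*}
\Bigl|\frac{1}{n}\sum_{i=1}^{n}\epsilon_i f(x_i)\Bigr|\le\sqrt{\sum_j f_j^2/c_j}\cdot\sqrt{\sum_j c_j\Bigl(\frac{1}{n}\sum_{i=1}^{n}\epsilon_i\psi_j(x_i)\Bigr)^2}\le\sqrt{2}\cdot\sqrt{\sum_j c_j\Bigl(\frac{1}{n}\sum_{i=1}^{n}\epsilon_i\psi_j(x_i)\Bigr)^2},
\end{align*}
where the right-hand side no longer depends on $f$, so the supremum over $f$ passes through trivially and Jensen's inequality lets me pull $\mathbb{E}$ inside the square root.

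Finally, using independence of the Rademacher variables $\{\epsilon_i\}$ together with independence between $\{\epsilon_i\}$ and $\{x_i\}$, and the $L^2(\rho)$-orthonormality $\mathbb{E}_{x\sim\rho}[\psi_j(x)^2]=1$, I obtain $\mathbb{E}[(\tfrac{1}{n}\sum_i\epsilon_i\psi_j(x_i))^2]=\tfrac{1}{n^2}\sum_i\mathbb{E}[\psi_j(x_i)^2]=\tfrac{1}{n}$. Summing over $j$ contributes $\tfrac{1}{n}\sum_j c_j=\tfrac{1}{n}\sum_j\min(\mu_j,\delta^2)$, and combining with the leading $\sqrt{2}$ gives the claimed bound $\sqrt{2/n}\cdot\sqrt{\sum_j\min(\mu_j,\delta^2)}$.

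The main technical subtlety is justifying the termwise Mercer-series manipulations in the infinite-dimensional regime; under \pref{assm:kernel_mercer} this is standard, with indices carrying $\mu_j=0$ handled by restricting the expansion to the range of the integral operator (there $f_j$ is automatically zero for $f\in\Hcal_k$). All remaining steps are elementary applications of Cauchy--Schwarz, Jensen, and orthonormality, so I do not expect any serious obstacle beyond this bookkeeping.
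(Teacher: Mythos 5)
Your proof is correct; the paper does not prove this lemma at all but simply cites Corollary 14.5 of \citep{WainwrightMartinJ2019HS:A}, and your Mercer-expansion argument (combine $\|f\|_k\le 1$ and $\EE_\rho[f^2]\le\delta^2$ into $\sum_j f_j^2/\min(\mu_j,\delta^2)\le 2$, then Cauchy--Schwarz, Jensen, and $L^2(\rho)$-orthonormality) is essentially the textbook proof of that result. The only discrepancy is notational: the paper's displayed definition of $\Rcal_n(\delta;\Fcal)$ writes the localization as $\EE_{x\sim\rho}[f^2(x)]\le\delta$, which is a typo for $\delta^2$; your convention is the one under which the stated bound $\sqrt{2/n}\,\bigl(\sum_j\min(\mu_j,\delta^2)\bigr)^{1/2}$ is the right conclusion.
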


\begin{lemma}[Upper-bound of expectation of information gains: finite-dimensional models ]\label{thm:seeger2008}
$$\EE[\bar \Ical_{n_o}] \leq \rank(\Sigma_{\rho})\{\log(1+n_o/\lambda)+1\}.$$ 
\end{lemma}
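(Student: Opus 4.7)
The plan is to combine concavity of $\log\det$ with the rank structure of $\Sigma_\rho$. First, I would rewrite
\[
\bar{\mathcal{I}}_{n_o} \;=\; \log\det(\Sigma_{n_o}/\lambda \Ib) \;=\; \log\det\!\Bigl(\Ib + \lambda^{-1}\sum_{i=1}^{n_o} \phi(s_i,a_i)\phi(s_i,a_i)^{\top}\Bigr).
\]
Since $A \mapsto \log\det(A)$ is concave on the positive definite cone, Jensen's inequality (applied with expectation over the i.i.d.\ draws $(s_i,a_i)\sim\rho$) yields
\[
\EE[\bar{\mathcal{I}}_{n_o}] \;\leq\; \log\det\!\Bigl(\Ib + \tfrac{1}{\lambda}\,\EE\bigl[\textstyle\sum_{i=1}^{n_o}\phi(s_i,a_i)\phi(s_i,a_i)^{\top}\bigr]\Bigr) \;=\; \log\det\!\bigl(\Ib + n_o\Sigma_\rho/\lambda\bigr).
\]

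Next, I would exploit the rank condition. Let $r := \rank(\Sigma_\rho)$ and denote the nonzero eigenvalues of $\Sigma_\rho$ by $\mu_1 \ge \cdots \ge \mu_r > 0$ (the remaining eigenvalues vanish and contribute nothing to the determinant). Then
\[
\log\det\!\bigl(\Ib + n_o\Sigma_\rho/\lambda\bigr) \;=\; \sum_{i=1}^{r} \log\!\bigl(1 + n_o\mu_i/\lambda\bigr).
\]
Under the standing assumption $\|\phi(s,a)\|_2 \le 1$, we have $\mu_i \le \tr(\Sigma_\rho) \le 1$ for every $i$, so each summand is bounded by $\log(1+n_o/\lambda)$. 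Summing over $i=1,\ldots,r$ gives
\[
\EE[\bar{\mathcal{I}}_{n_o}] \;\leq\; r\,\log(1+n_o/\lambda) \;\leq\; \rank(\Sigma_\rho)\bigl\{\log(1+n_o/\lambda)+1\bigr\},
\]
which is the claimed bound (with slack absorbing the ``$+1$'' if a slightly different eigenvalue normalization is preferred).

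There is essentially no hard step here: the only subtlety is confirming the concavity of $\log\det$ on $\mathbb{S}_{++}$ and ensuring Jensen applies, both of which are standard (see, e.g., Boyd \& Vandenberghe). The rest is just diagonalizing $\Sigma_\rho$ and using $\mu_i \le 1$. Compared to the finite-sample bound in \pref{thm:info_gain_para}, this expectation version is strictly easier because we avoid any uniform/localization arguments and instead appeal to Jensen. I would place this proof immediately after \pref{lem:radema_rkhs} in the auxiliary lemmas section.
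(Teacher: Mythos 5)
Your proposal is correct and follows essentially the same route as the paper: Jensen's inequality via concavity of $\log\det$ to pass to $\log\det(\Ib + n_o\Sigma_\rho/\lambda)$, then counting only the $\rank(\Sigma_\rho)$ nonzero eigenvalues. The only cosmetic difference is the last step, where you bound each $\log(1+n_o\mu_i/\lambda)$ directly using $\mu_i\le\tr(\Sigma_\rho)\le 1$ (giving a bound without the $+1$ slack), while the paper reuses the $\log(1+x)\cdot\frac{x+1}{x+1}$ splitting argument from its finite-sample information-gain theorem.
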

\begin{proof}
\begin{align*}
    \EE[\bar \Ical_{n_o}] &=\EE[\log(\det(\Sigma_{n_o}/\lambda))]\\ 
  & \leq \log \det(\EE[\Sigma_{n_o}/\lambda  ])=\log\det(\Ib+n_o/\lambda \Sigma_{\rho}) \tag{Jensen's inequality}\\
  &\leq \rank(\Sigma_{\rho})\{\log(1+n_o/\lambda)+1\}. 
\end{align*}
The final line is proved as in the proof of \cref{thm:info_gain_para}. 
\end{proof}

\begin{lemma}[Upper-bound of expectation of information gains: RKHS]\label{thm:seeger2008_kernel}
$$\EE[\Ical_{n_o}] \leq  2d^{*}\{\log(1+n_o/\zeta^2)+1\}.$$ 
\end{lemma}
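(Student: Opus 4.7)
The plan is to combine Jensen's inequality (which is how the finite-dimensional variant \pref{thm:seeger2008} was proved) with two elementary ingredients already used in the first half of \pref{thm:info_gain_nonpara}: a scalar manipulation of $\log(1+a)$, and a split-at-$d^*$ argument exploiting the definition of the effective dimension.

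First I would rewrite the information gain using the Mercer feature map $\phi(x)=[\sqrt{\mu_1}\psi_1(x),\sqrt{\mu_2}\psi_2(x),\ldots]^{\top}$. Sylvester's determinant identity (applied in the limit of Mercer truncations to stay rigorous in infinite dimension) gives
\[
\Ical_{n_o}=\log\det\!\left(\Ib+\zeta^{-2}\Kb_{n_o}\right)=\log\det\!\left(\Ib+\zeta^{-2}\sum_{i=1}^{n_o}\phi(x_i)\phi(x_i)^{\top}\right).
\]
Since $A\mapsto\log\det(A)$ is concave on positive operators, Jensen's inequality yields
\[
\EE[\Ical_{n_o}]\leq\log\det\!\left(\Ib+n_o\zeta^{-2}\Sigma_\rho\right)=\sum_{j=1}^{\infty}\log\!\left(1+n_o\mu_j/\zeta^{2}\right),
\]
where the last equality uses the fact that $\Sigma_\rho$ is diagonal with entries $\mu_j$ in the Mercer basis.

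Next, by \cref{assum:kernel_mercer_ape}, $\sum_j\mu_j=\EE_{x\sim\rho}[k(x,x)]\leq 1$, so each $\mu_j\leq 1$ and $a_j:=n_o\mu_j/\zeta^{2}\leq n_o/\zeta^{2}$. The key scalar inequality is: for $0\leq a\leq M$,
\[
\log(1+a)=\log(1+a)\cdot\tfrac{a}{1+a}+\log(1+a)\cdot\tfrac{1}{1+a}\leq\left(\log(1+M)+1\right)\cdot\tfrac{a}{1+a},
\]
using $\log(1+a)\leq\log(1+M)$ for the first term and $\log(1+a)\leq a$ for the second. Applying this termwise with $M=n_o/\zeta^{2}$,
\[
\EE[\Ical_{n_o}]\leq\left(\log(1+n_o/\zeta^{2})+1\right)\sum_{j=1}^{\infty}\tfrac{a_j}{1+a_j}.
\]

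Finally I would bound the residual sum by $2d^{*}$ via the same split used in the first half of \pref{thm:info_gain_nonpara}: the first $d^{*}$ terms contribute at most $d^{*}$ since $a_j/(1+a_j)\leq 1$, while for $j>d^{*}$ we use $a_j/(1+a_j)\leq a_j$ together with the defining inequality $B(d^{*}+1)n_o/\zeta^{2}\leq d^{*}$ to get $\sum_{j>d^{*}}a_j=n_o B(d^{*}+1)/\zeta^{2}\leq d^{*}$. Summing gives $\sum_j a_j/(1+a_j)\leq 2d^{*}$ and yields the claim. The only subtle point is the Jensen step in infinite dimension; this is routine via Mercer truncation, since the truncated covariance operators converge monotonically and $\log\det$ is continuous on finite truncations, so I would justify it briefly rather than elaborate.
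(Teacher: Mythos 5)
Your proof is correct and follows essentially the same route as the paper: the paper first invokes Lemma~1 of \cite{seeger2008} to get $\EE[\Ical_{n_o}]\leq \sum_{s}\log(1+\zeta^{-2}\mu_s n_o)$ and then bounds that sum by $2d^{*}\{\log(1+n_o/\zeta^2)+1\}$ exactly as in the proof of \cref{thm:info_gain_nonpara}, which is the same termwise inequality and split-at-$d^{*}$ argument you use. The only difference is cosmetic: instead of citing Seeger's lemma you rederive its conclusion via concavity of $\log\det$ and Jensen's inequality (mirroring the paper's own finite-dimensional proof of \cref{thm:seeger2008}), with a truncation remark to handle the infinite-dimensional Jensen step.
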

\begin{proof}
\begin{align*}
    \EE[\Ical_{n_o}] &=\EE[\log(\det(I+\zeta^{-2} \Kb_{n_o}))]\\ 
  & \leq  \sum_{s=1}^{\infty}\log(1+\zeta^{-2} \mu_s n_o) \tag{Refer to \cite[Lemma 1]{seeger2008}  }\\
  &\leq  \{\log(1+n_o/\zeta^2)+1\}2d^{*}.
\end{align*}
From the second line to the third line, we follow in the proof of \cref{thm:info_gain_nonpara}. 
\end{proof}

\section{Implementation Details}
Here we detail all environment details and hyperparameters used for the experiments in the main text.

\subsection{Environment Details}
All environments have a maximum horizon length of 500 timesteps. We achieve this by reducing the data collection frequency of the base 1000 horizon environments. We also remove all contact information from the observation and the reward. Finally, to be able to compute the ground truth reward from the state, we add the velocity of the center of mass into the state.

\begin{table}[h]
    \centering
    \caption{Observation and action space dimensions for each of the environments}
    \begin{tabular}{c|cc}
    \toprule
    Environment & Observation Space Dimension & Action Space Dimension\\
    \midrule
    Hopper      & 12 & 3 \\
    Walker2d    & 18 & 6 \\
    HalfCheetah & 18 & 6 \\
    Ant         & 29 & 8 \\
    Humanoid    & 47 & 17 \\
    \bottomrule
    \end{tabular}
\end{table}
\begin{table}[h]
    \centering
    \caption{Ground truth environment reward function used to train the expert and behavior policies as well as evaluate the performance in the learning curves. At time $t$, $\dot{x}_t$ is the velocity of the center of mass in the $x$-axis, $\mathbf{a_t}$ is the action vector, and $z_t$ is the position of the center of mass in the $z$-axis.}
    \begin{tabular}{c|c}
    \toprule
        Environment & Ground Truth Reward Function \\
    \midrule
        Hopper      & $\dot{x}_t - 0.1\lVert\mathbf{a_t}\rVert_2^2 - 3.0\times(z_t - 1.3)^2$ \\
        Walker2d    & $\dot{x}_t - 0.1\lVert\mathbf{a_t}\rVert_2^2 - 3.0\times(z_t - 0.57)^2$\\
        HalfCheetah & $\dot{x}_t - 0.1\lVert\mathbf{a_t}\rVert_2^2$ \\
        Ant         & $\dot{x}_t - 0.1\lVert\mathbf{a_t}\rVert_2^2 - 3.0\times(z_t - 1.3)^2$\\
        Humanoid    & $1.25\times\dot{x}_t - 0.1\lVert\mathbf{a_t}\rVert_2^2 + 5\times bool(1.0\le z_t\le 2.0)$\\
    \bottomrule
    \end{tabular}
\end{table}

\subsection{Dynamics Ensemble Architecture and Model Learning}
For all of our experiments we use an ensemble of four dynamics models with each model parameterized by a feed-forward neural network with two hidden layers containing 1024 units. The learned model does not predict next state, but instead predicts the normalized difference between the next state and the current state, $s_{t+1}- s_t$. The activation function used at each layer is ReLU. We train all of our ensembles using Adam with learning rate $5\times10^{-5}$ and otherwise default hyperparameters. We train each dynamics model for 300 epochs on just the offline dataset for all of our experiments. Please see Table \ref{tab:dynamics_hparam} for all values.
\begin{table}[h]
    \centering
    \caption{All hyperparameters used for dynamics model learning}
    \begin{tabular}{c|c}
    \toprule
    Hyperparameter & Value \\
    \midrule
    Hidden Layers & $(1024, 1024)$\\
    Activation    & ReLU \\
    Optimizer     & Adam \\
    Learning Rate & $5\times10^{-5}$ \\
    Batch Size    & 256 \\
    Epochs        & 300 \\
    \bottomrule
    \end{tabular}
    \label{tab:dynamics_hparam}
\end{table}

\subsection{Policy Architecture and TRPO Details}
We use the open source NPG/TRPO implementation, MJRL \citep{Rajeswaran-RSS-18}. The policy network and the value network are feedforward neural networks with two hidden layers containing 32 and 128 hidden units respectively. Both networks use a \texttt{tanh} activation function with the policy network outputting a Gaussian distribution $\mathcal{N}(\mu(s), \sigma^2)$ where $\sigma$ is a trainable parameter. We use Generalized Advantage Estimator (GAE) to estimate the advantages. Please see Table \ref{tab:trpo} for all values.
\begin{table}[h]
    \centering
    \caption{TRPO/NPG hyperparameter values used in experiments.}
    \begin{tabular}{c|c}
    \toprule
    Hyperparameter & Value \\
    \midrule
    Policy Hidden Layers     & $(32, 32)$ \\
    Critic Hidden Layers     & $(128, 128)$ \\
    Batch Size               & 40000 \\
    Max KL Divergence        & 0.01 \\
    Discount $\gamma$        & $0.995$ \\
    CG Iterations            & 25 \\
    CG Damping               & $1\times 10^{-5}$ \\
    GAE $\lambda$            & $0.97$ \\
    Critic Update Epochs     & 2 \\
    Critic Optimizer         & Adam \\
    Critic Learning Rate     & $1\times 10^{-4}$ \\
    Critic L2 Regularization & $1\times 10^{-4}$ \\
    Policy Init Log Std.     & -0.25 \\
    Policy Min Log Std.      & -2.0 \\
    BC Regularization $\lambda_{\text{BC}}$ & 0.1 \\
    \bottomrule
    \end{tabular}
    \label{tab:trpo}
\end{table}

\subsection{Hyperparameter Selection}
For our core results, we tuned our hyperparameters on a randomly selected seed for \hopper~and then applied it for all environments. For TRPO, we tuned the conjugate gradient iterations from values 10, 25, and 50; and the conjugate gradient damping coefficients from values 1e-2, 1e-3, 1e-4, and 1e-5. All other hyperparameters were default ones in the MJRL repository \citep{Rajeswaran-RSS-18}. For the BC regularization coefficient we tested values of 0.1, 1e-2, 1e-3, 1e-4, and 1e-5. For the dynamics model architecture we tested 3 different hidden layer sizes: 256, 512, and 1024. Beyond this we used the exact same Adam optimizer and training procedure as \citep{Kidambi2020}.

\subsection{Discriminator Update and Cost Function Details}
We parameterize our discriminator as a linear function $f(s,a) = w^{\top}\phi(s,a)$, where $\phi(s,a)$ are Random Fourier Features \citep{rff} and $w$ is the vector of parameters for the discriminator. Recall our objective,
\begin{equation*}
    \min\limits_{\pi\in\Pi}\max\limits_{f\in\Fcal} \left[\EE_{(s,a)\sim d^{\pi}_{\hat{P}}}(f(s,a) + b(s,a)) - \EE_{(s,a)\sim\Dcal_{e}}[f(s,a)]\right] + \lambda_{\text{BC}}\cdot\EE_{(s,a)\sim\Dcal_{e}}[\ell(a,s,\pi)].
\end{equation*}
Now given a policy $\pi$, we can compute a closed form update for the discriminator parameters $w$ like so
\begin{align*}
    \max\limits_{w:\lVert w\rVert_2^2\le\eta} L(w; \pi,\widehat{P}, b, \Dcal_{e}) &:= \EE_{(s,a)\sim d^{\pi}_{\hat{P}}}(f(s,a) + b(s,a)) - \EE_{(s,a)\sim\Dcal_{e}}[f(s,a)] \\
    \equiv \max\limits_{w} L_{\eta}(w; \pi,\widehat{P}, b, \Dcal_{e}) &= \EE_{(s,a)\sim d^{\pi}_{\hat{P}}}(f(s,a) + b(s,a)) - \EE_{(s,a)\sim\Dcal_{e}}[f(s,a)] - \frac{1}{2}\cdot(\lVert w\rVert_2^2 - \eta)\\
    \Rightarrow \partial_{w}L_{\eta}(w; \pi,\widehat{P}, b, \Dcal_{e}) &= \EE_{(s,a)\sim d^{\pi}_{\hat{P}}}[\phi(s,a)] - \EE_{(s,a)\sim\Dcal_{e}}[\phi(s,a)] - w \\
\end{align*}
where $\partial_{w}L_{\eta}(w; \pi,\widehat{P}, b, \Dcal_{e})$ denotes the partial derivative of $L_{\eta}(\cdot)$ wrt to $w$. Setting the above expression to 0 and solving for $w$ gives us the closed form solution. Note that even with the BC regularization constraint added into the objective, the solution will still hold.

Now for a given updated $w_t$, we have our cost function $c(s,a) = w_t^{\top}\phi(s,a) + b(s,a)$ where our penalty, $b(s,a)$, is the maximum discrepancy of our model ensemble predictions. To balance our penalty term with our cost term, we introduce a parameter $\lambda_{\text{penalty}}$ to get the cost 
\begin{equation*}
    c(s,a) = (1-\lambda_{\text{penalty}})\cdot w_t^{\top}\phi(s,a) + \lambda_{\text{penalty}}\cdot b(s,a).
\end{equation*}
In our experiments, $\lambda_{\text{penalty}}$ was the only parameter we varied across environments.
\begin{table}[h]
    \centering
    \caption{$\lambda_{\text{penalty}}$ values used for each environment.}
    \begin{tabular}{c|c}
    \toprule
    Environment & $\lambda_{\text{penalty}}$ \\
    \midrule
    Hopper      &  $2.5\times 10^{-4}$\\
    Walker2d    &  $1.0\times 10^{-7}$\\
    HalfCheetah &  $1.0\times 10^{-4}$\\
    Ant         &  $1.0\times 10^{-4}$\\
    Humanoid    &  $5.0\times 10^{-4}$\\
    \bottomrule
    \end{tabular}
    \label{tab:cost_param}
\end{table}

\section{Additional Experiments}
Recall that in our main experiments, we create an extremely small expert dataset containing expert $(s,a)$ pairs by randomly sampling state-action pairs from an expert dataset consisting of state-action pairs from many expert trajectories, and we did that for the purpose of creating an expert dataset where BC almost fails completely.  One may wonder what \alg~would do if we feed \alg~ a complete single expert trajectory. We conduct such experiments in this section.
Figure \ref{fig:traj_level} shows the performance of \alg\, with one expert trajectory using the \emph{same hyperparameters} as before. All plots are shown averaged across five seeds. Note that \alg\, is still performs well with one expert trajectory---matching or nearly matching the expert performance across all 5 continuous control tasks.
\begin{figure}[h]
    \centering
    \includegraphics[width=\textwidth]{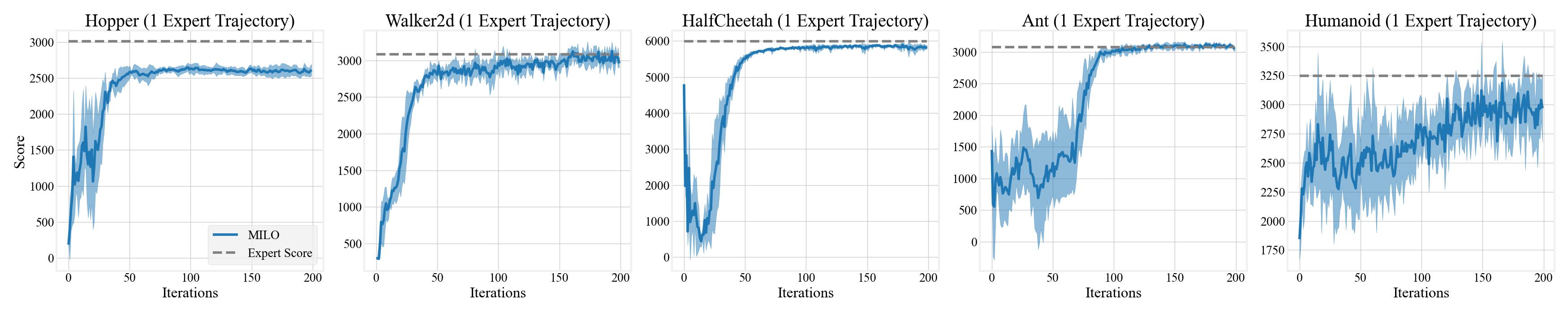}
    \caption{Performance of \alg\, with one expert trajectory. Note \alg\, performance just as well with trajectory inputs as with state-action pair sample inputs.}
    \label{fig:traj_level}
\end{figure}

\end{document}